\theoremstyle{plain}
\newtheorem{theorem}{Theorem}[section]
\newtheorem{lemma}[theorem]{Lemma}
\theoremstyle{definition}
\theoremstyle{remark}
\icmltitlerunning{Offline Reinforcement Learning with Closed-Form Policy Improvement Operators}
\newtheorem{prop}{Proposition}[section]
\DeclareMathOperator*{\argmax}{arg\,max}
\newenvironment{skproof}{%
  \proof}{\endproof}
\newcommand{\red}[1]{{\color{black} #1}}
\providecommand{\customgenericname}{}
\newcommand{\newcustomtheorem}[2]{%
  \newenvironment{#1}[1]
  {%
   \renewcommand\customgenericname{#2}%
   \renewcommand\theinnercustomgeneric{##1}%
   \innercustomgeneric
  }
  {\endinnercustomgeneric}
}
\begin{document}

\twocolumn[
\icmltitle{Offline Reinforcement Learning with Closed-Form Policy Improvement Operators}



\icmlsetsymbol{equal}{*}

\begin{icmlauthorlist}
\icmlauthor{Jiachen Li}{equal,ucsb}
\icmlauthor{Edwin Zhang}{equal,ucsb,harvard}
\icmlauthor{Ming Yin}{ucsb}
\icmlauthor{Qinxun Bai}{horizon}
\icmlauthor{Yu-Xiang Wang}{ucsb}
\icmlauthor{William Yang Wang}{ucsb}
\end{icmlauthorlist}

\icmlaffiliation{ucsb}{Department of Computer Science, University of California, Santa Barbara, Santa Barbara, CA 93106 USA, USA}
\icmlaffiliation{horizon}{Horizon Robotics Inc., Cupertino, CA, 95014 USA}
\icmlaffiliation{harvard}{Harvard University, USA}

\icmlcorrespondingauthor{Jiachen Li}{jiachen\_li@ucsb.edu}
\icmlcorrespondingauthor{Edwin Zhang}{ete@ucsb.edu}

\icmlkeywords{Machine Learning, ICML}

\vskip 0.3in
]



\printAffiliationsAndNotice{\icmlEqualContribution} 

\begin{abstract}
    Behavior constrained policy optimization has been demonstrated to be a successful paradigm for tackling Offline Reinforcement Learning. By exploiting historical transitions, a policy is trained to maximize a learned value function while constrained by the behavior policy to avoid a significant distributional shift. In this paper, we propose our closed-form policy improvement operators. We make a novel observation that the behavior constraint naturally motivates the use of first-order Taylor approximation, leading to a linear approximation of the policy objective. Additionally, as practical datasets are usually collected by heterogeneous policies, we model the behavior policies as a Gaussian Mixture and overcome the induced optimization difficulties by leveraging the LogSumExp's lower bound and Jensen's Inequality, giving rise to a closed-form policy improvement operator. We instantiate both one-step and iterative offline RL algorithms with our novel policy improvement operators and empirically demonstrate {their} effectiveness over state-of-the-art algorithms on the standard D4RL benchmark. Our code is available at \url{https://cfpi-icml23.github.io/}.
\end{abstract}

\section{Introduction}
The deployment of Reinforcement Learning (RL)~\citep{sutton2018reinforcement} in real-world applications is often hindered by the large amount of online data it requires. Implementing an untested policy can be costly and dangerous in fields such as robotics~\citep{cabi2019framework} and autonomous driving~\citep{sallab2017deep}. To address this issue, offline RL (a.k.a batch RL)~\citep{levine2020offline,lange2012batch} has been proposed to learn a policy directly from historical data without environment interaction. However, learning competent policies from a static dataset is challenging. Previous research has shown that learning a policy without constraining its deviation from the data-generating policies suffers from significant extrapolation errors, leading to training divergence~\citep{fujimoto2019off,kumar2019stabilizing}.

Current literature has demonstrated two successful paradigms for managing the trade-off between policy improvement and limiting the distributional shift from the behavior policies. Under the actor-critic framework~\citep{konda1999actor}, behavior constrained policy optimization (BCPO)~\citep{fujimoto2019off,kumar2019stabilizing,fujimoto2021minimalist,wu2019behavior,brandfonbrener2021offline,ghasemipour2021emaq,li2020multi,vuong2022dual,wu2022supported} explicitly regularizes the divergence between learned and behavior policies, while conservative methods~\citep{kumar2020conservative,bai2022pessimistic,yu2020mopo,yu2021combo,yang2022rorl,lyu2022mildly} penalize the value estimate for out-of-distribution (OOD) actions to avoid overestimation errors.
\red{However, most existing model-free offline RL algorithms use stochastic gradient descent (SGD) to optimize their policies, which can lead to instability during the training process and require careful tuning of the number of gradient steps. As highlighted by \citealt{fujimoto2021minimalist}, the performance of the offline-trained policies can be influenced by the specific stopping point chosen for evaluation, with substantial variations often observed near the final stage of training. This instability poses a significant challenge in offline RL, given the restricted access to environment interaction makes it difficult to perform hyper-parameter tuning. In addition to the variations across different stopping points, our experiment in \autoref{tab:ablate} reveals that using SGD for policy improvement can result in significant performance variations across different random seeds, a phenomenon well-documented for online RL algorithms as well~\citep{islam2017reproducibility}.
}


\red{In this work, we aim to mitigate the aforementioned learning instabilities of offline RL by 
designing stable policy improvement operators. In particular,} we take a closer look at the BCPO paradigm and make a novel observation that the requirement of limited distributional shift motivates the use of the first-order Taylor approximation~\citep{callahan2010advanced}, leading to a linear approximation of the policy objective that is accurate in a sufficiently small neighborhood of the behavior action. Based on this crucial insight, we construct our policy improvement operators that return closed-form solutions by carefully designing a tractable behavior constraint. When modeling the behavior policies as a Single Gaussian, our policy improvement operator deterministically shifts the behavior policy towards a value-improving direction derived by solving a Quadratically Constrained Linear Program (QCLP) in closed form. As a result, our method avoids the training instability in policy improvement since it only requires learning the underlying behavior policies of a given dataset, which is a supervised learning problem.

Furthermore, we note that practical datasets are likely to be collected by heterogeneous policies, which may give rise to a multimodal behavior action distribution. In this scenario, a Single Gaussian will fail to capture the multiple modes of the underlying distribution, limiting the potential for policy improvement. While modeling the behavior as a Gaussian Mixture provides better expressiveness, it incurs extra optimization difficulties due to the non-concavity of its log-likelihood. We tackle this issue by leveraging the LogSumExp's lower bound and Jensen's inequality, again leading to a closed-form policy improvement (CFPI) operator applicable to multimodal behavior policies. Empirically, we demonstrate the effectiveness of Gaussian Mixture over the conventional Single Gaussian when the underlying distribution comes from heterogenous policies.


In summary, our main contributions are threefold:
\begin{itemize}
    \item CFPI operators that are compatible with single mode and multimodal behavior policies \red{and can be leveraged to improve policies learned by the other algorithms.}
    \item \red{Empirical evidence showing the benefits of modeling the behavior policy as a Gaussian Mixture.}
    \item {One-step and iterative instantiations of our algorithm that outperform state-of-the-art (SOTA) algorithms on the standard D4RL benchmark~\citep{fu2020d4rl}.}
\end{itemize}


\section{Preliminaries}

\textbf{Reinforcement Learning.} RL aims to maximize returns in a Markov Decision Process (MDP) \citep{sutton2018reinforcement} $\mathcal{M}$ = ($\mathcal{S}, \mathcal{A}, R, T, \rho_0, \gamma$), with state space $\mathcal{S}$, action space $\mathcal{A}$, reward function $R$, transition function $T$, initial state distribution $\rho_0$, and discount factor $\gamma\in[0, 1)$. At each time step $t$, the agent starts from a state $s_t \in \mathcal{S}$, selects an action $a_t \sim \pi(\cdot|s_t)$ from its policy $\pi$, transitions to a new state $s_{t+1}\sim T(\cdot|s_t, a_t)$, and receives reward $r_t := R(s_t, a_t)$. We define the action value function associated with $\pi$ by $Q^\pi(s, a) = \mathbb{E}_{\pi } [\sum_{t=0}^{\infty} \gamma^t r_t | s_0 = s, a_0 = a]$. The goal of an RL agent is to learn an optimal policy $\pi^*$ that maximizes the expected discounted cumulative reward without access to the ground truth $R$ and $T$ and can thus be formulated as
\begin{equation}
    \pi^* = \argmax_{\pi}J(\pi) := \mathbb{E}_{s\sim\rho_0, a\sim\pi(\cdot|s)}[Q^{\pi}(s, a)]
\end{equation}
In this paper, we consider \emph{offline} RL settings, where we assume restricted access to the MDP $\mathcal{M}$, and a previously collected dataset $\mathcal{D}$ with $N$ transition tuples $\{(s^i_t, a^i_t, r^i_t)\}^N_{i=1}$. We denote the underlying policy that generates $\mathcal{D}$ as $\pi_\beta$, which may or may not be a mixture of individual policies.

\textbf{Behavior Constrained Policy Optimization.}
One of the critical challenges in offline RL is that the learned $Q$ function tends to assign spuriously high values to OOD actions due to extrapolation error, which is well documented in previous literature~\citep{fujimoto2019off,kumar2019stabilizing}. Behavior Constrained Policy Optimization (BCPO) methods~\citep{fujimoto2019off,kumar2019stabilizing,fujimoto2021minimalist,wu2019behavior,brandfonbrener2021offline} explicitly constrain the action selection of the learned policy to stay close to the behavior policy $\pi_\beta$, resulting in a policy improvement step that can be generally summarized by the optimization problem below:
\begin{equation}\label{eq:bcpo_obj}
    \max_{\pi} \mathbb{E}_{s \sim \mathcal{D}}\left[\mathbb{E}_{\tilde{a} \sim \pi(\cdot \mid s)}\left[Q\left(s, \tilde{a}\right)\right]-\alpha D\left(\pi(\cdot \mid s), \pi_{\beta}(\cdot \mid s)\right)\right],
\end{equation}
where $D(\cdot, \cdot)$ is a divergence function that calculates the divergence between two action distributions, and $\alpha$ is a hyper-parameter controlling the strength of regularization. Consequently, the policy is optimized to maximize the $Q$-value while staying close to the behavior distribution.

Different algorithms may choose different $D(\cdot, \cdot)$ (e.g., KL Divergence~\citep{wu2019behavior,jaques2019way}, Rényi divergence~\cite{metelli2018policy,metelli2020importance}, MSE~\citep{fujimoto2021minimalist} and MMD~\citep{kumar2019stabilizing}). However, to the best of our knowledge, all existing methods tackle this optimization via SGD. In this paper, we take advantage of the regularization and solve the problem in closed form.

\section{Closed-Form Policy Improvement}\label{sec:method-cfpi}
In this section, we introduce our policy improvement operators that map the behavior policy to a higher-valued policy, which is accomplished by solving a linearly approximated BCPO. We first show that modeling the behavior policy as a Single Gaussian transforms the approximated BCPO into a QCLP and thus can be solved in closed-form (Sec. \ref{sec:sg-bcpo}). Given that practical datasets are usually collected by heterogeneous policies, we generalize the results by modeling the behavior policies as a Gaussian Mixture to facilitate expressiveness and overcome the incurred optimization difficulties by leveraging the LogSumExp's lower bound (LB) and Jensen's Inequality (Sec. \ref{sec:mg-bcpo}). We close this section by presenting an offline RL paradigm that leverages our policy improvement operators (Sec. \ref{sec:close-form-alg}).



\subsection{Approximated behavior constrained optimization}\label{sec:sg-bcpo}
We aim to design a learning-free policy improvement operator to avoid learning instability in offline settings. We observe that optimizing towards BCPO's policy objective \eqref{eq:bcpo_obj} induces a policy that admits limited deviation from the behavior policy. Consequently, it will only query the $Q$-value within the neighborhood of the behavior action during training, which naturally motivates the employment of the first-order Taylor approximation to derive the following linear approximation of the $Q$ function
\begin{equation}\label{eq:linear_obj}
    \begin{aligned}
          {\bar Q}(s, a; a_{\beta}) & = (a-a_{\beta})^T \left[\nabla_a Q(s, a)\right]_{a=a_{\beta}} + Q(s, a_{\beta}) \\
          & = a^T \left[\nabla_a Q(s, a)\right]_{a=a_{\beta}} + \text{const.}
    \end{aligned}
\end{equation}
By Taylor's theorem~\citep{callahan2010advanced}, ${\bar Q}(s, a; a_{\beta})$ only provides an accurate linear approximation of $Q(s, a)$ in a sufficiently small neighborhood of $a_{\beta}$. Therefore, the choice of $a_{\beta}$ is critical.

Recognizing \eqref{eq:bcpo_obj} as a Lagrangian and with the linear approximation \eqref{eq:linear_obj}, we propose to solve the following surrogate problem of \eqref{eq:bcpo_obj} given any state $s$:
\begin{equation}
    \begin{aligned}\label{eq:linear_prob}
        \max_{\pi} & \quad \mathop{\mathbb{E}}_{ \substack{{\tilde a} \sim {\pi}}} \left[
              {\tilde a}^T \left[\nabla_a Q(s, a)\right]_{a=a_{\beta}}
        \right], \\
        \text{s.t.} & \quad D\left(\pi(\cdot \mid s), \pi_{\beta}(\cdot \mid s)\right)
        \leq \delta.
    \end{aligned}
\end{equation}
Note that it is not necessary for $D(\cdot, \cdot)$ to be a (mathematically defined) divergence measure since any generic $\mathcal{D}(\cdot,\cdot)$ that can constrain the deviation of $\pi$'s action from $\pi_\beta$ can be considered.

{\bf Single Gaussian Behavior Policy.} In general, \eqref{eq:linear_prob} does not always have a closed-form solution. We analyze a special case where $\pi_\beta = \mathcal{N}(\mu_\beta, \Sigma_\beta)$ is a Gaussian policy, $\pi = \mu$ is a deterministic policy, and $D(\cdot, \cdot)$ is a negative log-likelihood function. In this scenario, a reasonable choice of $\mu$ should concentrate around $\mu_\beta$ to limit distributional shift. Therefore, we set $a_\beta = \mu_\beta$ and the optimization problem \eqref{eq:linear_prob} becomes the following:
\begin{equation}\label{eq:sg_orig_prob}
    \max_{\mu}\,\, {\mu}^T \left[\nabla_a Q(s, a)\right]_{a=\mu_\beta},
    \,\, \text{s.t.}\,\,
    -\log \pi_{\beta}(\mu|s) \leq \delta
\end{equation}
We now show that \eqref{eq:sg_orig_prob} has a closed-form solution.
\begin{prop}\label{prop:sg-close}
The optimization problem \eqref{eq:sg_orig_prob} has a closed-form solution that is given by
\begin{equation}\label{eq:sg-sol}
        \mu_{\text{sg}}(\tau) = \mu_{\beta} + \frac{\sqrt{2\log\tau}\,\Sigma_\beta\left[\nabla_{a} {Q}(s, a)\right]_{a=\mu_\beta}}{\left\|\left[\nabla_{a}{Q}(s, a)\right]_{a=\mu_\beta}\right\|_{\Sigma_\beta}},
\end{equation}
where $\delta = \frac{1}{2} \log\det(2\pi \Sigma_\beta) + \log\tau$ and $\left\|x\right\|_{\Sigma} = \sqrt{x^{T} \Sigma x}$.
\end{prop}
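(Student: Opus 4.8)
The plan is to first make the constraint explicit, then recognize the resulting problem as maximizing a linear functional over an ellipsoid, which has a textbook closed-form maximizer. Writing out the negative log-likelihood of $\pi_\beta = \mathcal{N}(\mu_\beta, \Sigma_\beta)$ gives
\[ -\log \pi_\beta(\mu \mid s) = \tfrac{1}{2}\log\det(2\pi\Sigma_\beta) + \tfrac{1}{2}(\mu - \mu_\beta)^T \Sigma_\beta^{-1}(\mu - \mu_\beta). \]
Substituting the prescribed $\delta = \tfrac{1}{2}\log\det(2\pi\Sigma_\beta) + \log\tau$ cancels the log-determinant term and collapses the constraint of \eqref{eq:sg_orig_prob} to the clean quadratic form $(\mu - \mu_\beta)^T \Sigma_\beta^{-1}(\mu - \mu_\beta) \le 2\log\tau$. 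This is exactly an ellipsoid centered at $\mu_\beta$, so \eqref{eq:sg_orig_prob} is a QCLP: maximize the linear objective $\mu^T g$, where I abbreviate $g := [\nabla_a Q(s,a)]_{a=\mu_\beta}$, over this ellipsoid.

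Second, I would change variables to reduce the problem to maximizing a linear functional over a Euclidean ball. Setting $x := \mu - \mu_\beta$ absorbs the constant $\mu_\beta^T g$ into the objective, leaving $\max_x x^T g$ subject to $x^T \Sigma_\beta^{-1} x \le 2\log\tau$. Applying the whitening substitution $y := \Sigma_\beta^{-1/2} x$ turns the constraint into $\|y\|_2 \le \sqrt{2\log\tau}$ and the objective into $y^T \Sigma_\beta^{1/2} g$. By the Cauchy--Schwarz inequality this is maximized when $y$ is aligned with $\Sigma_\beta^{1/2} g$ and saturates the norm bound, i.e.\ $y^\star = \sqrt{2\log\tau}\, \Sigma_\beta^{1/2} g / \|\Sigma_\beta^{1/2} g\|_2$. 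Undoing the substitution via $\mu^\star = \mu_\beta + \Sigma_\beta^{1/2} y^\star$ and using $\|\Sigma_\beta^{1/2} g\|_2 = \sqrt{g^T \Sigma_\beta g} = \|g\|_{\Sigma_\beta}$ should recover exactly \eqref{eq:sg-sol}. An equivalent route is the KKT/Lagrangian stationarity condition $g = 2\lambda \Sigma_\beta^{-1} x$, which forces $x \propto \Sigma_\beta g$, after which the multiplier is pinned down by the active constraint.

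I do not anticipate a serious obstacle, since this is a convex program with a strictly feasible interior; the only point requiring care is justifying that the optimum lies on the boundary of the ellipsoid (the constraint is active) rather than in the interior. This follows because the objective is linear with gradient $g$, so moving outward along the $\Sigma_\beta g$ direction strictly increases it until the constraint binds — equivalently, Cauchy--Schwarz forces the maximizer to saturate the norm bound. This tacitly assumes $g \ne 0$; otherwise every feasible $\mu$ is optimal and the formula degenerates. A minor bookkeeping point is tracking the positive-definiteness of $\Sigma_\beta$ so that $\Sigma_\beta^{1/2}$ and $\Sigma_\beta^{-1/2}$ are well defined and $\|\cdot\|_{\Sigma_\beta}$ is genuinely a norm.
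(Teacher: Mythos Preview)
Your proposal is correct. Both you and the paper reduce \eqref{eq:sg_orig_prob} to the same QCLP with constraint $(\mu-\mu_\beta)^T\Sigma_\beta^{-1}(\mu-\mu_\beta)\le 2\log\tau$, but you then solve it by whitening $y=\Sigma_\beta^{-1/2}(\mu-\mu_\beta)$ and invoking Cauchy--Schwarz on the Euclidean ball, whereas the paper proceeds via the Lagrangian: it writes stationarity $\nabla_\mu L=0$ to get $\mu=\mu_\beta+\eta^{-1}\Sigma_\beta g$, then fixes the multiplier $\eta$ from the active constraint (complementary slackness). The two routes are equivalent---indeed you note the KKT alternative yourself---and yield the identical formula. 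Your geometric argument is arguably more elementary and makes the boundary-activeness transparent through the Cauchy--Schwarz equality case, while the paper's Lagrangian derivation is the more mechanical template that generalizes directly to the Gaussian-mixture propositions later in the paper; your additional care about the degenerate case $g=0$ and the positive-definiteness of $\Sigma_\beta$ is not addressed in the paper's proof but is a welcome point of rigor.
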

\begin{skproof} 
\eqref{eq:sg_orig_prob} can be converted into the QCLP below that has a closed-form solution given by \eqref{eq:sg-sol}.
\begin{equation}
    \begin{aligned}\label{eq:sg-ref}
        \max_{\mu} & \quad {\mu}^T \left[\nabla_a Q(s, a)\right]_{a=\mu_\beta}, \\
        \text{s.t.} & \quad
        \frac{1}{2} (\mu - \mu_\beta)^T{\Sigma^{-1}_\beta} (\mu - \mu_\beta) \leq \log\tau
    \end{aligned}
\end{equation}
A full proof is given in Appendix~\ref{sec:proof-sg}.
\end{skproof}
Although we still have to tune $\tau$ as tuning $\alpha$ in \eqref{eq:bcpo_obj} for conventional BCPO methods, we have a transparent interpretation of $\tau$'s effect on the action selection thanks to the tractability of \eqref{eq:sg_orig_prob}. 
Due to the KKT conditions~\citep{boyd2004convex}, the $\mu_{\text{sg}}$ returned by \eqref{eq:sg-sol} has the following property
\begin{equation}
    \begin{aligned}
    \log \pi_{\beta}(\mu_{\text{sg}}|s) & = -\delta = \log\frac{\pi_{\beta}(\mu_{\beta}|s)}{\tau}
    \\
    \iff \quad
    \pi_{\beta}(\mu_{\text{sg}}|s) & = \frac{\pi_{\beta}(\mu_{\beta}|s) }{\tau}
    \end{aligned}.
\end{equation}
While setting $\tau = 1$ will always return the mean of $\pi_\beta$, a large $\tau$ might send $\mu_{\text{sg}}$ out of the support of $\pi_{\beta}$, breaking the accuracy guarantee of the first-order Taylor approximation.

\subsection{Gaussian Mixture as a more expressive model}\label{sec:mg-bcpo}
{Performing policy improvement with \eqref{eq:sg-sol}} enjoys favorable computational efficiency and avoids the potential instability caused by SGD. However, its tractability relies on the Single Gaussian assumption of the behavior policy $\pi_\beta$. In practice, the historical datasets are usually collected by heterogeneous policies with different levels of expertise. A Single Gaussian may fail to capture the whole picture of the underlying distribution, motivating the use of a Gaussian Mixture to represent $\pi_\beta$.
\begin{equation}
    \pi_\beta = \sum^N_{i=1}\lambda_i \mathcal{N}(\mu_i, \Sigma_i),\quad  \sum^N_{i=1}\lambda_i = 1
\end{equation}
However, directly plugging the Gaussian Mixture $\pi_\beta$ into \eqref{eq:sg_orig_prob} breaks its tractability, resulting in a non-convex optimization

\begin{equation}
    \begin{aligned}\label{eq:mg_orig_prob}
            \max_{\mu} & \quad {\mu}^T \left[\nabla_a Q(s, a)\right]_{a=a_\beta}, \\
            \text{s.t.} & \quad
            \log\sum^N_{i=1} \Big(\lambda_i \det(2\pi \Sigma_i)^{-\frac{1}{2}}\exp\big(  \\
            & \quad\quad\quad  -\frac{1}{2} (\mu - \mu_i)^T{\Sigma^{-1}_i} (\mu - \mu_i)\big)\Big) \geq -\delta 
    \end{aligned}
\end{equation}

We are confronted with two major challenges to solve the optimization problem \eqref{eq:mg_orig_prob}. First, it is unclear how to choose a proper $a_\beta$ while we need to ensure that the solution $\mu$ lies within a small neighborhood of $a_\beta$. Second, the constraint of \eqref{eq:mg_orig_prob} does not admit a convex form, posing non-trivial optimization difficulties. {We leverage the lemma below to tackle the non-convexity of the constraint.}
\begin{lemma}\label{leamma:ineq}\
    $\log\sum^N_{i=1} \lambda_i \exp(x_i)$ admits the following inequalities:
    \begin{enumerate}
        \item (LogSumExp's LB) \\$\quad\log\sum^N_{i=1} \lambda_i \exp(x_i) \geq \max_{i} \left\{x_i + \log\lambda_i\right\}$
        \item (Jensen's Inequality) \\$\quad\log\sum^N_{i=1} \lambda_i \exp(x_i) \geq \sum^N_{i=1} \lambda_i x_i$
    \end{enumerate}
\end{lemma}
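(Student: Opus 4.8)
The plan is to prove each inequality independently using elementary properties of the functions involved, since both bounds follow from standard convexity/monotonicity facts once the right observation is made.

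For the first inequality (the LogSumExp lower bound), I would start from the fact that each summand $\lambda_i \exp(x_i)$ is nonnegative, so dropping all but one term can only decrease the sum. Concretely, for any fixed index $j$ we have $\sum_{i=1}^N \lambda_i \exp(x_i) \geq \lambda_j \exp(x_j)$. Applying the logarithm, which is monotonically increasing, preserves the inequality and gives
\begin{equation*}
    \log\sum_{i=1}^N \lambda_i \exp(x_i) \;\geq\; \log\bigl(\lambda_j \exp(x_j)\bigr) \;=\; x_j + \log\lambda_j.
\end{equation*}
Since this holds for every $j \in \{1,\dots,N\}$, I would take the maximum over $j$ on the right-hand side to conclude $\log\sum_{i=1}^N \lambda_i \exp(x_i) \geq \max_i\{x_i + \log\lambda_i\}$.

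For the second inequality (Jensen's inequality), the key observation is that the mixture weights satisfy $\lambda_i \geq 0$ and $\sum_{i=1}^N \lambda_i = 1$, so they define a probability distribution. Because $\log$ is concave, Jensen's inequality yields $\log\bigl(\sum_{i=1}^N \lambda_i y_i\bigr) \geq \sum_{i=1}^N \lambda_i \log y_i$ for any nonnegative $y_i$. Substituting $y_i = \exp(x_i)$ and using $\log(\exp(x_i)) = x_i$ gives the desired bound $\log\sum_{i=1}^N \lambda_i \exp(x_i) \geq \sum_{i=1}^N \lambda_i x_i$.

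There is no substantive obstacle in either argument; the work is entirely in selecting the correct elementary tool. The only point deserving care is ensuring the weights $\lambda_i$ are nonnegative and normalized, which holds by the Gaussian Mixture construction, so that the term-dropping step in the first bound is valid and the $\lambda_i$ genuinely form a probability measure for Jensen's inequality in the second.
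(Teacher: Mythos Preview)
Your proof is correct. The paper itself does not provide a proof of this lemma, treating both inequalities as standard facts; your term-dropping argument for the LogSumExp lower bound and direct application of Jensen's inequality via the concavity of $\log$ are exactly the natural elementary arguments one would give.
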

Next, we show that {applying each inequality in Lemma \ref{leamma:ineq} to the constraint of \eqref{eq:mg_orig_prob} respectively resolves the intractability and leads to natural choices of $a_\beta$.}
\begin{prop}\label{prop:mg-max-close}
By {applying the first inequality of Lemma \ref{leamma:ineq} to the constraint of \eqref{eq:mg_orig_prob}}, we can derive an optimization problem that lower bounds \eqref{eq:mg_orig_prob}
\begin{equation}
    \begin{aligned}\label{eq:mg_max_prob}
        \max_{\mu} & \quad {\mu}^T \left[\nabla_a Q(s, a)\right]_{a=a_\beta}, \\
        \text{s.t.} &\quad \max_{i} \Big\{ -\frac{1}{2} (\mu - \mu_i)^T{\Sigma^{-1}_i} (\mu - \mu_i) \\
         & \quad\quad\quad\quad -\frac{1}{2} \log\det(2\pi \Sigma_i) + \log\lambda_i\Big\} \geq -\delta,
    \end{aligned}
\end{equation}
and the closed-form solution to \eqref{eq:mg_max_prob} is given by
\begin{equation}
    \begin{aligned}\label{eq:max-sol}
    \mu_{\text{lse}}(\tau) & = \argmax_{{\overline\mu}_i(\delta)} \quad {{\overline\mu}_i}^T \left[\nabla_a Q(s, a)\right]_{a=\mu_i}, \\
    \text{s.t.} & \quad  \delta = \min_i\{\frac{1}{2}\log\det(2\pi \Sigma_i) - \log\lambda_{i}\} + \log\tau,\\
    \text{where} & \quad {\overline\mu}_i(\delta) = \mu_{i} + \frac{\kappa_i\Sigma_i\left[\nabla_{a} {Q}(s, a)\right]_{a=\mu_i}}{\left\|\left[\nabla_{a}{Q}(s, a)\right]_{a=\mu_i}\right\|_{\Sigma_i}}, \\
    \text{and} & \quad \kappa_i = \sqrt{2 (\delta + \log\lambda_i) - \log\det(2\pi \Sigma_i)}\,\, .
    \end{aligned}
\end{equation}
\end{prop}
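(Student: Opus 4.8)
The plan is to convert the intractable constraint of \eqref{eq:mg_orig_prob} into a union of ellipsoidal constraints by invoking the first inequality of Lemma~\ref{leamma:ineq}, after which the feasible region decouples across mixture components and each piece reduces to the Single Gaussian QCLP already solved in Proposition~\ref{prop:sg-close}. First I would rewrite each summand of the mixture density in exponential form, $\lambda_i \det(2\pi\Sigma_i)^{-1/2}\exp(-\tfrac12(\mu-\mu_i)^T\Sigma_i^{-1}(\mu-\mu_i)) = \lambda_i\exp(x_i)$, identifying
\[
x_i = -\tfrac12(\mu-\mu_i)^T\Sigma_i^{-1}(\mu-\mu_i) - \tfrac12\log\det(2\pi\Sigma_i).
\]
The LogSumExp lower bound then gives $\log\sum_i\lambda_i\exp(x_i)\ge \max_i\{x_i+\log\lambda_i\}$, so imposing $\max_i\{x_i+\log\lambda_i\}\ge-\delta$ is a more restrictive requirement than the original $\log\sum_i\lambda_i\exp(x_i)\ge-\delta$. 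Hence every feasible $\mu$ of the new program \eqref{eq:mg_max_prob} is feasible for \eqref{eq:mg_orig_prob}, and maximizing the same linear objective over a smaller feasible set yields a value no larger than that of \eqref{eq:mg_orig_prob}; this is exactly the claimed lower-bounding property.

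Next I would exploit the structure of the new constraint. The set $\{\mu:\max_i\{x_i+\log\lambda_i\}\ge-\delta\}$ is the union over $i$ of the ellipsoids $E_i=\{\mu:\tfrac12(\mu-\mu_i)^T\Sigma_i^{-1}(\mu-\mu_i)\le \delta+\log\lambda_i-\tfrac12\log\det(2\pi\Sigma_i)\}$, each centered at the component mean $\mu_i$ and nonempty precisely when its right-hand side is nonnegative, i.e. when $\kappa_i^2=2(\delta+\log\lambda_i)-\log\det(2\pi\Sigma_i)\ge0$. Because the Taylor approximation underlying the objective is only accurate near the expansion point, within $E_i$ the natural choice is $a_\beta=\mu_i$; this is the ``natural choice of $a_\beta$'' the statement refers to, and it turns the objective into $\mu^T[\nabla_a Q(s,a)]_{a=\mu_i}$ on that ellipsoid.

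Since maximizing a linear functional over a union of sets equals the best of the maxima over the individual sets, I would solve, for each admissible $i$,
\[
\max_\mu\ \mu^T[\nabla_a Q(s,a)]_{a=\mu_i}\quad\text{s.t.}\quad \tfrac12(\mu-\mu_i)^T\Sigma_i^{-1}(\mu-\mu_i)\le \tfrac12\kappa_i^2,
\]
which is identical in form to \eqref{eq:sg-ref} with $\mu_\beta\!\to\!\mu_i$, $\Sigma_\beta\!\to\!\Sigma_i$ and $\log\tau\!\to\!\tfrac12\kappa_i^2$. Proposition~\ref{prop:sg-close} (equivalently, a one-line Cauchy--Schwarz / KKT argument) then gives the per-component maximizer $\overline\mu_i(\delta)=\mu_i+\kappa_i\Sigma_i[\nabla_a Q(s,a)]_{a=\mu_i}/\|[\nabla_a Q(s,a)]_{a=\mu_i}\|_{\Sigma_i}$, matching \eqref{eq:max-sol} since $\sqrt{2\cdot\tfrac12\kappa_i^2}=\kappa_i$. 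Finally, taking the component that attains the largest objective value, $\mu_{\text{lse}}=\argmax_{\overline\mu_i(\delta)}\overline\mu_i^T[\nabla_a Q(s,a)]_{a=\mu_i}$, yields the global optimum of \eqref{eq:mg_max_prob}; the reparametrization $\delta=\min_i\{\tfrac12\log\det(2\pi\Sigma_i)-\log\lambda_i\}+\log\tau$ is chosen so that the smallest-penalty component reproduces $\kappa_i^2=2\log\tau$ and thereby inherits exactly the $\tau$-interpretation of the Single Gaussian case.

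I expect the main obstacle to be the feasibility bookkeeping: I must argue that the union decomposition is valid, verify that components with $\kappa_i^2<0$ correspond to empty ellipsoids and are simply dropped from the $\argmax$, and confirm that the chosen $\delta$ keeps at least one ellipsoid nonempty so the program is well posed. The per-component QCLP itself is routine given Proposition~\ref{prop:sg-close}; the conceptual care lies in justifying the component-wise linearization point $a_\beta=\mu_i$ and in the constant-tracking that collapses the ellipsoid radius into $\kappa_i$.
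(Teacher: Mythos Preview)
Your proposal is correct and follows essentially the same route as the paper: both decompose the feasible region of \eqref{eq:mg_max_prob} into the union of per-component ellipsoids, reduce each to the Single Gaussian QCLP of Proposition~\ref{prop:sg-close} with $a_\beta=\mu_i$, and select the best component; your ``max over a union equals the best of the per-piece maxima'' observation is exactly what the paper establishes via a short contradiction argument, and your feasibility bookkeeping (dropping components with $\kappa_i^2<0$) matches the paper's handling of sub-problems with empty constraint sets.
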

\begin{prop}\label{prop:mg-jensen-close}
By {applying the second inequality of Lemma \ref{leamma:ineq} to the constraint of \eqref{eq:mg_orig_prob}}, we can derive an optimization problem that lower bounds \eqref{eq:mg_orig_prob}
\begin{equation}
    \begin{aligned}\label{eq:mg_jensen_prob}
        \max_{\mu} & \quad {\mu}^T \left[\nabla_a Q(s, a)\right]_{a=a_\beta},\\
        \text{s.t.} & \quad \sum^N_{i=1} \lambda_i\bigg(-{\frac{1}{2}}\log\det(2\pi \Sigma_i) \\
        & \quad\quad\quad\quad\,-\frac{1}{2} (\mu - \mu_i)^T{\Sigma^{-1}_i} (\mu - \mu_i)\bigg) \geq -\delta 
    \end{aligned}
\end{equation}
and the closed-form solution to \eqref{eq:mg_jensen_prob} is given by
\begin{equation}
    \begin{aligned}\label{eq:jensen-sol}
   & \mu_{\text{jensen}}(\tau) =  \overline{\mu} + 
    \frac{\kappa_i\overline{\Sigma}{\left[\nabla_a Q(s, a) \right]_{a={\overline\mu}}}}
    {\left\|\left[\nabla_{a}{Q}(s, a)\right]_{a={\overline\mu}}\right\|_{\overline{\Sigma}}}, \quad\text{where}\\
 & \kappa_i = \sqrt{2\log\tau
    -\sum^N_{i=1}{\lambda_i\mu_i^T}\Sigma_i^{-1}\mu_i + {\overline\mu}^T \overline{\Sigma}^{-1} {\overline\mu}}\,\, ,\\
 & \overline{\Sigma} = \left(\sum^N_{i=1} \lambda_i\Sigma^{-1}_i\right)^{-1}, \,\,  \overline\mu = \overline{\Sigma} \left(\sum^N_{i=1} \lambda_i \Sigma_i^{-1} \mu_i\right),\\
 & \delta = \log\tau + \frac{1}{2} \sum^N_{i=1}\lambda_i\log\det(2\pi\Sigma_i)
    \end{aligned}
\end{equation}
\end{prop}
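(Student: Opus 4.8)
The plan is to mirror the single-Gaussian argument of Proposition~\ref{prop:sg-close}: I first reduce the Jensen-relaxed constraint to a single ellipsoidal (quadratic) constraint centered at a precision-weighted mean, and then invoke Proposition~\ref{prop:sg-close} essentially verbatim to read off the closed-form maximizer of the linear objective.

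First I would justify that \eqref{eq:mg_jensen_prob} is indeed a lower bound for \eqref{eq:mg_orig_prob}. Writing $x_i = -\tfrac{1}{2}\log\det(2\pi\Sigma_i) - \tfrac{1}{2}(\mu-\mu_i)^T\Sigma_i^{-1}(\mu-\mu_i)$, the second inequality of Lemma~\ref{leamma:ineq} gives $\log\sum_i \lambda_i\exp(x_i)\ge \sum_i\lambda_i x_i$. Hence every $\mu$ satisfying $\sum_i\lambda_i x_i\ge-\delta$ also satisfies the original log-sum-exp constraint, so the Jensen feasible set is contained in the feasible set of \eqref{eq:mg_orig_prob}; maximizing the same linear objective over the smaller set can only decrease the optimum, which yields the claimed lower bound.

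The central computation is then to expand $f(\mu):=\sum_i\lambda_i(\mu-\mu_i)^T\Sigma_i^{-1}(\mu-\mu_i)$ and complete the square. Collecting the quadratic, linear, and constant terms in $\mu$ and introducing $\overline{\Sigma}=(\sum_i\lambda_i\Sigma_i^{-1})^{-1}$ together with $\overline{\mu}=\overline{\Sigma}(\sum_i\lambda_i\Sigma_i^{-1}\mu_i)$ gives $f(\mu)=(\mu-\overline{\mu})^T\overline{\Sigma}^{-1}(\mu-\overline{\mu}) - \overline{\mu}^T\overline{\Sigma}^{-1}\overline{\mu} + \sum_i\lambda_i\mu_i^T\Sigma_i^{-1}\mu_i$. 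Substituting this into the constraint of \eqref{eq:mg_jensen_prob} and plugging in $\delta=\log\tau+\tfrac{1}{2}\sum_i\lambda_i\log\det(2\pi\Sigma_i)$, the $\log\det$ normalization terms cancel exactly and the constraint collapses to the single ellipsoid $(\mu-\overline{\mu})^T\overline{\Sigma}^{-1}(\mu-\overline{\mu})\le\kappa_i^2$, with $\kappa_i^2=2\log\tau-\sum_i\lambda_i\mu_i^T\Sigma_i^{-1}\mu_i+\overline{\mu}^T\overline{\Sigma}^{-1}\overline{\mu}$, matching the definition in \eqref{eq:jensen-sol}.

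Finally, I would observe that the Jensen surrogate $\sum_i\lambda_i x_i$ is a strictly concave quadratic in $\mu$ maximized precisely at $\overline{\mu}$, so $\overline{\mu}$ is the natural analog of $\mu_\beta$ and I set $a_\beta=\overline{\mu}$, evaluating the gradient there. The problem then reads exactly as the QCLP \eqref{eq:sg-ref} with $(\mu_\beta,\Sigma_\beta,\log\tau)$ replaced by $(\overline{\mu},\overline{\Sigma},\tfrac{1}{2}\kappa_i^2)$, so Proposition~\ref{prop:sg-close} applies directly and returns $\mu_{\text{jensen}}=\overline{\mu}+\kappa_i\overline{\Sigma}\,[\nabla_a Q(s,a)]_{a=\overline{\mu}}/\|[\nabla_a Q(s,a)]_{a=\overline{\mu}}\|_{\overline{\Sigma}}$, as claimed. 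The main obstacle is the completing-the-square bookkeeping in the third step: tracking all the cross terms and verifying that the $\delta$ substitution cleanly cancels the normalization constants so the bound reduces to $\kappa_i^2$. Once the constraint is in ellipsoidal form, everything else follows by reduction to the already-established single-Gaussian case.
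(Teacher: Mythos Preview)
Your proposal is correct and reaches the stated result, but you take a somewhat different route than the paper. The paper's proof writes the Lagrangian of \eqref{eq:mg_jensen_prob} with an unspecified $a_\beta$, solves the stationarity condition to discover that the optimizer has the form $\mu=\overline{\mu}+\tfrac{1}{\eta}\overline{\Sigma}[\nabla_a Q]_{a=a_\beta}$ (which \emph{a posteriori} motivates the choice $a_\beta=\overline{\mu}$), and then carries out a fairly long KKT computation substituting back into the active constraint to isolate $\eta$. You instead complete the square in the weighted quadratic $\sum_i\lambda_i(\mu-\mu_i)^T\Sigma_i^{-1}(\mu-\mu_i)$ up front, which directly exhibits the constraint as a single ellipsoid $(\mu-\overline{\mu})^T\overline{\Sigma}^{-1}(\mu-\overline{\mu})\le\kappa_i^2$ and makes the choice $a_\beta=\overline{\mu}$ transparent as the center of that ellipsoid; the closed form then follows by a one-line reduction to Proposition~\ref{prop:sg-close} with $(\mu_\beta,\Sigma_\beta,\log\tau)\mapsto(\overline{\mu},\overline{\Sigma},\tfrac{1}{2}\kappa_i^2)$. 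The two arguments are equivalent, but yours is more economical and better highlights that the Jensen relaxation is literally the single-Gaussian case in disguise, while the paper's derivation is self-contained and does not rely on the earlier proposition.
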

We defer the detailed proof of \autoref{prop:mg-max-close} and \autoref{prop:mg-jensen-close} as well as how we choose $a_\beta$ for each optimization problem to Appendix~\ref{sec:proof-mg-max} and~\ref{sec:proof-mg-jensen}, respectively. 
\begin{figure*}[t]
  \makebox[\textwidth]{
  \begin{subfigure}{0.26\paperwidth}
    \raggedleft
    \includegraphics[height=0.75\linewidth, width=\linewidth]{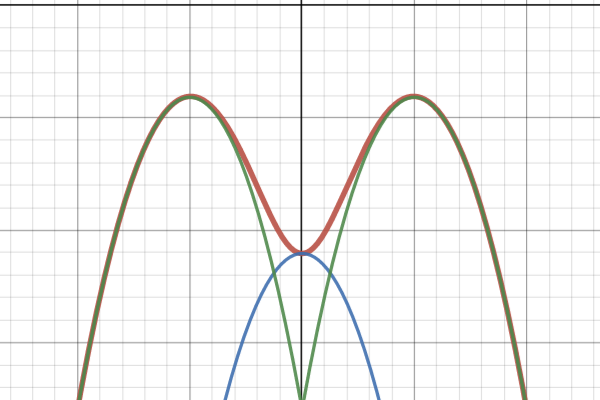}
  \end{subfigure}
  
  \begin{subfigure}{0.26\paperwidth}
    \center
    \includegraphics[height=0.75\linewidth, width=0.75\linewidth]{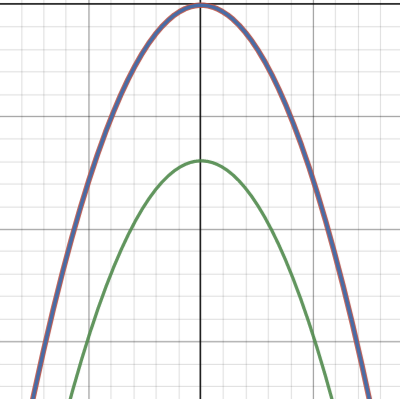}
  \end{subfigure}

  \begin{subfigure}{0.26\paperwidth}
    \raggedright
    \includegraphics[height=0.75\linewidth, width=0.9375\linewidth]{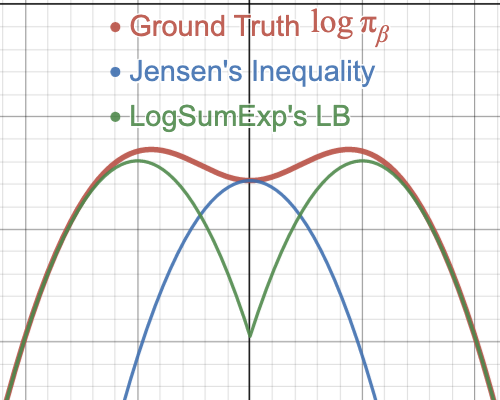}
  \end{subfigure}
    }

  \caption{Apply Lemma \ref{leamma:ineq} to Gaussian Mixture's log probability $\log\pi_{\beta}$ at different scenarios. (L) $\log\pi_{\beta}$ has multiple modes. LogSumExp's LB preserves multimodality. (M) $\log\pi_{\beta}$ reduces to Single Gaussian. Jensen's inequality becomes equality. (R) $\log\pi_{\beta}$ is similar to a uniform distribution.}
\label{fig:mg-and-lb}
\end{figure*}

Comparing to the original optimization problem \eqref{eq:mg_orig_prob}, both problems \eqref{eq:mg_max_prob} and \eqref{eq:mg_jensen_prob} impose more strict trust region constraints, which is accomplished by enforcing the lower bound of the log probabilities of the Gaussian Mixture to exceed a certain threshold, with $\tau$ controlling the size of the trust region. Indeed, these two optimization problems have their own assets and liabilities. When ${\pi}_{\beta}$ exhibits an obvious multimodality as is shown in Fig. \ref{fig:mg-and-lb} (L), the lower bound of $\log\pi_{\beta}$ constructed by Jensen's Inequality cannot capture different modes due to its concavity, losing the advantage of modeling $\pi_{\beta}$ as a Gaussian Mixture. In this case, the optimization problem \eqref{eq:mg_max_prob} can serve as a reasonable surrogate problem of \eqref{eq:mg_orig_prob}, as LogSumExp's LB still preserves the multimodality of $\log\pi_{\beta}$.

{When $\pi_{\beta}$ is reduced to a Single Gaussian}, the approximation with the Jensen's Inequality becomes equality as is shown in Fig. \ref{fig:mg-and-lb} (M). Thus $\mu_{\text{jensen}}$ returned by \eqref{eq:jensen-sol} exactly solves the optimization problem \eqref{eq:mg_orig_prob}. However, in this case, the tightness of LogSumExp's LB largely depends on the weights $\lambda_{i=1\ldots N}$. If each Gaussian component is distributed and weighted identically, the lower bound will be $\log N$ lower than the actual value. {Moreover, there also exists the scenario (Fig. \ref{fig:mg-and-lb} (R)) when both \eqref{eq:mg_max_prob} and \eqref{eq:mg_jensen_prob} can serve as reasonable surrogates to the original problem \eqref{eq:mg_orig_prob}.}

Fortunately, we can combine the best of both worlds and derives a CFPI operator accounting for {all the above scenarios}, which returns a policy that selects the higher-valued action from $\mu_{\text{lse}}$ and $\mu_{\text{jensen}}$
\begin{equation}\label{eq:mg-sol}
    \mu_{\text{mg}}(\tau) = \argmax_{\mu\in\{\mu_{\text{lse}}(\tau), \mu_{\text{jensen}}(\tau)\}} Q(s, \mu)
\end{equation}
\subsection{Algorithm template}\label{sec:close-form-alg}\label{sec:alg}

We have derived two CFPI operators that map the behavior policy to a {higher-valued} policy. 
When the behavior policy $\pi_{\beta}$ is a Single Gaussian, $\mathcal{I}_{\text{SG}}(\pi_{\beta}, Q; \tau)$ returns a policy with action selected by \eqref{eq:sg-sol}. When $\pi_{\beta}$ is a Gaussian Mixture, $\mathcal{I}_{\text{MG}}(\pi_{\beta}, Q; \tau)$ returns a policy with action selected by \eqref{eq:mg-sol}. 
{We note that our methods can also work with a non-Gaussian ${\pi}_{\beta}$. Appendix \ref{sec:cfpi-det} provides the derivations for the corresponding CFPI operators when ${\pi}_{\beta}$ is modeled as both a deterministic policy and VAE.} {Algorithm \ref{alg} shows that our CFPI operators enable the design of a general offline RL template that can yield one-step, multi-step and iterative methods, where $\mathcal{E}$ is a general policy evaluation operator that returns a value function ${\hat Q}_{t}$.} When setting $T = 0$, we obtain our one-step method. {We defer the discussion on multi-step and iterative methods to the Appendix~\ref{sec:alg-ms-iter}.}

{While the design of our CFPI operators is motivated by the behavior constraint, we highlight that they are compatible with general baseline policies $\pi_{b}$ besides $\pi_{\beta}$. Sec. \ref{sec:improe-learned-policy} and Appendix~\ref{sec:improve-cql} show that our CFPI operators can improve policies learned by IQL and CQL~\citep{kumar2020conservative}.}

\begin{algorithm}[t]
    \caption{Offline RL with CFPI operators}\label{alg}
    {\bf Input}: Dataset $\mathcal{D}$, baseline policy $\hat{\pi}_{b}$, value function ${\hat Q}_{-1}$, HP $\tau$
    \begin{algorithmic}[1]
    \STATE Warm start ${\hat Q}_{0} = \text{SARSA}({\hat Q}_{-1}, \mathcal{D})$ with the SARSA-style algorithm~\citep{sutton2018reinforcement}
    \STATE Get one-step policy $\hat{\pi}_{1} = \mathcal{I}(\hat{\pi}_{b}, {\hat Q}_{0}; \tau)$
    \FOR{$t = 1\ldots T$}
        \STATE Policy evaluation: ${\hat Q}_{t} = \mathcal{E}({\hat Q}_{t-1}, \hat{\pi}_t, \mathcal{D})$
        \STATE Get policy: $\hat{\pi}_{t+1} = \mathcal{I}(\hat{\pi}_{b}, {\hat Q}_{t}; \tau)$ \quad(concrete choices of $\mathcal{I}$ includes $\mathcal{I}_{\text{MG}}$ and $\mathcal{I}_{\text{SG}}$)
    \ENDFOR
    \end{algorithmic}
\end{algorithm}
\subsection{Theoretical guarantees for CFPI operators}


At a high level, Algorithm~\ref{alg} follows the \emph{approximate policy iteration} (API) \citep{perkins2002convergent} by iterating over the policy evaluation ($\mathcal{E}$ step, Line 4) and policy improvement ($\mathcal{I}$ step, Line 5). Therefore, to verify $\mathcal{E}$ provides the improvement, we need to first show policy evaluation $\hat{Q}_t$ is accurate.
We employ the Fitted Q-Iteration \citep{sutton2018reinforcement} to perform policy evaluation, which is known to be statistically efficient (\emph{e.g.} \citep{chen2019information}) under the mild condition for the function approximation class. 
Next, for the performance gap between $J(\hat{\pi}_{t+1})-J(\hat{\pi}_{t})$, we apply the standard performance difference lemma \citep{kakade2002approximately,kakade2003sample}.

\begin{theorem}{[Safe Policy Improvement]}\label{thm} Assume the state and action spaces are discrete.\footnote{The assumption of discreteness is made only for the purpose of analysis. For the more general cases, please refer to Appendix~\ref{sec:app_thm}.} Let $\hat{\pi}_1$ be the policy obtained after the CFPI update (Line 2 of Algorithm~\ref{alg}). Then with probability $1-\delta$,
{\begin{align*}
& J(\hat{\pi}_1)-J(\hat{\pi}_\beta) \\
\geq & \frac{1}{1-\gamma}\mathbb{E}_{s\sim d^{\hat{\pi}_1}}\left[ \bar{Q}^{\hat{\pi}_\beta}(s,\hat{\pi}_1(s))
-\bar{Q}^{\hat{\pi}_\beta}(s,\hat{\pi}_\beta(s))\right]\\
-&\frac{2}{1-\gamma}\mathbb{E}_{s\sim d^{\hat{\pi}_1}}\mathbb{E}_{a\sim\hat{\pi}_1(\cdot|s)}\left[\frac{C_{\gamma,\delta}}{\sqrt{\mathcal{D}(s,a)}}+{C_{\mathrm{CFPI}}(s,a)}\right]:=\zeta.
\end{align*}}
Similar results can be derived for multi-step and iterative algorithms by defining $\hat{\pi}_{0} = \hat{\pi}_\beta$. With probability $1-\delta$,
\begin{align*}
J(\hat{\pi}_T)-J(\hat{\pi}_\beta) =&\sum_{t=1}^T J(\hat{\pi}_t)-J(\hat{\pi}_{t-1})
\geq \sum_{t=1}^T \zeta^{(t)},
\end{align*}
where $\mathcal{D}(s,a)$ denotes number of samples at $(s,a)$, $C_{\gamma,\delta}$ denotes the learning coefficient of SARSA and $C_\mathrm{CFPI}(s,a)$ denotes the first-order approximation error from \eqref{eq:linear_obj}. We defer detailed derivation and the expression of $C_{\gamma,\delta},\zeta^{(t)}$ and $C_\mathrm{CFPI}(s,a)$ in Appendix~\ref{sec:app_thm}. Note that when $a=a_\beta$, $C_\mathrm{CFPI}(s,a)=0$.
\end{theorem}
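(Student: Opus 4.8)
The plan is to read Algorithm~\ref{alg} as an instance of approximate policy iteration and reduce the one-step claim to the performance difference lemma \citep{kakade2002approximately}, while carefully tracking two independent error sources: the statistical error of the policy-evaluation (SARSA/FQI) step and the first-order linearization error of \eqref{eq:linear_obj}. First I would invoke the performance difference lemma to obtain the exact identity
\begin{equation*}
J(\hat{\pi}_1)-J(\hat{\pi}_\beta)=\frac{1}{1-\gamma}\mathbb{E}_{s\sim d^{\hat{\pi}_1}}\mathbb{E}_{a\sim\hat{\pi}_1(\cdot\mid s)}\left[Q^{\hat{\pi}_\beta}(s,a)-V^{\hat{\pi}_\beta}(s)\right],
\end{equation*}
so that the whole argument reduces to lower bounding the true advantage $Q^{\hat{\pi}_\beta}(s,a)-Q^{\hat{\pi}_\beta}(s,\hat{\pi}_\beta(s))$ of the CFPI action relative to the behavior action, using $V^{\hat{\pi}_\beta}(s)=Q^{\hat{\pi}_\beta}(s,\hat{\pi}_\beta(s))$.

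Next I would peel off the linearization layer. Writing $Q^{\hat{\pi}_\beta}(s,a)=\bar{Q}^{\hat{\pi}_\beta}(s,a)+R(s,a)$ for the first-order surrogate of \eqref{eq:linear_obj} and its Taylor remainder $R$, the term $\bar{Q}^{\hat{\pi}_\beta}(s,\hat{\pi}_1(s))-\bar{Q}^{\hat{\pi}_\beta}(s,\hat{\pi}_\beta(s))$ emerges as the genuine improvement and becomes the leading positive contribution to $\zeta$, since \eqref{eq:linear_prob} selects $\hat{\pi}_1$ precisely to maximize this linear objective inside the trust region. The remainder at the base point vanishes because the expansion is exact there, which is exactly the stated fact that $C_{\mathrm{CFPI}}(s,\hat{\pi}_\beta(s))=0$; the remainder at the CFPI action is bounded in magnitude by $C_{\mathrm{CFPI}}(s,a)$. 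This accounts for the linearization piece of the error term.

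The statistical piece is where the discreteness assumption does its work and is the step I expect to be the main obstacle. The operator $\mathcal{I}$ does not see the true $Q^{\hat{\pi}_\beta}$ but only the warm-started estimate $\hat{Q}_0$, so to state the bound against the \emph{true} linearized quantity $\bar{Q}^{\hat{\pi}_\beta}$ I must control $Q^{\hat{\pi}_\beta}-\hat{Q}_0$ (and the induced gradient error) at both the CFPI-selected action $\hat{\pi}_1(s)$ and the behavior action $\hat{\pi}_\beta(s)$. Here I would invoke the statistical efficiency of Fitted Q-Iteration \citep{chen2019information}: for each $(s,a)$ visited $\mathcal{D}(s,a)$ times, a Bernstein/Hoeffding concentration bound yields $|Q^{\hat{\pi}_\beta}(s,a)-\hat{Q}_0(s,a)|\le C_{\gamma,\delta}/\sqrt{\mathcal{D}(s,a)}$ with probability at least $1-\delta$ after a union bound over the finite state-action set. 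Evaluating this correction at the two relevant actions produces the factor of two in front of the error term. Collecting the three layers and pushing the expectation over $s\sim d^{\hat{\pi}_1}$, $a\sim\hat{\pi}_1$ through the $1/(1-\gamma)$ factor gives $J(\hat{\pi}_1)-J(\hat{\pi}_\beta)\ge\zeta$.

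For the multi-step and iterative cases I would set $\hat{\pi}_0=\hat{\pi}_\beta$ and telescope $J(\hat{\pi}_T)-J(\hat{\pi}_\beta)=\sum_{t=1}^T[J(\hat{\pi}_t)-J(\hat{\pi}_{t-1})]$, applying the single-step argument to each consecutive pair $(\hat{\pi}_t,\hat{\pi}_{t-1})$ with the value function $\hat{Q}_{t-1}$ returned by the evaluation operator $\mathcal{E}$, and a union bound across the $T$ rounds, to obtain $\sum_{t=1}^T\zeta^{(t)}$. The chief difficulty, as noted, is the statistical layer: securing the correct $1/\sqrt{\mathcal{D}(s,a)}$ dependence for the evaluation error under the function-approximation class and propagating it cleanly through the performance-difference decomposition, precisely because the CFPI action itself depends on the same estimate $\hat{Q}_0$ whose error we are trying to bound.
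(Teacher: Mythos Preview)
Your high-level plan matches the paper's: apply the performance difference lemma, peel off a statistical layer (SARSA/FQI concentration) and a linearization layer ($C_{\mathrm{CFPI}}$), then telescope and union-bound for the multi-step case. The paper's proof is exactly this.

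There is one real discrepancy worth flagging. In the paper, $\bar{Q}^{\hat{\pi}_\beta}$ is defined as the first-order expansion of the \emph{estimate} $\hat{Q}^{\hat{\pi}_\beta}$, not of the true $Q^{\hat{\pi}_\beta}$; accordingly $C_{\mathrm{CFPI}}(s,a)=|\bar{Q}^{\hat{\pi}_\beta}(s,a)-\hat{Q}^{\hat{\pi}_\beta}(s,a)|$. The paper therefore peels in the order $Q\to\hat{Q}\to\bar{Q}$: first apply the uniform pointwise bound $|Q^{\hat{\pi}_\beta}-\hat{Q}^{\hat{\pi}_\beta}|\le C_{\gamma,\delta}/\sqrt{\mathcal{D}(s,a)}$ at both actions (this is where the factor $2$ comes from), then pass from $\hat{Q}$ to its own linearization $\bar{Q}$ at the cost of $C_{\mathrm{CFPI}}$. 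With this order two things you worried about disappear automatically: (i) $\hat{\pi}_1$ is by construction the exact maximizer of $\bar{Q}^{\hat{\pi}_\beta}$ over the trust region, so the leading term is the genuine improvement with no further slack; and (ii) no ``induced gradient error'' ever enters, because the linearization is of the very function whose gradient the algorithm used. Your inverted order---linearize the true $Q$ first, then control $Q-\hat{Q}_0$---is what creates the gradient-error complication you flag as the chief difficulty. The dependence of $\hat{\pi}_1$ on $\hat{Q}_0$ is likewise a non-issue once the concentration is made uniform over the finite $\mathcal{S}\times\mathcal{A}$ via the union bound, which is exactly what the discreteness assumption buys.
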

{By Theorem~\ref{thm}, $\hat{\pi}_1$ is a $\zeta$-safe improved policy. The $\zeta$ safeness consists of two parts: $C_{\mathrm{CFPI}}$ is caused by the first-order approximation, and the ${C_{\gamma,\delta}}/{\sqrt{\mathcal{D}(s,a)}}$ term is incurred by the SARSA update. Similarly, $\hat{\pi}_T$ is a $\sum_{t=1}^T\zeta^{(t)}$-safe improved policy.}

\section{Related Work}

\begin{table*}[t]
    \caption{Comparison between our one-step policy and SOTA methods on the Gym-MuJoCo domain of D4RL. {Our method uses the same $\tau$ for all datasets except Hopper-M-E} (detailed in Appendix~\ref{sec:cfpi-hp}). We report the mean and standard deviation of our method's performance across 10 seeds. Each seed contains an individual training process and evaluates the policy for 100 episodes. We use Cheetah for HalfCheetah, M for Medium, E for Expert, and R for Replay. We bold the best results for each task.}
    \centering
    \scalebox{0.9}{
        \begin{tabular}{l|cccccccc|c}
            \toprule
            Dataset & SG-BC & MG-BC & DT & TT & OnestepRL & TD3+BC & CQL & IQL & Our $\mathcal{I}_{\text{MG}}(\hat{\pi}_{\beta}, {\hat Q}_{0})$\\ 
            \midrule
            Cheetah-M-v2 & $40.6$ & $40.6$ & $42.6$ & $46.9$ & $\mathbf{55.6}$ & $48.3$ & $44.0$ & ${47.4}$ & ${52.1 \pm 0.3}$\\
            Hopper-M-v2 & $53.7$ & $53.9$ & ${67.6}$ & $61.1$ & $83.3$ & $59.3$ & $58.5$ & ${6 6.2}$ & $\mathbf{86.8 \pm 4.0}$\\
            Walker2d-M-v2 & $71.9$ & $70.0$ & $74.0$ & $79.8$ & ${85.6}$ & $83.7$ & $72.5$ & $78.3$ & $\mathbf{88.3 \pm 1.6}$\\
            \midrule

            Cheetah-M-R-v2 & $34.9$ & $33.0$ & $36.6$ & $41.9$ & $42.4$ & ${4 4 . 6}$ & $\mathbf{4 5 . 5}$ & ${4 4 . 2}$ & ${44.5 \pm 0.4}$\\
            Hopper-M-R-v2 & $12.4$ & $21.2$ & $82.7$ & $91.5$ & ${71.0}$ & $60.9$ & $\mathbf{9 5 . 0}$ & ${9 4 . 7}$ & ${93.6 \pm 7.9}$\\
            Walker2d-M-R-v2 & $22.9$ & $22.8$ & $66.6 $ & $\mathbf{82.6}$ & $71.6$ & $8 1 . 8$ & $77.2$ & $73.8$ & ${78.2 \pm 5.6}$\\
            \midrule

            Cheetah-M-E-v2 & $46.6$ & ${51.7}$ & $86.8$ & ${95.0}$ & ${93.5}$ & ${9 0 . 7}$ & ${9 1 . 6}$ & $86.7$ & $\mathbf{97.3 \pm 1.8}$\\
            Hopper-M-E-v2 & $53.9$ & $69.2$ & $\mathbf{107.6}$ & $101.9 $ & $102.1$ & ${9 8 . 0}$ & ${105.4}$ & $91.5$ & ${104.2 \pm 5.1}$\\
            Walker2d-M-E-v2 & ${92.3}$ & ${93.2}$ & ${1 0 8 . 1}$ & $110.0$ & $\mathbf{110.9}$ & ${1 1 0 . 1}$ & ${1 0 8 . 8}$ & ${1 0 9 . 6}$ & $\mathbf{111.9 \pm 0.3}$\\
            \midrule

            Total & $429.1$ & $455.6$ & ${672.6}$ & ${710.1}$ & ${716.0}$ & ${677.4}$ & ${698.5}$ & ${692.4}$ & $\mathbf{757.0 \pm 27.0}$\\
            \bottomrule
        \end{tabular}
    }
    \label{tab:main-results}
\end{table*}

Our methods belong and are motivated by the successful BCPO paradigm, which imposes constraints as in \eqref{eq:bcpo_obj} to prevent from selecting OOD actions. Algorithms from this paradigm may apply different divergence functions, e.g., KL-divergence~\citep{wu2019behavior,jaques2019way}, Rényi divergence~\cite{metelli2018policy,metelli2020importance}, MMD~\citep{kumar2019stabilizing} or the MSE~\citep{fujimoto2021minimalist}. All these methods perform policy improvement via SGD. Instead, we perform CFPI by solving a linear approximation of \eqref{eq:bcpo_obj}. Another line of research enforces the behavior constraint via parameterization. BCQ~\citep{fujimoto2019off} learns a generative model as the behavior policy and a $Q$ function to select the action from a set of perturbed behavior actions. \cite{ghasemipour2021emaq} further show that the perturbation model can be discarded.

The design of our CFPI operators is inspired by the SOTA online RL algorithm OAC~\citep{oac}, which treats the single Gaussian evaluation policy as the baseline $\pi_{b}$ and obtains an optimistic exploration policy by solving a similar optimization problem as \eqref{eq:sg-ref}. Since the underlying action distribution of an offline dataset often exhibits multimodality, we extend the result to accommodate a Gaussian Mixture $\pi_{b}$ and overcome additional optimization difficulties by leveraging Lemma \ref{leamma:ineq}. Importantly, we successfully incorporate our CFPI operators into an iterative algorithm (Sec. \ref{sec:benchmark}). When updating the critics, we construct the TD target with actions chosen by the policy improved by our CFPI operators. This is in stark contrast to OAC, as OAC only employs the exploration policy to collect new transitions from the environment and does not use the actions generated by the exploration policy to construct the TD target for the critic training. These key differences highlight the novelty of our proposed method. We defer additional comparisons between our methods and OAC to Appendix \ref{sec:compare-oac}. In Appendix \ref{sec:compare-prior-taylor}, we further draw connections with prior works that leveraged the Taylor expansion to RL.

Recently, one-step~\citep{kostrikov2021iql,brandfonbrener2021offline} algorithms have achieved great success. Instead of iteratively performing policy improvement and evaluation, {these methods only learn a $Q$ function via SARSA without bootstrapping from OOD action value.} These methods further apply an policy improvement operator~\citep{wu2019behavior,peng2019advantage} to extract a policy. {We also instantiate a one-step algorithm with our CFPI operator and evaluate on standard benchmarks.}

\section{Experiments}\label{sec:exp}

Our experiments aim to demonstrate the effectiveness of our CFPI operators. Firstly, on the standard offline RL benchmark D4RL~\citep{fu2020d4rl}, we show that {instantiating offline RL algorithms with our CFPI operators in both one-step and iterative manners outperforms SOTA methods (Sec. \ref{sec:benchmark})}. Secondly, we show that our operators can improve a policy learned by other algorithms (Sec. \ref{sec:improe-learned-policy}).
Ablation studies in Sec. \ref{sec:ablation} further show our superiority over the other policy improvement operators and demonstrate the benefit of modeling the behavior policy as a Gaussian Mixture.




\begin{table*}[t]
    \caption{{Comparison between our Iterative $\mathcal{I}_{\text{MG}}$ and SOTA methods on the AntMaze domain. We report the mean and standard deviation across 5 seeds for our method with each seed evaluating for 100 episodes. The performance for all baselines is directly reported from the IQL paper. Our Iterative $\mathcal{I}_{\text{MG}}$ outperforms all baselines on 5 out of 6 tasks and obtains the best overall performance.}}\label{tab:iterative-antmaze-main}
    \centering
    {
    \scalebox{0.9}{
    \begin{tabular}{l|cccccc|c}
        \toprule
        Dataset &BC &DT &Onestep RL &TD3+BC &CQL & IQL & Iterative $\mathcal{I}_{\text{MG}}$ \\
        \midrule
        antmaze-umaze-v0   & $54.6$ & $59.2$ & $64.3$ & $78.6$ & $74.0$ & $87.5$ & $\mathbf{90.2 \pm 3.9}$ \\
        antmaze-umaze-diverse-v0 & $45.6$ & $49.3$ & $60.7$ & $71.4$ & $\mathbf{84.0}$ & $62.2$ & $58.6 \pm 15.2$\\
        antmaze-medium-play-v0 & $0.0$ & $0.0$ & $0.3$ & $10.6$ & $61.2$ & $71.2$ & $\mathbf{75.2 \pm 6.9}$ \\
        antmaze-medium-diverse-v0 & $0.0$ & $0.7$ & $0.0$ & $3.0$ & $53.7$ & $70.0$ & $\mathbf{72.2 \pm 7.3}$ \\
        antmaze-large-play-v0 & $0.0$ & $0.0$ & $0.0$ & $0.2$ & $15.8$ & ${39.6}$ & $\mathbf{51.4 \pm 7.7}$ \\
        antmaze-large-diverse-v0 & $0.0$ & $1.0$ & $0.0$ & $0.0$ & $14.9$ & $47.5$ & $\mathbf{52.4 \pm 10.9}$ \\
        \midrule
        Total & $100.2$ & $112.2$ & $125.3$ & $163.8$ & $303.6$ & $378.0$ & $\mathbf{400.0 \pm 52.0}$ \\ 
        \bottomrule
    \end{tabular}
    }
    }
\end{table*}

\subsection{Comparison with SOTA offline RL algorithms}\label{sec:benchmark}
We instantiate a one-step offline RL algorithm from Algorithm \ref{alg} with our policy improvement operator $\mathcal{I}_{\text{MG}}$. We learned a Gaussian Mixture baseline policy $\hat{\pi}_{\beta}$ via behavior cloning. 
{We employed the IQN~\citep{dabney2018implicit} architecture to model the Q value network for its better generalizability, as we need to estimate out-of-buffer $Q(s, a)$ during policy deployment.} We trained the ${\hat Q}_{0}$ with SARSA algorithm~\citep{sutton2018reinforcement,ActorMimicParisotto2015}.
Appendix~\ref{sec:cfpi-hp} includes detailed training procedures of $\hat{\pi}_{\beta}$ and ${\hat Q}_{0}$ with full HP settings. We obtain our one-step policy as $\mathcal{I}_{\text{MG}}(\hat{\pi}_{\beta}, {\hat Q}_{0}; \tau)$.

We evaluate the effectiveness of our one-step algorithm on the D4RL benchmark focusing on the Gym-MuJoCo domain, which contains locomotion tasks with dense rewards. \autoref{tab:main-results} compares our one-step algorithm with SOTA methods, including the other one-step actor-critic methods IQL~\citep{kostrikov2021iql}, OneStepRL~\citep{brandfonbrener2021offline}, BCPO method TD3+BC~\citep{fujimoto2021minimalist}, conservative method CQL~\citep{kumar2020conservative}, and trajectory optimization methods DT~\citep{chen2021decision}, TT~\citep{janner2021sequence}. We also include the performance of two behavior policies SG-BC and MG-BC modeled with Single Gaussian and Gaussian Mixture, respectively. We directly report results for IQL, BCQ, TD3+BC, CQL, and DT from the IQL paper, and TT's result from its own paper. Note that OneStepRL instantiates three different algorithms.
We only report its (Rev. KL Reg) result because this algorithm follows the BCPO paradigm and achieves the best overall performance. We highlight that OnesteRL reports the results by tuning the HP for each dataset. 

Results in \autoref{tab:main-results} demonstrate that our one-step algorithm outperforms the other algorithms by a significant margin without training a policy to maximize its $Q$-value through SGD. {We note that we use the same $\tau$ for all datasets except Hopper-M-E. In Sec. \ref{sec:ablation}, we will perform ablation studies and provide a fair comparison between our CFPI operators and the other policy improvement operators.}

We further instantiate an iterative algorithm with $\mathcal{I}_{\text{MG}}$ and evaluate its effectiveness on the challenging AntMaze domain of D4RL. The 6 tasks from AntMaze are more challenging due to their sparse-reward nature and lack of optimal trajectories in the static datasets. \autoref{tab:iterative-antmaze-main} compares our Iterative $\mathcal{I}_{\text{MG}}$ with SOTA algorithms on the AntMaze domain. Our method uses the same set of HP for all 6 tasks, outperforming all baselines on 5 out of 6 tasks and obtaining the best overall performance. Appendix \ref{sec:iterative-pac-alg} presents additional details with training curves and pseudo-codes.

\subsection{Improvement over a learned policy}\label{sec:improe-learned-policy}

\begin{table}[h]
    \caption{Our $\mathcal{I}_{\text{SG}}(\pi_{\text{IQL}}, Q_{\text{IQL}})$ improves the IQL policy $\pi_{\text{IQL}}$ on AntMaze. We report the mean and standard deviation of 10 seeds. Each seed evaluates for 100 episodes.}
    \centering
    \scalebox{0.8}{
        \begin{tabular}{l|c|c|c}
            \toprule
            
            Dataset & $\pi_{\text{IQL}}$ (train) & $\pi_{\text{IQL}}$ (1M) & $\mathcal{I}_{\text{SG}}(\pi_{\text{IQL}}, Q_{\text{IQL}})$\\ 
            \midrule
            antmaze-u-v0 & $\mathbf{87.4 \pm 3.2}$ &$83.6 \pm3.2$&  $85.1 \pm 5.3$ \\
            antmaze-u-d-v0 &  $\mathbf{59.0 \pm 5.7}$ & $55.8 \pm7.9$ & $55.0 \pm 9.1$\\
            \midrule
            antmaze-m-p-v0 & ${71.1 \pm 5.43}$  & $64.2 \pm13.2$ & $\mathbf{75.5 \pm 6.1}$\\
            antmaze-m-d-v0 &  ${70.0 \pm 6.16}$ & $66.8 \pm9.4$ & $\mathbf{79.9 \pm 3.8}$\\
            \midrule
            antmaze-l-p-v0 & ${34.4 \pm 6.04}$&  $35.6 \pm7.0$ & $\mathbf{37.7 \pm 7.7}$\\
            antmaze-l-d-v0 & ${39.8 \pm 9.09}$ &$38.8 \pm7.1$ & $\mathbf{40.1 \pm 5.6}$\\
            \midrule

            Total & $361.7 \pm 35.6$ & ${344.7 \pm 47.8}$ & $\mathbf{373.3 \pm 37.5}$ \\
            \bottomrule
        \end{tabular}
    }
    \label{tab:iql-antmaze}
\end{table}


In this section, we show that our CFPI operator $\mathcal{I}_{\text{SG}}$ can further improve the performance of a Single Gaussian policy $\pi_{\text{IQL}}$ learned by IQL~\citep{kostrikov2021iql} on the AntMaze domain. We first obtain the IQL policy $\pi_{\text{IQL}}$ and $Q_{\text{IQL}}$ by training for 1M gradient steps using the PyTorch Implementation from RLkit~\citep{rlkit}. We emphasize that we follow the authors' exact training and evaluation protocols and include all training curves in Appendix~\ref{sec:iql-train-curves}. Interestingly, even though the average of the evaluation results during training matches the results reported in the IQL paper, \autoref{tab:iql-antmaze} shows that the evaluation of the final 1M-step policy $\pi_{\text{IQL}}$ does not match the reported performance on all 6 tasks. This demonstrates how drastically performance can fluctuate across just dozens of epochs, echoing the unstable performance of offline-trained policies highlighted by~\citealt{fujimoto2021minimalist}. Thanks to the tractability of $\mathcal{I}_{\text{SG}}$, we directly obtain an improved policy $\mathcal{I}_{\text{SG}}(\pi_{\text{IQL}}, Q_{\text{IQL}};\tau)$ that achieves better overall performance than both $\pi_{\text{IQL}}$ (train) and (1M), as shown in \autoref{tab:iql-antmaze}. We tune the HP $\tau$ {using a small set of seeds} for each task following the practice of~\citep{brandfonbrener2021offline,fu2020d4rl} and include more details in Appendix~\ref{sec:iql-hp} and~\ref{sec:iql-train-curves}.

\begin{table*}[t]
    \caption{Ablation studies of our Method on the Gym-MuJoCo domain. Again we report the mean and standard deviation of 10 seeds, and each seed evaluates for 100 episodes. Our $\mathcal{I}_{\text{MG}}$ outperforms baselines by a significant margin. At the same time, the SGD-based method Rev. KL Reg exhibits substantial performance variations, demonstrating the importance of a stable policy improvement operator.}
    \centering
    \scalebox{0.9}{
        \begin{tabular}{l|ccccccc|c}
            \toprule
            Dataset & SG-EBCQ  & MG-EBCQ  & SG-Rev. KL Reg & MG-Rev. KL Reg &  $\mathcal{I}_{\text{SG}}$ & $\mathcal{I}_{\text{MG}}$\\ 
            \midrule
            Cheetah-M-v2 & $\mathbf{53.3 \pm 0.2}$ & ${51.5 \pm 0.2}$ & ${47.1 \pm 0.2}$ & ${47.0 \pm 0.2}$ & ${51.1 \pm 0.1}$ & ${52.1 \pm 0.3}$\\
            Hopper-M-v2 & $\mathbf{86.8 \pm 5.2}$ & ${82.5 \pm 1.9}$ & ${70.3 \pm 7.0}$ & ${76.3 \pm 6.9}$ & ${75.6 \pm 3.7}$ & $\mathbf{86.8 \pm 4.0}$\\
            Walker2d-M-v2 & ${85.2 \pm 5.1}$ & ${85.2 \pm 2.1}$ & ${82.4 \pm 1.0}$ & ${82.8 \pm 1.8}$ & ${88.1 \pm 1.1}$ & $\mathbf{88.3 \pm 1.6}$\\
            \midrule
            Cheetah-M-R-v2 & ${43.5 \pm 0.6}$ & ${43.0 \pm 0.3}$ & ${44.3 \pm 0.4}$ & ${44.4 \pm 0.5}$ & ${42.8 \pm 0.4}$ & $\mathbf{44.5 \pm 0.4}$\\
            Hopper-M-R-v2 & ${88.5 \pm 12.2}$ & ${83.6 \pm 10.3}$ & $\mathbf{99.7 \pm 1.0}$ & ${99.4 \pm 2.1}$ & ${87.7 \pm 8.7}$ & ${93.6 \pm 7.9}$\\
            Walker2d-M-R-v2 & ${75.4 \pm 4.6}$ & ${73.1 \pm 5.2}$ & ${63.6 \pm 28.5}$ & ${69.7 \pm 30.9}$ & ${71.3 \pm 4.4}$ & $\mathbf{78.2 \pm 5.6}$\\
            \midrule
            Cheetah-M-E-v2 & ${81.8 \pm 5.4}$ & ${84.5 \pm 4.6}$ & ${78.9 \pm 9.8}$ & ${65.0 \pm 10.1}$ & ${91.1 \pm 3.1}$ & $\mathbf{97.3 \pm 1.8}$\\
            Hopper-M-E-v2 & ${40.0 \pm 5.8}$ & ${56.1 \pm 6.2}$ & ${76.6 \pm 18.3}$ & $\mathbf{79.4 \pm 32.6}$ & ${70.3 \pm 8.9}$ & ${73.0 \pm 10.5}$\\
            Walker2d-M-E-v2 & ${111.1 \pm 1.8}$ & ${111.1 \pm 1.0}$ & ${106.7 \pm 4.1}$ & ${107.1 \pm 4.0}$ & ${111.1 \pm 1.1}$ & $\mathbf{111.9 \pm 0.3}$\\
            \midrule

            Total & $665.5 \pm 41.0$ & $670.6 \pm 31.9$ & $669.7 \pm 70.3$ & $671.2 \pm 89.1$ & $688.9 \pm 31.6$ & $\mathbf{725.8 \pm 32.4}$\\

            \bottomrule
        \end{tabular}
    }
    \label{tab:ablate}
\end{table*}
\begin{figure*}[t]
     \centering
     \includegraphics[width=\textwidth]{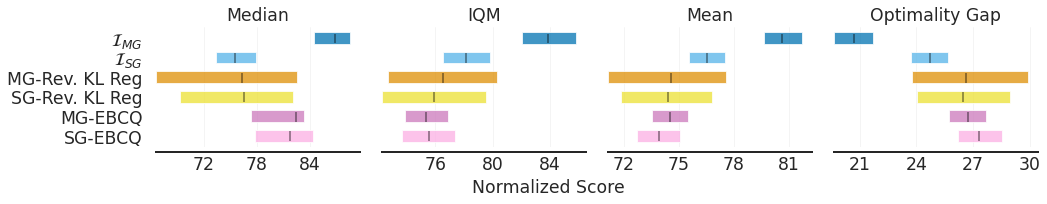}
    \caption{Aggregate metrics~\citep{agarwal2021deep} with 95\% CIs based on results reported in \autoref{tab:ablate}. The CIs are estimated using the percentile bootstrap with stratified sampling. Higher median, IQM, and mean scores, and lower Optimality Gap correspond to better performance. Our $\mathcal{I}_{\text{MG}}$ outperforms baselines by a significant margin based on all four metrics. Appendix \ref{sec:stat-sig} includes additional details.}
    \label{fig:reliable}
\end{figure*}

\subsection{Ablation studies}\label{sec:ablation}

We first provide a fair comparison with the other policy improvement operators, demonstrating the effectiveness of solving the approximated BCPO \eqref{eq:linear_prob} and modeling the behavior policy as a Gaussian Mixture. Additionally, {we examine the sensitivity on $\tau$, ablate the number of Gaussian components, and discuss the limitation by ablating the $Q$ network in Appendix \ref{sec:ablate-tau}, \ref{sec:ablate-num-gau}, \ref{sec:ablate-ensemble-size}, respectively.}




{{\bf Effectiveness of our CFPI operators.} In \autoref{tab:ablate}, we compare our CFPI operators with two policy improvement operators, namely, Easy BCQ (EBCQ) and Rev. KL Reg from OneStepRL~\citep{brandfonbrener2021offline}. EBCQ doe not require training either, returning a policy by selecting an action that maximizes a learned $\hat{Q}$ from $N_{\text{bcq}}$ actions randomly sampled from the behavior policy ${\hat\pi}_{\beta}$. Rev. KL Reg sets $D(\cdot, \cdot)$ in \eqref{eq:bcpo_obj} as the reverse KL divergence and solves the problem via SGD, with $\alpha$ controlling the regularization strength. We omit the comparison with the other learning-based operator Exp. Weight, as Rev. KL Reg achieves the best overall performance in OneStepRL.


For all methods, we present results with ${\hat\pi}_{\beta}$ modeled by Single Gaussian (SG-) and Gaussian Mixture (MG-). To ensure a fair comparison, we employ the same $\hat{Q}_0$ and ${\hat\pi}_{\beta}$ modeled and learned in the same way as in Sec. \ref{sec:benchmark} for all methods. Moreover, we tune $N_{\text{bcq}}$ for EBCQ, $\alpha$ for Rev. KL Reg, and $\tau$ for our methods. Each method uses the same set of HP for all datasets. As a result, the Hopper-M-E performance of $\mathcal{I}_{\text{MG}}$ reported in \autoref{tab:ablate} is different from \autoref{tab:main-results}. Appendix \ref{sec:cfpi-hp} includes details on the HP tuning and corresponding experiment results in \autoref{tab:mg-ebcq-hp}, \ref{tab:sg-ebcq-hp}, \ref{tab:mg-rev-kl-hp} and \ref{tab:sg-rev-kl-hp}. 

As is shown in \autoref{tab:ablate} and Fig. \ref{fig:reliable}, our $\mathcal{I}_{\text{MG}}$ clearly outperforms all baselines by a significant margin. The SGD-based method Rev. KL Reg exhibits substantial performance variations, highlighting the need for designing stable policy improvement operators in offline RL. Moreover, our CFPI operators outperform their EBCQ counterparts, demonstrating the effectiveness of solving the approximated BCPO.}

{\bf Effectiveness of Gaussian Mixture.}
As the three M-E datasets are collected by an expert and medium policy, we should recover the expert performance if we can 1) capture the two modes of the action distribution 2) and always select action from the expert mode. In other words, we can leverage the $\hat{Q}_0$ learned by SARSA to select actions from the mean of each Gaussian component, resulting in a mode selection algorithm (MG-MS) that selects its action by
\begin{equation}
    \begin{aligned}\label{eq:mode-selection}
        \mu_{\text{mode}} & = {\argmax}_{{\hat\mu}_i}  \quad \hat{Q}_0(s, \hat\mu_i), \\
        \text{s.t.} &\quad\{{\hat\mu}_i|\hat\lambda_i > \xi \},\,\, {\sum\nolimits_{i=1:N}}\hat\lambda_i \mathcal{N}(\hat\mu_i, \hat\Sigma_i) = {\hat \pi}_{\beta},
    \end{aligned}
\end{equation}
$\xi$ is set to filter out trivial components. Our MG-MS achieves an expert performance on Hopper-M-E ($104.2 \pm 5.1$) and Walker2d-M-E ($104.1 \pm 6.7$) and matches SOTA algorithms in Cheetah-M-E ($91.3 \pm 2.1$). Appendix \ref{sec:mg-ms-exp} includes full results of MG-MS on the Gym MuJoCo domain.

\section{Conclusion and Limitations}\label{sec:conclude}
Motivated by the behavior constraint in the BCPO paradigm, we propose CFPI operators that perform policy improvement by solving an approximated BCPO in closed form. As practical datasets are usually generated by heterogeneous policies, we use the Gaussian Mixture to model the data-generating policies and overcome extra optimization difficulties by leveraging the LogSumExp's LB and Jensen's Inequality. We instantiate both one-step and iterative offline RL algorithms with our CFPI operator and show that they can outperform SOTA algorithms on the D4RL benchmark. 

{Our CFPI operators avoid the training instability incurred by policy improvement through SGD}. However, our method still requires learning a good $Q$ function. Specifically, our operators rely on the gradient information provided by the $Q$, and its accuracy largely impacts the effectiveness of our policy improvement. Therefore, one promising future direction for this work is to investigate ways to robustify the policy improvement given a noisy $Q$.

\section*{Acknowledgement}\label{sec:ack}
Ming Yin and Yu-Xiang Wang are gratefully supported by National Science Foundation (NSF) Awards \#2007117 and \#2003257.

\bibliography{example_paper}
\bibliographystyle{icml2023}

\newpage
\appendix
\onecolumn
\begin{center}
	{\LARGE {Appendix}}
\end{center}

\section*{Outline of the Appendix } 
\begin{itemize}
    \item \autoref{sec:proof} presents the missing proofs for Proposition \ref{prop:sg-close}, \ref{prop:mg-max-close}, \ref{prop:mg-jensen-close} and \autoref{thm} in the main paper.
    \item \autoref{sec:line-search-mg} justify one HP setting for \autoref{eq:mg-sol}.
    \item \autoref{sec:alg-ms-iter} discusses how to instantiate multi-step and iterative algorithms from our algorithm template Algorithm \ref{alg}.
    \item {\autoref{sec:cfpi-det} provides the derivation of a new CFPI operator that can work with both deterministic and VAE policy.}
    \item {\autoref{sec:stat-sig} conducts a reliable evaluation to demonstrate the statistical significance of our methods and address statistical uncertainty.}
    \item \autoref{sec:hp} gives experiment details, HP settings and corresponding experiment results.
    \item \autoref{sec:add-exp} provides additional ablation studies and experiment results.
    \item {\autoref{sec:add-related-work} includes additional related work and discusses the relationship between our method and prior literature that leverage the Taylor expansion approach.
}
    
\end{itemize}
Our experiments are conducted on various types of 8GPUs machines. Different machines may have different GPU types, such as NVIDIA GA100 and TU102. Training a behavior policy for 500K gradient steps takes around 40 minutes, while training a $Q$ network for 500K gradient steps takes around 50 minutes. 

\newpage
\section{Proofs and Theoretical Results}\label{sec:proof}
\subsection{Proof of \autoref{prop:sg-close}}\label{sec:proof-sg}
\begin{customprop}{3.1}
The optimization problem \eqref{eq:sg_orig_prob} has a closed-form solution that is given by

\begin{equation}
    \mu_{\text{sg}}(\tau) = \mu_{\beta} + \frac{\sqrt{2\log\tau}\,\Sigma_\beta\left[\nabla_{a} {Q}(s, a)\right]_{a=\mu_\beta}}{\left\|\left[\nabla_{a}{Q}(s, a)\right]_{a=\mu_\beta}\right\|_{\Sigma_\beta}}, \,\,\,\text{where}\,\,\, \delta = \frac{1}{2} \log\det(2\pi \Sigma_\beta) + \log\tau
\end{equation}
\end{customprop}
\begin{proof} 
The optimization problem \eqref{eq:sg_orig_prob} can be converted into the QCLP
\begin{equation}\label{eq:sg-qclp}
        \max_{\mu}\quad {\mu}^T \left[\nabla_a Q(s, a)\right]_{a=\mu_\beta},
        \quad \text{s.t.} \quad
        \frac{1}{2} (\mu - \mu_\beta)^T{\Sigma^{-1}_\beta} (\mu - \mu_\beta) \leq \delta - \frac{1}{2} \log\det(2\pi \Sigma_\beta)
\end{equation}
Following a similar procedure as is in OAC~\citep{oac}, we first derive the Lagrangian below:
\begin{equation}
        L = {\mu}^T \left[\nabla_a Q(s, a)\right]_{a=\mu_\beta} - 
        \eta\left(\frac{1}{2} (\mu - \mu_\beta)^T{\Sigma^{-1}_\beta} (\mu - \mu_\beta) - \delta + \frac{1}{2} \log\det(2\pi \Sigma_\beta)\right)
\end{equation}
Taking the derivatives w.r.t $\mu$, we get
\begin{equation}
        \nabla_{\mu} L = \left[\nabla_a Q(s, a)\right]_{a=\mu_\beta} - 
        \eta{\Sigma^{-1}_\beta} (\mu - \mu_\beta)
\end{equation}
By setting $\nabla_{\mu} L = 0$, we get
\begin{equation}\label{eq:sg-init-sol}
        \mu= \mu_{\beta} + \frac{1}{\eta}\Sigma_\beta \left[\nabla_a Q(s, a)\right]_{a=\mu_\beta}
\end{equation}
To satisfy the the KKT conditions~\citep{boyd2004convex}, we have $\eta>0$ and
\begin{equation}\label{eq:sg-kkt}
   (\mu - \mu_\beta)^T{\Sigma^{-1}_\beta} (\mu - \mu_\beta) = 2\delta - \log\det(2\pi \Sigma_\beta)
\end{equation}
Finally with \eqref{eq:sg-init-sol} and \eqref{eq:sg-kkt}, we get
\begin{equation}\label{eq:larg-coef}
   \eta = \sqrt{\frac{\left[\nabla_{a} {Q}(s, a)\right]^{T}_{a=\mu_\beta} \Sigma_\beta \left[\nabla_{a} {Q}(s, a)\right]_{a=\mu_\beta}}{2\delta - \log\det(2\pi \Sigma_\beta)}}
\end{equation}
By setting $\delta = \frac{1}{2} \log\det(2\pi \Sigma_\beta) + \log\tau$ and plugging \eqref{eq:larg-coef} to \eqref{eq:sg-init-sol}, we obtain the final solution as
\begin{equation}
    \begin{aligned}
        \mu_{\text{sg}}(\tau) & = \mu_{\beta} + \frac{\sqrt{2\log\tau}\,\Sigma_\beta\left[\nabla_{a} {Q}(s, a)\right]_{a=\mu_\beta}}{\sqrt{\left[\nabla_{a} {Q}(s, a)\right]^{T}_{a=\mu_\beta} \Sigma_\beta \left[\nabla_{a} {Q}(s, a)\right]_{a=\mu_\beta}}},\\
        & =  \mu_{\beta} + \frac{\sqrt{2\log\tau}\,\Sigma_\beta\left[\nabla_{a} {Q}(s, a)\right]_{a=\mu_\beta}}{\left\|\left[\nabla_{a}{Q}(s, a)\right]_{a=\mu_\beta}\right\|_{\Sigma_\beta}}
    \end{aligned}
\end{equation}
which completes the proof.
\end{proof}
\newpage
\subsection{Proof of \autoref{prop:mg-max-close}}\label{sec:proof-mg-max}
\begin{customprop}{3.2}
By {applying the first inequality of Lemma \ref{leamma:ineq} to the constraint of \eqref{eq:mg_orig_prob}}, we can derive an optimization problem that lower bounds \eqref{eq:mg_orig_prob}
\begin{equation}
    \begin{aligned}\label{eq:app-max-orgi-prob}
            \max_{\mu} & \quad {\mu}^T \left[\nabla_a Q(s, a)\right]_{a=a_\beta} \\
            \text{s.t.} & \quad 
            \max_{i}\left\{ -\frac{1}{2} (\mu - \mu_i)^T{\Sigma^{-1}_i} (\mu - \mu_i) -\frac{1}{2} \log\det(2\pi \Sigma_i) + \log\lambda_i\right\} \geq -\delta,
    \end{aligned}
\end{equation}
and the closed-form solution to \eqref{eq:mg_max_prob} is given by
\begin{equation}
    \begin{aligned}
    \mu_{\text{lse}}(\tau) & = \argmax_{{\overline\mu}_i(\delta)}\quad {{\overline\mu}_i}^T \left[\nabla_a Q(s, a)\right]_{a=\mu_i}, \quad\text{s.t.}\quad \delta = \frac{1}{2}\min_i\left\{{\log\lambda_{i}\det(2\pi \Sigma_i)}\right\} + \log\tau\\
    \text{where}\quad & {\overline\mu}_i(\delta) = \mu_{i} + \frac{\kappa_i\Sigma_i\left[\nabla_{a} {Q}(s, a)\right]_{a=\mu_i}}{\left\|\left[\nabla_{a}{Q}(s, a)\right]_{a=\mu_i}\right\|_{\Sigma_i}},  \quad \text{and} \quad \kappa_i = \sqrt{2 (\delta + \log\lambda_i) - \log\det(2\pi \Sigma_i)}.
    \end{aligned}
\end{equation}
\end{customprop}

\begin{proof}
Recall that the Gaussian Mixture behavior policy is constructed by
\begin{equation}
    \pi_\beta = \sum^N_{i=1}\lambda_i \mathcal{N}(\mu_i, \Sigma_i),
\end{equation}
We first divide the optimization problem \eqref{eq:app-max-orgi-prob} into $N$ sub-problems, with each sub-problem $i$ given by
\begin{equation}
    \begin{aligned}\label{eq:app-max-sub-prob}
            & \max_{\mu}\quad {\mu}^T \left[\nabla_a Q(s, a)\right]_{a=a_\beta} \\
            \quad \text{s.t.} & \quad 
             -\frac{1}{2} (\mu - \mu_i)^T{\Sigma^{-1}_i} (\mu - \mu_i) -\frac{1}{2} \log\det(2\pi \Sigma_i) + \log\lambda_i \geq -\delta,
    \end{aligned}
\end{equation}
which is equivalent to solving problem \eqref{eq:sg_orig_prob} for each Gaussian component with an additional constant term $\log\lambda_i$, and thus has a \emph{unique} closed-form solution.

Define the maximizer for each sub-problem $i$ as ${\overline\mu}_i(\delta)$, though ${\overline\mu}_i(\delta)$ does not always exist. Whenever $-\frac{1}{2} \log\det(2\pi \Sigma_i) + \log\lambda_i < -\delta$, there will be no $\mu$ satisfying the constraint as $\frac{1}{2} (\mu - \mu_i)^T{\Sigma^{-1}_i} (\mu - \mu_i)$ is always greater than 0. We thus set ${\overline\mu}_i(\delta)$ to be \textit{None} in this case. Next, we will show that there does not exist any $\breve{\mu}\notin\{{\overline\mu}_i(\delta)|i=1\ldots N\}$, s.t., $\breve{\mu}$ is the maximizer of \eqref{eq:app-max-orgi-prob}. We can show this by contradiction. Suppose there exists a $\breve{\mu}\notin\{{\overline\mu}_i(\delta)|i=1\ldots N\}$ maximizing \eqref{eq:app-max-orgi-prob}, there exists at least one $j\in \{1, \ldots, N\}$ s.t.
\begin{equation}
     -\frac{1}{2} (\breve{\mu} - \mu_j)^T{\Sigma^{-1}_j} (\breve{\mu} - \mu_j) -\frac{1}{2} \log\det(2\pi \Sigma_j) + \log\lambda_j \geq -\delta.
\end{equation}
Since $\breve{\mu}$ is the maximizer of \eqref{eq:app-max-orgi-prob}, it should also be maximizer of the sub-problem $j$. However, the maximizer for sub-problem $j$ is given by ${\overline\mu}_j(\delta) \neq \breve{\mu}$, contradicting with the fact that $\breve{\mu}$ is the maximizer of the sub-problem $j$. Therefore, the optimal solution to \eqref{eq:app-max-orgi-prob} has to be given by
\begin{equation}
    {\argmax}_{{\overline\mu}_i}\quad {{\overline\mu}_i}^T \left[\nabla_a Q(s, a)\right]_{a=a_{\beta}}\quad\text{where}\quad {\overline\mu}_i \in \{{\overline\mu}_i(\delta)|i=1\ldots N\}
\end{equation}
To solve each sub-problem $i$, it is natural to set $a_{\beta} = \mu_i$, which reformulate the sub-problem $i$ as below
\begin{equation}\label{eq:app-max-sub-qclp}
    \begin{aligned}
            & \max_{\mu}\quad {\mu}^T \left[\nabla_a Q(s, a)\right]_{a=\mu_i} \\
            \quad \text{s.t.} & \quad 
             \frac{1}{2} (\mu - \mu_i)^T{\Sigma^{-1}_i} (\mu - \mu_i) \leq \delta -\frac{1}{2} \log\det(2\pi \Sigma_i) + \log\lambda_i,
    \end{aligned}
\end{equation}
Note that problem \eqref{eq:app-max-sub-qclp} is also a QCLP similar to the problem \eqref{eq:sg_orig_prob}. Therefore, we can derive its solution by following similar procedures as in Appendix \ref{sec:proof-sg}, resulting in
\begin{equation}
    {\overline\mu}_i(\delta) = \mu_{i} + \frac{\kappa_i\Sigma_i\left[\nabla_{a} {Q}(s, a)\right]_{a=\mu_i}}{\left\|\left[\nabla_{a}{Q}(s, a)\right]_{a=\mu_i}\right\|_{\Sigma_i}}, \quad\text{where}\quad \kappa_i = \sqrt{2 (\delta + \log\lambda_i) - \log\det(2\pi \Sigma_i)}.
\end{equation}
We complete the proof by further setting $\delta = \frac{1}{2}\min_i\left\{{\log\lambda_{i}\det(2\pi \Sigma_i)}\right\} + \log\tau$.
\end{proof}
\newpage
\subsection{Proof of \autoref{prop:mg-jensen-close}}\label{sec:proof-mg-jensen}
\begin{customprop}{3.3}
By {applying the second inequality of Lemma \ref{leamma:ineq} to the constraint of \eqref{eq:mg_orig_prob}}, we can derive an optimization problem that lower bounds \eqref{eq:mg_orig_prob}
\begin{equation}
    \begin{aligned}\label{eq:app-mg-jensen-ori-prob}
        \max_{\mu} &\quad {\mu}^T \left[\nabla_a Q(s, a)\right]_{a=a_\beta} \\
        \text{s.t.} & \quad 
        \sum^N_{i=1} \lambda_i\left(-{\frac{1}{2}}\log\det(2\pi \Sigma_i) -\frac{1}{2} (\mu - \mu_i)^T{\Sigma^{-1}_i} (\mu - \mu_i)\right) \geq -\delta 
    \end{aligned}
\end{equation}
and the closed-form solution to \eqref{eq:mg_jensen_prob} is given by
\begin{equation}
    \begin{aligned}
    \mu_{\text{jensen}}(\tau) & = \overline{\mu} + 
    \frac{\kappa_i\overline{\Sigma}{\left[\nabla_a Q(s, a) \right]_{a={\overline\mu}}}}
    {\left\|\left[\nabla_{a}{Q}(s, a)\right]_{a={\overline\mu}}\right\|_{\overline{\Sigma}}}, \quad \text{where} \quad \kappa_i = \sqrt{2\log\tau
    -\sum^N_{i=1}{\lambda_i\mu_i^T}\Sigma_i^{-1}\mu_i + {\overline\mu}^T \overline{\Sigma}^{-1} {\overline\mu}}\,\, , \\
     \overline{\Sigma} & = \left(\sum^N_{i=1} \lambda_i\Sigma^{-1}_i\right)^{-1}, \quad \overline\mu = \overline{\Sigma} \left(\sum^N_{i=1} \lambda_i \Sigma_i^{-1} \mu_i\right), \quad \delta = \log\tau + \frac{1}{2} \sum^N_{i=1}\lambda_i\log\det(2\pi\Sigma_i)
    \end{aligned}
\end{equation}

\begin{proof}
Note that problem \eqref{eq:app-mg-jensen-ori-prob} is also a QCLP. Before deciding the value of $a_{\beta}$, we first derive its Lagrangian with a general $a_{\beta}$ below
\begin{equation}
    L = {\mu}^T \left[\nabla_a Q(s, a)\right]_{a=a_\beta} - 
    \eta\left(\sum^N_{i=1} \lambda_i\left({\frac{1}{2}}\log\det(2\pi \Sigma_i) + \frac{1}{2} (\mu - \mu_i)^T{\Sigma^{-1}_i} (\mu - \mu_i)\right) - \delta\right)
\end{equation}
Taking the derivatives w.r.t $\mu$, we get
\begin{equation}
        \nabla_{\mu} L = \left[\nabla_a Q(s, a)\right]_{a=a_\beta} - 
        \eta\left(\sum^N_{i=1} \lambda_i\left({\Sigma^{-1}_i} (\mu - \mu_i)\right)\right)
\end{equation}
By setting $\nabla_{\mu} L = 0$, we get
\begin{equation}
    \begin{aligned}\label{eq:mg-jensens-init-sol}
        \mu & = \left(\sum^N_{i=1} \lambda_i\Sigma^{-1}_i\right)^{-1} \left(\sum^N_{i=1} \lambda_i \Sigma_i^{-1} \mu_i\right) + \frac{1}{\eta}\left(\sum^N_{i=1} \lambda_i\Sigma^{-1}_i\right)^{-1}\left[\nabla_a Q(s, a)\right]_{a=a_\beta} \\
        & = {\overline\mu} + \frac{1}{\eta} \overline{\Sigma}\left[\nabla_a Q(s, a)\right]_{a=a_\beta}, \\
    \text{where } \quad &  \overline{\Sigma} = \left(\sum^N_{i=1} \lambda_i\Sigma^{-1}_i\right)^{-1}, \quad \overline\mu = \overline{\Sigma} \left(\sum^N_{i=1} \lambda_i \Sigma_i^{-1} \mu_i\right),
    \end{aligned}
\end{equation}
\autoref{eq:mg-jensens-init-sol} shows that the final solution to the problem \eqref{eq:app-mg-jensen-ori-prob} will be a shift from the pseudo-mean $\overline\mu$. Therefore, setting $a_{\beta} = \overline\mu$ becomes a natural choice.

Furthermore, by satisfying the KKT conditions, we have $\eta>0$ and
\begin{equation}\label{eq:mg-jensen-kkt}
   \sum^N_{i=1} \lambda_i\left(\mu - \mu_i\right)^T{\Sigma^{-1}_i} \left(\mu - \mu_i\right) = 2\delta - \sum^N_{i=1} \lambda_i\log\det(2\pi \Sigma_i)
\end{equation}
Plugging \eqref{eq:app-mg-jensen-ori-prob} into \eqref{eq:mg-jensen-kkt} gives the equation below
\begin{equation}
    \begin{aligned}\label{eq:mg-jensen-lag-coef-init}
           \sum^N_{i=1} & \lambda_i\left({\overline\mu} + \frac{1}{\eta} \overline{\Sigma}\left[\nabla_a Q(s, a)\right]_{a=\overline\mu} - \mu_i\right)^T{\Sigma^{-1}_i} \left({\overline\mu} + \frac{1}{\eta} \overline{\Sigma}\left[\nabla_a Q(s, a)\right]_{a=\overline\mu} - \mu_i\right) \\ 
           & = 2\delta - \sum^N_{i=1} \lambda_i\log\det(2\pi \Sigma_i).
    \end{aligned}
\end{equation}
The LHS of \eqref{eq:mg-jensen-lag-coef-init} can be reformulated as 
\begin{equation}
    \begin{aligned}\label{eq:mg-jensen-lag-coef-middle}
    & \sum^N_{i=1} \lambda_i\left({\overline\mu} + \frac{1}{\eta} \overline{\Sigma}\left[\nabla_a Q(s, a)\right]_{a=\overline\mu} - \mu_i\right)^T{\Sigma^{-1}_i} \left({\overline\mu} + \frac{1}{\eta} \overline{\Sigma}\left[\nabla_a Q(s, a)\right]_{a=\overline\mu} - \mu_i\right) \\
    = \quad &  \frac{1}{\eta^2}\sum^N_{i=1} \lambda_i \left(\overline{\Sigma}\left[\nabla_a Q(s, a)\right]_{a=\overline\mu}\right)^T {\Sigma^{-1}_i} \left(\overline{\Sigma}\left[\nabla_a Q(s, a)\right]_{a=\overline\mu}\right) \\
    \quad \quad & + \frac{2}{\eta}\sum^N_{i=1} \lambda_i \left(\overline{\Sigma}\left[\nabla_a Q(s, a)\right]_{a=\overline\mu}\right)^T {\Sigma^{-1}_i} \Big({\overline\mu} - \mu_i\Big) \\
    \quad \quad & + \sum^N_{i=1} \lambda_i \left({\overline\mu} - \mu_i\right)^T {\Sigma^{-1}_i} \left({\overline\mu} - \mu_i\right) \\
    \end{aligned}.
\end{equation}
We note that the second line of \eqref{eq:mg-jensen-lag-coef-middle}'s RHS can be reduced to
\begin{equation}
    \begin{aligned}
        & \frac{2}{\eta}\sum^N_{i=1} \lambda_i \left(\overline{\Sigma}\left[\nabla_a Q(s, a)\right]_{a=\overline\mu}\right)^T {\Sigma^{-1}_i} \Big({\overline\mu} - \mu_i\Big) \\
        = \quad & \frac{2}{\eta}\left(\overline{\Sigma}\left[\nabla_a Q(s, a)\right]_{a=\overline\mu}\right)^T \left(\left(\sum^N_{i=1} \lambda_i  {\Sigma^{-1}_i}\right) {\overline\mu} - \sum^N_{i=1} \lambda_i  {\Sigma^{-1}_i} {\mu_i}\right) \\
        = \quad & \frac{2}{\eta}\left(\overline{\Sigma}\left[\nabla_a Q(s, a)\right]_{a=\overline\mu}\right)^T \left(\overline{\Sigma}^{-1} {\overline\mu} - \overline{\Sigma}^{-1} \left(\overline{\Sigma}\sum^N_{i=1} \lambda_i  {\Sigma^{-1}_i} {\mu_i}\right)\right) \\
        = \quad & \frac{2}{\eta}\left(\overline{\Sigma}\left[\nabla_a Q(s, a)\right]_{a=\overline\mu}\right)^T \left(\overline{\Sigma}^{-1} {\overline\mu} - \overline{\Sigma}^{-1} \overline\mu\right) \\
        = \quad & 0
    \end{aligned}.
\end{equation}
Therefore, \eqref{eq:mg-jensen-lag-coef-middle} can be further reformulated as
\begin{equation}
    \begin{aligned}
    & \sum^N_{i=1} \lambda_i\left({\overline\mu} + \frac{1}{\eta} \overline{\Sigma}\left[\nabla_a Q(s, a)\right]_{a=\overline\mu} - \mu_i\right)^T{\Sigma^{-1}_i} \left({\overline\mu} + \frac{1}{\eta} \overline{\Sigma}\left[\nabla_a Q(s, a)\right]_{a=\overline\mu} - \mu_i\right) \\
    = \quad &  \frac{1}{\eta^2}\sum^N_{i=1} \lambda_i \left(\overline{\Sigma}\left[\nabla_a Q(s, a)\right]_{a=\overline\mu}\right)^T {\Sigma^{-1}_i} \left(\overline{\Sigma}\left[\nabla_a Q(s, a)\right]_{a=\overline\mu}\right) \\
    \quad \quad & + \sum^N_{i=1} \lambda_i \left({\overline\mu} - \mu_i\right)^T {\Sigma^{-1}_i} \left({\overline\mu} - \mu_i\right) \\
    = \quad &  \frac{1}{\eta^2}\left(\overline{\Sigma}\left[\nabla_a Q(s, a)\right]_{a=\overline\mu}\right)^T \left(\sum^N_{i=1} \lambda_i{\Sigma^{-1}_i}\right) \left(\overline{\Sigma}\left[\nabla_a Q(s, a)\right]_{a=\overline\mu}\right) \\
    \quad \quad & + \sum^N_{i=1} \lambda_i \left({\overline\mu} - \mu_i\right)^T {\Sigma^{-1}_i} \left({\overline\mu} - \mu_i\right) \\
    = \quad &  \frac{1}{\eta^2}\left(\overline{\Sigma}\left[\nabla_a Q(s, a)\right]_{a=\overline\mu}\right)^T \overline{\Sigma}^{-1} \left(\overline{\Sigma}\left[\nabla_a Q(s, a)\right]_{a=\overline\mu}\right) \\
    \quad \quad & + \sum^N_{i=1} \lambda_i \left({\overline\mu} - \mu_i\right)^T {\Sigma^{-1}_i} \left({\overline\mu} - \mu_i\right) \\
    = \quad &  \frac{1}{\eta^2}\left[\nabla_a Q(s, a)\right]_{a=\overline\mu}^T \overline{\Sigma}\left[\nabla_a Q(s, a)\right]_{a=\overline\mu} + \sum^N_{i=1} \lambda_i \left({\overline\mu} - \mu_i\right)^T {\Sigma^{-1}_i} \left({\overline\mu} - \mu_i\right)
    \end{aligned}.
\end{equation}
To this point, \eqref{eq:mg-jensen-lag-coef-init} can be reformulated as 
\begin{equation}
    \begin{aligned}
         & \frac{1}{\eta^2}\left[\nabla_a Q(s, a)\right]_{a=\overline\mu}^T \overline{\Sigma}\left[\nabla_a Q(s, a)\right]_{a=\overline\mu} + \sum^N_{i=1} \lambda_i \left({\overline\mu} - \mu_i\right)^T {\Sigma^{-1}_i} \left({\overline\mu} - \mu_i\right)\\
         = \quad & 2\delta - \sum^N_{i=1} \lambda_i\log\det(2\pi \Sigma_i)
    \end{aligned}
\end{equation}
We can thus express $\eta$ as below
\begin{equation}
    \begin{aligned}
        \eta = \sqrt{\frac{\left[\nabla_a Q(s, a)\right]_{a=\overline\mu}^T \overline{\Sigma}\left[\nabla_a Q(s, a)\right]_{a=\overline\mu}}{2\delta - \sum^N_{i=1} \lambda_i\log\det(2\pi \Sigma_i) - \sum^N_{i=1} \lambda_i \left({\overline\mu} - \mu_i\right)^T {\Sigma^{-1}_i} \left({\overline\mu} - \mu_i\right)}}
    \end{aligned}
\end{equation}
By setting $\delta = \frac{1}{2}\sum^N_{i=1} \lambda_i\log\det(2\pi \Sigma_i) + \log\tau$, we have
\begin{equation}
    \begin{aligned}\label{eq:app-mg-jensen-lag-coef-final}
        \eta & = \sqrt{\frac{\left[\nabla_a Q(s, a)\right]_{a=\overline\mu}^T \overline{\Sigma}\left[\nabla_a Q(s, a)\right]_{a=\overline\mu}}{2\log\tau - \sum^N_{i=1} \lambda_i \left({\overline\mu} - \mu_i\right)^T {\Sigma^{-1}_i} \left({\overline\mu} - \mu_i\right)}}\\
        & = \sqrt{\frac{\left[\nabla_a Q(s, a)\right]_{a=\overline\mu}^T \overline{\Sigma}\left[\nabla_a Q(s, a)\right]_{a=\overline\mu}}{2\log\tau - \sum^N_{i=1} \lambda_i {\overline\mu}^T {\Sigma^{-1}_i}{\overline\mu} + 2{\overline\mu}^T\sum^N_{i=1} \lambda_i {\Sigma^{-1}_i}{\mu_i} - \sum^N_{i=1} \lambda_i {\mu^T_i} {\Sigma^{-1}_i}{\mu_i}}} \\
        & = \sqrt{\frac{\left[\nabla_a Q(s, a)\right]_{a=\overline\mu}^T \overline{\Sigma}\left[\nabla_a Q(s, a)\right]_{a=\overline\mu}}{2\log\tau - \sum^N_{i=1} {\overline\mu}^T {\overline{\Sigma}^{-1}}{\overline\mu} + 2{\overline\mu}^T{\overline{\Sigma}^{-1}}{\overline\mu} - \sum^N_{i=1} \lambda_i {\mu^T_i} {\Sigma^{-1}_i}{\mu_i}}} \\
        & = \sqrt{\frac{\left[\nabla_a Q(s, a)\right]_{a=\overline\mu}^T \overline{\Sigma}\left[\nabla_a Q(s, a)\right]_{a=\overline\mu}}{2\log\tau + {\overline\mu}^T{\overline{\Sigma}^{-1}}{\overline\mu} - \sum^N_{i=1} \lambda_i {\mu^T_i} {\Sigma^{-1}_i}{\mu_i}}}
    \end{aligned}.
\end{equation}
Finally, plugging \eqref{eq:app-mg-jensen-lag-coef-final} into \eqref{eq:mg-jensens-init-sol}, with $a_{\beta} = \overline\mu$, we have
\begin{equation}
    \begin{aligned}
        \mu_{\text{jensen}}(\tau) & =  \overline\mu + \sqrt{\frac{2\log\tau
    -\sum^N_{i=1}{\lambda_i\mu_i^T}\Sigma_i^{-1}\mu_i + {\overline\mu}^T \overline{\Sigma}^{-1} {\overline\mu}}{{ \left[\nabla_a Q(s, a) \right]^T_{a={\overline\mu}}}\overline{\Sigma}{ \left[\nabla_a Q(s, a) \right]_{a={\overline\mu}}}}}\,\,\overline{\Sigma}{ \left[\nabla_a Q(s, a) \right]_{a={\overline\mu}}}\\
    & = \overline{\mu} + 
    \frac{\kappa_i\overline{\Sigma}{\left[\nabla_a Q(s, a) \right]_{a={\overline\mu}}}}
    {\left\|\left[\nabla_{a}{Q}(s, a)\right]_{a={\overline\mu}}\right\|_{\overline{\Sigma}}}, \quad \text{where} \quad \kappa_i = \sqrt{2\log\tau
    -\sum^N_{i=1}{\lambda_i\mu_i^T}\Sigma_i^{-1}\mu_i + {\overline\mu}^T \overline{\Sigma}^{-1} {\overline\mu}}\,\, ,
    \end{aligned}
\end{equation}
which completes the proof.
\end{proof}

\end{customprop}
\newpage

\subsection{Proof of \autoref{thm}}\label{sec:app_thm}
In this section, we prove the \emph{safe policy improvement} presented in Section~\ref{sec:alg}. Algorithm~\ref{alg} follows the \emph{approximate policy iteration} (API) \citep{perkins2002convergent} by iterating over the policy evaluation ($\mathcal{E}$ step, Line 4) and policy improvement ($\mathcal{I}$ step, Line 5). Therefore, to verify $\mathcal{E}$ provides the improvement, we need to first show policy evaluation $\hat{Q}_t$ is accurate. In particular, we focus on the SARSA updates (Line 2), which is a form of on-policy Fitted Q-Iteration \citep{sutton2018reinforcement}. Fortunately, it is known that FQI is statistically efficient (\emph{e.g.} \citep{chen2019information}) under the mild condition for the function approximation class. Its linear counterpart, least-square value iteration, 
is also shown to be efficient for offline reinforcement learning \citep{jin2021pessimism,yin2022near}.
Recently, \citep{zou2019finite} shows the finite sample convergence guarantee for SARSA under the standard the mean square error loss. 

Next, to show the performance improvement, we leverage the performance difference lemma to show our algorithm achieves the desired goal.
\begin{lemma}[Performance Difference Lemma]\label{lem:PDL}
For any policy $\pi,\pi'$, it holds that
\[
J(\pi)-J(\pi')=\frac{1}{1-\gamma}\mathbb{E}_{s\sim d^\pi}\left[\mathbb{E}_{a\sim\pi(\cdot|s)} A^{\pi'}(s,a)\right],
\]
where $A^\pi(s,a)=Q^\pi(s,a)-V^\pi(s)$ is the advantage function.
\end{lemma}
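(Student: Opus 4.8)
The plan is to establish the identity through a telescoping argument along trajectories generated by $\pi$. First I would fix the trajectory distribution induced by rolling out $\pi$ from $s_0\sim\rho_0$, so that $J(\pi)=\mathbb{E}_{\tau\sim\pi}[\sum_{t=0}^\infty \gamma^t r(s_t,a_t)]$, and recall that $J(\pi')=\mathbb{E}_{s_0\sim\rho_0}[V^{\pi'}(s_0)]$ since the two objectives share the same initial-state distribution $\rho_0$. The key maneuver is to insert a telescoping sum of the \emph{other} policy's value function $V^{\pi'}$ evaluated along the \emph{same} $\pi$-trajectory, which lets me convert the global quantity $J(\pi')$ into a sum of per-step contributions.

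Concretely, along a $\pi$-trajectory the pathwise series $\sum_{t=0}^\infty \gamma^t(\gamma V^{\pi'}(s_{t+1})-V^{\pi'}(s_t))$ telescopes to $-V^{\pi'}(s_0)$, because writing $a_t=\gamma^t V^{\pi'}(s_t)$ the $t$-th summand is $a_{t+1}-a_t$ and the tail $\gamma^{T+1}V^{\pi'}(s_{T+1})\to 0$. Taking the expectation over $\tau\sim\pi$ turns this into $-J(\pi')$, so adding it to $J(\pi)$ and merging the two discounted sums yields
\[
J(\pi)-J(\pi')=\mathbb{E}_{\tau\sim\pi}\left[\sum_{t=0}^\infty \gamma^t\left(r(s_t,a_t)+\gamma V^{\pi'}(s_{t+1})-V^{\pi'}(s_t)\right)\right].
\]
Next I would condition on $(s_t,a_t)$ inside the expectation: since $\mathbb{E}_{s_{t+1}\sim T(\cdot|s_t,a_t)}[r(s_t,a_t)+\gamma V^{\pi'}(s_{t+1})]=Q^{\pi'}(s_t,a_t)$ by the definition of $Q^{\pi'}$, the per-step residual collapses to $Q^{\pi'}(s_t,a_t)-V^{\pi'}(s_t)=A^{\pi'}(s_t,a_t)$, giving $J(\pi)-J(\pi')=\mathbb{E}_{\tau\sim\pi}[\sum_{t=0}^\infty \gamma^t A^{\pi'}(s_t,a_t)]$.

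Finally I would repackage the discounted trajectory sum using the normalized discounted state-visitation distribution $d^\pi(s)=(1-\gamma)\sum_{t=0}^\infty \gamma^t \Pr(s_t=s\mid\pi)$. Exchanging the infinite sum with the expectation and recognizing $\sum_{t}\gamma^t\Pr(s_t=\cdot\mid\pi)$ as $(1-\gamma)^{-1}d^\pi$ converts the expression into $\frac{1}{1-\gamma}\mathbb{E}_{s\sim d^\pi}\mathbb{E}_{a\sim\pi(\cdot|s)}[A^{\pi'}(s,a)]$, which is exactly the claim. The main technical obstacle is justifying the interchange of the infinite sum with the expectation and the vanishing of the telescoping tail; both follow from $\gamma\in[0,1)$ together with boundedness of the reward (hence of $V^{\pi'}$ and $A^{\pi'}$), which guarantees absolute convergence and permits dominated convergence, so that once a uniform bound is in place the remaining manipulations are routine bookkeeping.
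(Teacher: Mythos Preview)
Your argument is correct and is precisely the standard telescoping proof of the Performance Difference Lemma. The paper itself does not supply a proof of this lemma; it simply states it as a known result (citing Kakade and Langford) and then invokes it in the proof of Theorem~\ref{thm}. Your write-up therefore fills in what the paper leaves implicit, and the technical caveats you flag (vanishing tail, interchange of sum and expectation via boundedness of rewards and $\gamma<1$) are exactly the points one needs to check.
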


Similar to \citep{kumar2020conservative}, we focus on the discrete case where the number of states $|\mathcal{S}|$ and actions $|\mathcal{A}|$ are finite (note in the continuous case, the $\mathcal{D}(s,a)$ would be $0$ for most locations, and thus the bound becomes less interesting). The adaptation to the continuous space can leverage standard techniques like \emph{state abstraction} \citep{li2006towards} and covering arguments.

Next, we define the learning coefficient $C_{\gamma,\delta}$ of SARSA as 
\[
|\hat{Q}^{\hat{\pi}_\beta}(s,a)-Q^{\hat{\pi}_\beta}(s,a)|\leq \frac{C_{\gamma,\delta}}{\sqrt{\mathcal{D}(s,a)}}, \;\;\forall s,a\in\mathcal{S}\times\mathcal{A}.
\]

Define the first-order approximation error as 
\[
\bar{Q}^{\hat{\pi}_\beta}(s,a):= (a-a_{\beta})^T \left[\nabla_a \hat{Q}^{\hat{\pi}_\beta}(s, a)\right]_{a=a_{\beta}} + \hat{Q}^{\hat{\pi}_\beta}(s, a_{\beta}),
\] then the approximation error is defined as: 
\[
C_\mathrm{CFPI}(s,a):=|\bar{Q}^{\hat{\pi}_\beta}(s,a)-\hat{Q}^{\hat{\pi}_\beta}(s,a)|= \left|(a-a_{\beta})^T \left[\nabla_a \hat{Q}^{\hat{\pi}_\beta}(s, a)\right]_{a=a_{\beta}} + \hat{Q}^{\hat{\pi}_\beta}(s, a_{\beta})-\hat{Q}^{\hat{\pi}_\beta}(s,a)\right|.
\]
Under the constraint $ D\left(\pi(\cdot \mid s), \hat{\pi}_{\beta}(\cdot \mid s)\right)\leq \delta$ (\ref{eq:linear_prob}) (or equivalently action $a$ is close to $a_\beta$), the first-order approximation provides a good estimation for the $\hat{Q}^{\pi_\beta}$.

\begin{theorem}[Restatement of \autoref{thm}] Assume the state and action spaces are discrete. Let $\hat{\pi}_1$ be the policy obtained after the CFPI update (Line 2 of Algorithm~\ref{alg}). Then with probability $1-\delta$,
{\begin{align*}\small
J(\hat{\pi}_1)-J(\hat{\pi}_\beta)
\geq & \frac{1}{1-\gamma}\mathbb{E}_{s\sim d^{\hat{\pi}_1}}\left[ \bar{Q}^{\hat{\pi}_\beta}(s,\hat{\pi}_1(s))
-\bar{Q}^{\hat{\pi}_\beta}(s,\hat{\pi}_\beta(s))\right]\\
-&\frac{2}{1-\gamma}\mathbb{E}_{s\sim d^{\hat{\pi}_1}}\mathbb{E}_{a\sim\hat{\pi}_1(\cdot|s)}\left[\frac{C_{\gamma,\delta}}{\sqrt{\mathcal{D}(s,a)}}+{C_{\mathrm{CFPI}}(s,a)}\right]:=\zeta.
\end{align*}}
{
Similar results can be derived for multi-step and iterative algorithms by defining $\hat{\pi}_{0} = \hat{\pi}_\beta$. With probability $1-\delta$,
\begin{align*}
J(\hat{\pi}_T)-J(\hat{\pi}_\beta) =&\sum_{t=1}^T J(\hat{\pi}_t)-J(\hat{\pi}_{t-1})
\geq \sum_{t=1}^T \zeta^{(t)},
\end{align*}

where $\mathcal{D}(s,a)$ denotes number of samples at $s,a$, the learning coefficient of SARSA is defined as $C_{\gamma,\delta} = \max_{s_0,a_0}\sqrt{{2 \ln (12SA / \delta)}}\cdot \sqrt{\sum_{h=0}^{\infty} \sum_{s, a} \gamma^{2h}\cdot{\mu_{h}^{\hat{\pi}_\beta}(s, a|s_0,a_0)^{2}}\operatorname{Var}\left[V^{\hat{\pi}_\beta}\left(s^{\prime}\right) \mid s, a\right]}$ with $\mu_h^\pi(s,a|s_0,a_0): =P^\pi(s_h=s,a_h=a,|s_0=s,a_0=a)$, and $C_{\text{CFPI}}(s,a)$ denotes the error from the first-order approximation (\ref{eq:linear_obj}), (\ref{eq:linear_prob}) using CFPI, i.e. $C_\mathrm{CFPI}(s,a):= \left|(a-a_{\beta})^T \left[\nabla_a \hat{Q}^{\hat{\pi}_\beta}(s, a)\right]_{a=a_{\beta}} + \hat{Q}^{\hat{\pi}_\beta}(s, a_{\beta})-\hat{Q}^{\hat{\pi}_\beta}(s,a)\right|$. Note that when $a=a_\beta$, $C_\mathrm{CFPI}(s,a)=0$.
}
\end{theorem}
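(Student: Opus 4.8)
The plan is to combine the performance difference lemma (Lemma~\ref{lem:PDL}) with a triangle-inequality decomposition that interpolates between the true value $Q^{\hat{\pi}_\beta}$, the SARSA estimate $\hat{Q}^{\hat{\pi}_\beta}$, and the first-order surrogate $\bar{Q}^{\hat{\pi}_\beta}$. First I would instantiate Lemma~\ref{lem:PDL} with $\pi=\hat{\pi}_1$ and $\pi'=\hat{\pi}_\beta$ to write
\[
J(\hat{\pi}_1)-J(\hat{\pi}_\beta)=\frac{1}{1-\gamma}\mathbb{E}_{s\sim d^{\hat{\pi}_1}}\mathbb{E}_{a\sim\hat{\pi}_1(\cdot|s)}\left[Q^{\hat{\pi}_\beta}(s,a)-V^{\hat{\pi}_\beta}(s)\right],
\]
so that the entire gap is expressed through the advantage of the \emph{true} value function of the baseline policy, evaluated along the occupancy measure $d^{\hat{\pi}_1}$ of the improved policy. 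Since the CFPI operators return deterministic policies, I would use $V^{\hat{\pi}_\beta}(s)=Q^{\hat{\pi}_\beta}(s,\hat{\pi}_\beta(s))$.

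The technical content is then to replace every occurrence of the intractable $Q^{\hat{\pi}_\beta}$ by the computable surrogate $\bar{Q}^{\hat{\pi}_\beta}$ while tracking the incurred error. I would use the two-hop triangle inequality
\[
\left|Q^{\hat{\pi}_\beta}(s,a)-\bar{Q}^{\hat{\pi}_\beta}(s,a)\right|\leq\left|Q^{\hat{\pi}_\beta}(s,a)-\hat{Q}^{\hat{\pi}_\beta}(s,a)\right|+\left|\hat{Q}^{\hat{\pi}_\beta}(s,a)-\bar{Q}^{\hat{\pi}_\beta}(s,a)\right|\leq\frac{C_{\gamma,\delta}}{\sqrt{\mathcal{D}(s,a)}}+C_{\mathrm{CFPI}}(s,a),
\]
where the first gap is controlled with probability $1-\delta$ by the SARSA learning coefficient and the second is exactly $C_{\mathrm{CFPI}}$ by definition. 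Applying this at the action $a\sim\hat{\pi}_1$ turns $Q^{\hat{\pi}_\beta}(s,a)$ into $\bar{Q}^{\hat{\pi}_\beta}(s,a)$, and applying it at the baseline action lets me rewrite $V^{\hat{\pi}_\beta}(s)$ as $\bar{Q}^{\hat{\pi}_\beta}(s,\hat{\pi}_\beta(s))$; crucially, at the base point $a=a_\beta$ the Taylor remainder vanishes so that $C_{\mathrm{CFPI}}(s,a_\beta)=0$ and only the SARSA term survives there. Collecting the two replacements yields the surrogate advantage $\bar{Q}^{\hat{\pi}_\beta}(s,\hat{\pi}_1(s))-\bar{Q}^{\hat{\pi}_\beta}(s,\hat{\pi}_\beta(s))$ minus the error contributions, and upper-bounding both error contributions by the expectation over $\hat{\pi}_1$ produces the factor $2$ appearing in $\zeta$.

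For the multi-step and iterative claim I would telescope: setting $\hat{\pi}_0=\hat{\pi}_\beta$, write $J(\hat{\pi}_T)-J(\hat{\pi}_\beta)=\sum_{t=1}^{T}\big(J(\hat{\pi}_t)-J(\hat{\pi}_{t-1})\big)$, and apply the single-step argument at each round $t$ with the evaluation $\hat{Q}_{t-1}$ playing the role of the SARSA estimate and $\hat{\pi}_{t-1}$ the role of the baseline, giving $\sum_{t}\zeta^{(t)}$. The only statistical-efficiency input is the uniform SARSA bound $|\hat{Q}^{\hat{\pi}_\beta}-Q^{\hat{\pi}_\beta}|\leq C_{\gamma,\delta}/\sqrt{\mathcal{D}(s,a)}$, which I would import from the finite-sample analysis of on-policy FQI/SARSA cited in the text, and which accounts for the probability $1-\delta$ and the $\ln(12SA/\delta)$ inside $C_{\gamma,\delta}$.

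The main obstacle I expect is the error bookkeeping rather than any single inequality: one must evaluate the true advantage under the improved policy's occupancy $d^{\hat{\pi}_1}$ while the baseline value term $V^{\hat{\pi}_\beta}$ is anchored at $\hat{\pi}_\beta$'s own action, so the approximation error has to be controlled simultaneously at two different actions (the improved action, where both $C_{\gamma,\delta}/\sqrt{\mathcal{D}}$ and $C_{\mathrm{CFPI}}$ appear, and the behavior action, where $C_{\mathrm{CFPI}}$ vanishes). Obtaining the factor of $2$ and the fact that the error is charged to the $\hat{\pi}_1$-actions hinges on the \emph{uniformity} of the SARSA bound over all $(s,a)$ — including the possibly out-of-buffer actions that CFPI queries — which is precisely why the first-order Taylor approximation, accurate only in a small neighborhood of $a_\beta$, is essential for keeping $C_{\mathrm{CFPI}}$ controllably small.
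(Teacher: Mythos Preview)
Your proposal is correct and follows essentially the same route as the paper: apply the performance difference lemma with $\pi=\hat{\pi}_1$, $\pi'=\hat{\pi}_\beta$, replace $Q^{\hat{\pi}_\beta}$ by $\hat{Q}^{\hat{\pi}_\beta}$ via the uniform SARSA bound (picking up the $C_{\gamma,\delta}/\sqrt{\mathcal{D}(s,a)}$ term at both actions, hence the factor $2$), then replace $\hat{Q}^{\hat{\pi}_\beta}$ by $\bar{Q}^{\hat{\pi}_\beta}$ via the definition of $C_{\mathrm{CFPI}}$, and finally telescope for the multi-step statement. Your observation that $C_{\mathrm{CFPI}}(s,a_\beta)=0$ and your remark about charging both error terms to the $\hat{\pi}_1$-expectation are exactly the bookkeeping choices the paper makes as well.
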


\begin{proof}[proof of Theorem~\ref{thm}]
We focus on the first update, which is from $\hat{\pi}_b$ to $\hat{\pi}_1$. 
According to the Sarsa update, we have $
|\hat{Q}^{\hat{\pi}_\beta}(s,a)-Q^{\hat{\pi}_\beta}(s,a)|\leq \frac{C_{\gamma,\delta}}{\sqrt{\mathcal{D}(s,a)}}, \;\;\forall s,a\in\mathcal{S}\times\mathcal{A}$ with probability $1-\delta$ and this is due to previous on-policy evaluation result (\emph{e.g.} \citep{zou2019finite}). Also denote $\hat{\pi}_1:=\argmax_\pi \bar{Q}^{\hat{\pi}_\beta}$.

By Lemma~\ref{lem:PDL},
\begin{align*}
J(\hat{\pi}_1)-J(\hat{\pi}_\beta)=&\frac{1}{1-\gamma}\mathbb{E}_{s\sim d^{\hat{\pi}_1}}\left[\mathbb{E}_{a\sim\hat{\pi}_1(\cdot|s)} A^{\hat{\pi}_\beta}(s,a)\right]\\
=&\frac{1}{1-\gamma}\mathbb{E}_{s\sim d^{\hat{\pi}_1}}\left[\mathbb{E}_{a\sim\hat{\pi}_1(\cdot|s)} [Q^{\hat{\pi}_\beta}(s,a)-V^{\hat{\pi}_\beta}(s)]\right]\\
= & \frac{1}{1-\gamma}\mathbb{E}_{s\sim d^{\hat{\pi}_1}}\left[\mathbb{E}_{a\sim\hat{\pi}_1(\cdot|s)} [Q^{\hat{\pi}_\beta}(s,a)-Q^{\hat{\pi}_\beta}(s,\hat{\pi}_\beta(s))]\right]\\
\geq & \frac{1}{1-\gamma}\mathbb{E}_{s\sim d^{\hat{\pi}_1}}\left[\mathbb{E}_{a\sim\hat{\pi}_1(\cdot|s)} [\hat{Q}^{\hat{\pi}_\beta}(s,a)-\hat{Q}^{\hat{\pi}_\beta}(s,\hat{\pi}_\beta(s))]\right]-\frac{2}{1-\gamma}\mathbb{E}_{s\sim d^{\hat{\pi}_1}}\mathbb{E}_{a\sim\hat{\pi}_1(\cdot|s)}\left[\frac{C_{\gamma,\delta}}{\sqrt{\mathcal{D}(s,a)}}\right]\\
\geq & \frac{1}{1-\gamma}\mathbb{E}_{s\sim d^{\hat{\pi}_1}}\left[ \bar{Q}^{\hat{\pi}_\beta}(s,\hat{\pi}_1(s))-\bar{Q}^{\hat{\pi}_\beta}(s,\hat{\pi}_\beta(s))\right]-\frac{2}{1-\gamma}\mathbb{E}_{s\sim d^{\hat{\pi}_1}}\mathbb{E}_{a\sim\hat{\pi}_1(\cdot|s)}\left[\frac{C_{\gamma,\delta}}{\sqrt{\mathcal{D}(s,a)}}+{C_{\mathrm{CFPI}}(s,a)}\right]\\
:=&\zeta^{(1)}
\end{align*}
where the first inequality uses $
|\hat{Q}^{\hat{\pi}_\beta}(s,a)-Q^{\hat{\pi}_\beta}(s,a)|\leq \frac{C_{\gamma,\delta}}{\sqrt{\mathcal{D}(s,a)}}$ and the last inequality uses $\hat{\pi}_1:=\argmax_\pi \bar{Q}^{\hat{\pi}_\beta}$. {Here \[
C_{\gamma,\delta} = \max_{s_0,a_0}\sqrt{{2 \ln (12SA / \delta)}}\cdot \sqrt{\sum_{h=0}^{\infty} \sum_{s, a} \gamma^{2h}\cdot{\mu_{h}^{\hat{\pi}_\beta}(s, a|s_0,a_0)}\operatorname{Var}\left[V^{\hat{\pi}_\beta}\left(s^{\prime}\right) \mid s, a\right]}
\]

Similarly, if the number of iteration $t> 1$, then Denote
\[
C^{(t)}_{\gamma,\delta}: = \max_{s_0,a_0}\sqrt{{2 \ln (12SA / \delta)}}\cdot \sqrt{\sum_{h=0}^{\infty} \sum_{s, a} \gamma^{2h}\cdot\frac{\mu_{h}^{\hat{\pi}_t}(s, a|s_0,a_0)^2}{\mu_{h}^{\hat{\pi}_{t-1}}(s, a|s_0,a_0)}\operatorname{Var}\left[V^{\hat{\pi}_{t}}\left(s^{\prime}\right) \mid s, a\right]},
\]
then we have with probability $1-\delta$, by the Corollary 1 of \cite{duan2020minimax}, the OPE estimation follows
\[
|\hat{Q}^{\hat{\pi}_\beta}(s,a)-Q^{\hat{\pi}_\beta}(s,a)|\leq \frac{C^{(t)}_{\gamma,\delta}}{\sqrt{\mathcal{D}(s,a)}}
\]
and 
\begin{align*}
J(\hat{\pi}_t)-J(\hat{\pi}_{t-1})
\geq & \frac{1}{1-\gamma}\mathbb{E}_{s\sim d^{\hat{\pi}_t}}\left[ \bar{Q}^{\hat{\pi}_{t-1}}(s,\hat{\pi}_t(s))-\bar{Q}^{\hat{\pi}_{t-1}}(s,\hat{\pi}_{t-1}(s))\right]\\
-&\frac{2}{1-\gamma}\mathbb{E}_{s\sim d^{\hat{\pi}_t}}\mathbb{E}_{a\sim\hat{\pi}_t(\cdot|s)}\left[\frac{C^{(t)}_{\gamma,\delta}}{\sqrt{\mathcal{D}(s,a)}}+{C_{\mathrm{CFPI}}(s,a)}\right]:=\zeta^{(t)},
\end{align*}
then for multi-step iterative algorithm, by a union bound, we have with probability $1-\delta$
\begin{align*}
J(\hat{\pi}_T)-J(\hat{\pi}_\beta) =&\sum_{t=1}^T J(\hat{\pi}_t)-J(\hat{\pi}_{t-1})
\geq \sum_{t=1}^T \zeta^{(t)}.
\end{align*}}

\end{proof}

\textbf{On the learning coefficient of SARSA.} The learning of SARSA is known to be statistically efficient from existing off-policy evaluation (OPE) literature, for instance \citep{duan2020minimax,yin2020asymptotically}. This is due to the on-policy SARSA scheme is just a special case of OPE task by choosing $\pi = \hat{\pi}_\beta$.

Concretely, we can translate the finite sample error bound in Corollary 1 of \citep{duan2020minimax} to the infinite horizon discounted setting as: for any initial state,action $s_0,a_0$, with probability $1-\delta$,
\[
|\hat{Q}^{\hat{\pi}_\beta}(s_0,a_0)-{Q}^{\hat{\pi}_\beta}(s_0,a_0)|\leq\frac{1}{\sqrt{\mathcal{D}(s_0,a_0)}}\sqrt{{2 \ln (12 / \delta)}}\cdot \sqrt{\sum_{h=0}^{\infty} \sum_{s, a} \gamma^{2h}\cdot{\mu_{h}^{\hat{\pi}_\beta}(s, a|s_0,a_0)}\operatorname{Var}\left[V^{\hat{\pi}_\beta}\left(s^{\prime}\right) \mid s, a\right]}
\]
Note the original statement in \citep{duan2020minimax} is for $v^{\hat{\pi}_\beta}-\hat{v}^{\hat{\pi}_\beta}$, here we conduct the version for $\hat{Q}^{\hat{\pi}_\beta}-{Q}^{\hat{\pi}_\beta}$ instead and this can be readily obtained by fixing the initial state action $s_0,a_0$ for $v^\pi$. As a result, by a union bound (over $S$, $A$) it is valid to define 
\[
C_{\gamma,\delta} = \max_{s_0,a_0}\sqrt{{2 \ln (12SA / \delta)}}\cdot \sqrt{\sum_{h=0}^{\infty} \sum_{s, a} \gamma^{2h}\cdot{\mu_{h}^{\hat{\pi}_\beta}(s, a|s_0,a_0)}\operatorname{Var}\left[V^{\hat{\pi}_\beta}\left(s^{\prime}\right) \mid s, a\right]}
\]
and this makes sure the statistical guarantee in Theorem~\ref{thm} follows through.

{Similarly, for the multi-step case, the OPE estimator hold with the corresponding coefficient \[
C^{(t)}_{\gamma,\delta}: = \max_{s_0,a_0}\sqrt{{2 \ln (12SA / \delta)}}\cdot \sqrt{\sum_{h=0}^{\infty} \sum_{s, a} \gamma^{2h}\cdot\frac{\mu_{h}^{\hat{\pi}_t}(s, a|s_0,a_0)^2}{\mu_{h}^{\hat{\pi}_{t-1}}(s, a|s_0,a_0)}\operatorname{Var}\left[V^{\hat{\pi}_{t}}\left(s^{\prime}\right) \mid s, a\right]}.
\]}

Lastly, even the assumption on the state-action space to be finite is not essential for Theorem~\ref{thm} since, for more general function approximations, recent literature for OPE \citep{zhang2022off} shows SARSA update in Algorithm~\ref{alg} is still statistically efficient.

\newpage
\section{Detailed Procedures to obtain \autoref{eq:mg-sol}}\label{sec:line-search-mg}
 We first highlight that we set the HP $\delta$ differently for Proposition \ref{prop:mg-max-close} and \ref{prop:mg-jensen-close}. With the same $\tau$, we generate the two different $\delta$ for the two different settings. Specifically,
\begin{equation}
    \begin{aligned}\label{eq:app-mg-delta}
        \delta_{\text{lse}}(\tau) & = \log\tau + \min_i\left\{\frac{1}{2}\log\det(2\pi \Sigma_i) - \log\lambda_{i}\right\}, \quad\text{(Proposition \ref{prop:mg-max-close})}\\
        \delta_{\text{jensen}}(\tau) & = \log\tau + \frac{1}{2} \sum^N_{i=1}\lambda_i\log\det(2\pi\Sigma_i), \quad\text{(Proposition \ref{prop:mg-jensen-close})}
    \end{aligned}
\end{equation}

{We next provide intuition for the design choices \eqref{eq:app-mg-delta}. Recall that the Gaussian Mixture behavior policy is constructed by}
\begin{equation}
    \pi_\beta = \sum^N_{i=1}\lambda_i \mathcal{N}(\mu_i, \Sigma_i).
\end{equation}
With the mixture weights $\lambda_{i=1\ldots N}$, we define the scaled probability ${\breve{\pi}}_{i}(a)$ of the $i$-th Gaussian component evaluated at $a$
\begin{equation}
    {\breve{\pi}}_{i}(\mu_i) = \lambda_i \pi_{i}(a) = \lambda_i \det(2\pi \Sigma_i)^{-\frac{1}{2}} \exp\{ -\frac{1}{2} (a - \mu_i)^T{\Sigma^{-1}_i} (a - \mu_i)\},
\end{equation}
where $\pi_{i}(a) = \mathcal{N}(a; \mu_i, \Sigma_i)$ denotes the probability of the $i$-th Gaussian component evaluated at $a$. Therefore, we can have $\log {\breve{\pi}}_{i}(\mu_i) = \log\lambda_i -\frac{1}{2} \log\det(2\pi \Sigma_i)$, which implies that  
\begin{equation}
    \begin{aligned}
        \delta_{\text{lse}}(\tau) & = \log\tau + \min_i\left\{\frac{1}{2}\log\det(2\pi \Sigma_i) - \log\lambda_{i}\right\}\\
        & = -\left(\max_i\left\{\log\lambda_i -\frac{1}{2} \log\det(2\pi \Sigma_i)\right\} - \log\tau\right) \\
        & = -\max_i\left\{\log \frac{1}{\tau}{\breve{\pi}}_{i}(\mu_i)\right\}.
    \end{aligned}.
\end{equation}
By setting $\delta_{\text{lse}}(\tau)$ in this way, ${\overline\mu}_j = {\overline\mu}_j(\delta_{\text{lse}}(\tau))$ will satisfy the following condition whenever ${\overline\mu}_j$ is a valid solution to the sub-problem $j$ \eqref{eq:app-max-sub-prob} due to the KKT conditions, $\forall j\in\{1,\ldots,N\}$.
\begin{equation}
    \begin{aligned}
        & -\frac{1}{2} ({\overline\mu}_j - \mu_j)^T{\Sigma^{-1}_j} ({\overline\mu}_j - \mu_j) -\frac{1}{2} \log\det(2\pi \Sigma_j) + \log\lambda_j = -\delta_{\text{lse}}(\tau) \\
    \iff \quad & \log {\breve{\pi}}_{j}({\overline\mu}_j) = \max_i\left\{\log \frac{1}{\tau}{\breve{\pi}}_{i}(\mu_i)\right\}\quad \iff \quad \breve{\pi}_{j}({\overline\mu}_j) = \frac{1}{\tau} \max_i\left\{{\breve{\pi}}_{i}(\mu_i)\right\}
    \end{aligned}
\end{equation}
To elaborate the design of $\delta_{\text{jensen}}(\tau)$, we first recall that the constraint of problem \eqref{eq:mg_jensen_prob} is given by 
\begin{equation}\label{eq:app-mg-jensen-constraint}
    \sum^N_{i=1} \lambda_i\left(-{\frac{1}{2}}\log\det(2\pi \Sigma_i) -\frac{1}{2} (\mu - \mu_i)^T{\Sigma^{-1}_i} (\mu - \mu_i)\right) \geq -\delta_{\text{jensen}}(\tau).
\end{equation}
Note that the LHS of \eqref{eq:app-mg-jensen-constraint} is a concave function w.r.t $\mu$. Thus, we can obtain its maximum by setting its derivatives \eqref{eq:app-mg-jensen-constraint-derivative} to zero
\begin{equation}
    \begin{aligned}\label{eq:app-mg-jensen-constraint-derivative}
        & \nabla_{\mu}\left(\sum^N_{i=1} \lambda_i\left(-{\frac{1}{2}}\log\det(2\pi \Sigma_i) -\frac{1}{2} (\mu - \mu_i)^T{\Sigma^{-1}_i} (\mu - \mu_i)\right) \right) \\
        \quad =  & -\sum^N_{i=1} \lambda_i {\Sigma^{-1}_i} (\mu - \mu_i) = -\overline{\Sigma}^{-1}\mu + \overline{\Sigma}^{-1}\overline\mu
    \end{aligned}.
\end{equation}
Interestingly, we can find that the solution is given by $\mu = \overline\mu$. Plugging $\mu = \overline\mu$ into the LHS of \eqref{eq:app-mg-jensen-constraint}, we can obtain its maximum as below
\begin{equation}\label{eq:app-mg-jensen-constraint-maximum}
    \begin{aligned}
        & -\frac{1}{2} \sum^N_{i=1}\lambda_i\log\det(2\pi\Sigma_i) -\frac{1}{2}\sum^N_{i=1}\lambda_i (\overline\mu - \mu_i)^T{\Sigma^{-1}_i} (\overline\mu - \mu_i) \\
        \quad \leq & \sum^N_{i=1}\lambda_i\left(-\frac{1}{2} \log\det(2\pi\Sigma_i)\right)
        = \sum^N_{i=1}\lambda_i \log {\pi}_{i}(\mu_i)
    \end{aligned}
\end{equation}
The inequality holds as the covariance matrix $\Sigma_i$ is a positive semi-definite matrix for $i\in\{1\ldots N\}$. Therefore, our choice of $\delta_{\text{jensen}}(\tau)$ can be interpreted as
\begin{equation}
    \delta_{\text{jensen}}(\tau) = \log\tau + \frac{1}{2} \sum^N_{i=1}\lambda_i\log\det(2\pi\Sigma_i) = -(\sum^N_{i=1}\lambda_i \log {\pi}_{i}(\mu_i) - \log\tau)
\end{equation}
\newpage
\section{Multi-step and iterative algorithms}\label{sec:alg-ms-iter}
By setting $T > 0$, we can derive multi-step and iterative algorithms. Thanks to the tractability of our CFPI operators $\mathcal{I}_{\text{SG}}$ and $\mathcal{I}_{\text{MG}}$, we can always perform the policy improvement step in-closed form. Therefore, there is no significant gap between ~multi-step and iterative algorithms with our CFPI operators. One can differentiate our multi-step and iterative algorithms by whether an algorithm trains the policy evaluation step $\mathcal{E}(\hat{Q}_{t-1}, \hat{\pi}_t, \mathcal{D})$ to convergence or not.

As for the policy evaluation operator $\mathcal{E}$, the fitted Q evaluation~\citep{ernst2005tree,le2019batch,fujimoto2022should} with a target network~\citep{mnih2015human} has been demonstrated to be an effective and successful paradigm to perform policy evaluation~\citep{kumar2019stabilizing,fujimoto2021minimalist,SAC,ddpg,TD3} in deep (offline) RL. When instantiating a multi-step or iterative algorithm from Algorithm \ref{alg}, one can also consider the other policy evaluation operators by incorporating more optimization techniques.

{In the rest of this section, we will instantiate an iterative algorithm with our CFPI operators performing the policy improvement step and evaluate its effectiveness on the challenging AntMaze domains.

\subsection{Iterative algorithm with our CFPI operators}\label{sec:iterative-pac-alg}

\begin{algorithm}[t]
    \caption{Iterative $\mathcal{I}_{\text{MG}}$}
    {
    \label{alg:iterative-pac}
    \begin{algorithmic}[1]
        \STATE \textbf{Input}: Learned behavior policy $\hat{\pi}_{\beta}$, Q network parameters $\phi_1$, $\phi_2$, target Q network parameters $\phi_{\text{targ},1}$, $\phi_{\text{targ},2}$, dataset $\mathcal{D}$, parameter $\tau$
        \REPEAT
            \STATE Randomly sample a batch of transitions, $B = \{ (s,a,r,s',d) \}$ from $\mathcal{D}$
            \STATE Compute target actions
            \begin{align*}
                a'(s') & = \text{clip}\left(\mathcal{I}_{\text{MG}}(\hat{\pi}_{\beta}, \hat{Q};\tau)(s') + \text{clip}(\epsilon,-c,c), a_{Low}, a_{High}\right),\\
                \text{where } \hat{Q} & = \min(Q_{\phi_1}, Q_{\phi_2}), \text{ and }\epsilon \sim \mathcal{N}(0, \sigma)
            \end{align*}
            \STATE Compute targets
            \begin{equation*}
                y(r,s',d) = r + \gamma (1-d) \min_{i=1,2} Q_{\phi_{\text{targ},i}}(s', a'(s'))
            \end{equation*}
            \STATE Update Q-functions by one step of gradient descent using
            \begin{align*}
                & \nabla_{\phi_i} \frac{1}{|B|}\sum_{(s,a,r,s',d) \sim B} \left( Q_{\phi_i}(s,a) - y(r,s',d) \right)^2 && \text{for } i=1,2
            \end{align*}
    
            \STATE Update target networks with
            \begin{align*}
                \phi_{\text{targ},i} &\leftarrow \rho \phi_{\text{targ}, i} + (1-\rho) \phi_i && \text{for } i=1,2
            \end{align*}
        \UNTIL{convergence}
        \STATE \textbf{Output}: $\mathcal{I}_{\text{MG}}(\hat{\pi}_{\beta}, \hat{Q};\tau)$
    \end{algorithmic}
    }
\end{algorithm}

\begin{table}[th]
    \caption{{Comparison between our iterative algorithm and SOTA methods on the AntMaze domain of D4RL. We report the mean and standard deviation across 5 seeds for our methods. Our Iterative $\mathcal{I}_{\text{MG}}$ outperforms all baselines on 5 out of 6 tasks and obtaining the best overall performance, demonstrating the effectiveness of our CFPI operator when instantiating an iterative algorithm. 
    }}\label{tab:iterative-antmaze}
    \centering
    {
    \begin{tabular}{l|cccccc|c}
        \toprule
        Dataset &BC &DT &Onestep RL &TD3+BC &CQL & IQL & Iterative $\mathcal{I}_{\text{MG}}$ \\
        \midrule
        antmaze-umaze-v0   & $54.6$ & $59.2$ & $64.3$ & $78.6$ & $74.0$ & $87.5$ & $\mathbf{90.2 \pm 3.9}$ \\
        antmaze-umaze-diverse-v0 & $45.6$ & $49.3$ & $60.7$ & $71.4$ & $\mathbf{84.0}$ & $62.2$ & $58.6 \pm 15.2$\\
        antmaze-medium-play-v0 & $0.0$ & $0.0$ & $0.3$ & $10.6$ & $61.2$ & $71.2$ & $\mathbf{75.2 \pm 6.9}$ \\
        antmaze-medium-diverse-v0 & $0.0$ & $0.7$ & $0.0$ & $3.0$ & $53.7$ & $70.0$ & $\mathbf{72.2 \pm 7.3}$ \\
        antmaze-large-play-v0 & $0.0$ & $0.0$ & $0.0$ & $0.2$ & $15.8$ & ${39.6}$ & $\mathbf{51.4 \pm 7.7}$ \\
        antmaze-large-diverse-v0 & $0.0$ & $1.0$ & $0.0$ & $0.0$ & $14.9$ & $47.5$ & $\mathbf{52.4 \pm 10.9}$ \\
        \midrule
        Total & $100.2$ & $112.2$ & $125.3$ & $163.8$ & $303.6$ & $378.0$ & $\mathbf{400.0 \pm 52.0}$ \\ 
        \bottomrule
    \end{tabular}
    }
\end{table}

In Sec. \ref{sec:benchmark}, we instantiate an iterative algorithm \emph{Iterative $\mathcal{I}_{\text{MG}}$} with our CFPI operator $\mathcal{I}_{\text{MG}}$. Algorithm \ref{alg:iterative-pac} presents the corresponding pseudo-codes that learn a set of Q-function networks for simplicity. Without loss of generality, we can easily generalize the algorithm to learn the action-value distribution $Z(s, a)$ as is defined in~\eqref{eq:action-value-dist}.

For each task, we learn a Gaussian Mixture behavior policy $\hat{\pi}_{\beta}$ with behavior cloning. Similar to Sec.~\ref{sec:benchmark}, we employed the IQN~\citep{dabney2018implicit} architecture to model the Q-value network for its better generalizability. As our CFPI operator $\mathcal{I}_{\text{MG}}$ returns a deterministic policy, we follow the TD3~\citep{TD3} to perform policy smoothing by adding noise to the action $a'(s')$ in Line 4. After convergence, Algorithm \ref{alg:iterative-pac} outputs an improved policy $\mathcal{I}_{\text{MG}}(\hat{\pi}_{\beta}, \hat{Q};\tau)$.

\autoref{tab:iterative-antmaze} compares our Iterative $\mathcal{I}_{\text{MG}}$ with SOTA algorithms on the AntMaze domain. The performance for all baseline methods is directly reported from the IQL paper~\citep{kostrikov2021iql}. Our method outperforms all baseline methods on on 5 out of 6 tasks and obtaining the best overall performance. The training curves are shown in Fig. \ref{fig:iterative-pac-train} with the HP settings detailed in \autoref{tab:iterative-pac-hp}. We did not perform much HP tuning, and thus one should expect a performance improvement after conducting fine-grained HP tuning.

\begin{figure*}[h]
\centering
  \makebox[\textwidth]{
  
  \begin{subfigure}{0.20\paperwidth}
    \includegraphics[width=\linewidth]{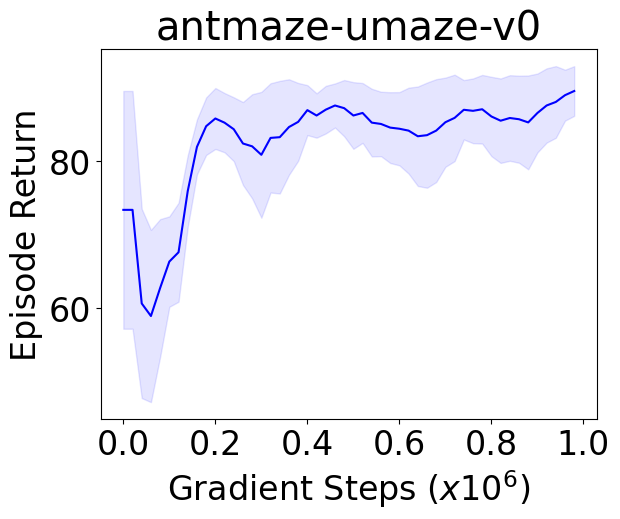}
  \end{subfigure}
  
  \begin{subfigure}{0.20\paperwidth}
    \includegraphics[width=\linewidth]{figures/iterative_pac/antmaze-umaze-v0.png}
  \end{subfigure}

  \begin{subfigure}{0.20\paperwidth}
    \includegraphics[width=\linewidth]{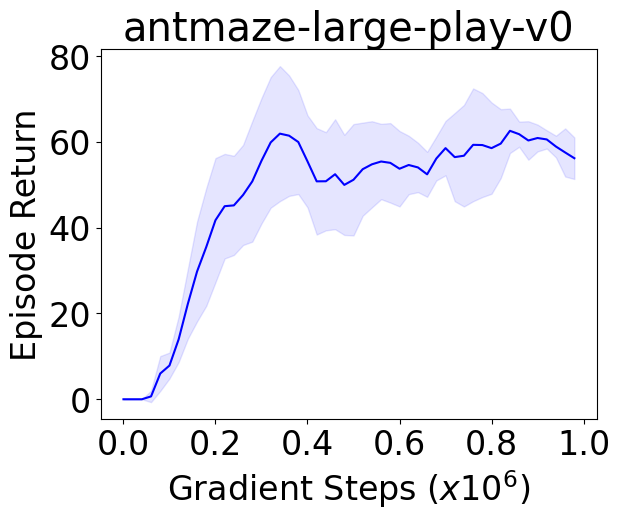}
  \end{subfigure}

  }
  
  \makebox[\textwidth]{
  \begin{subfigure}{0.20\paperwidth}
    \includegraphics[width=\linewidth]{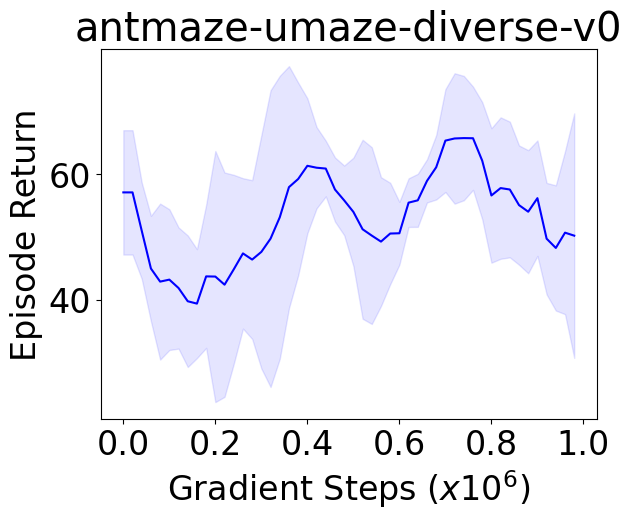}
  \end{subfigure}
    
  \begin{subfigure}{0.20\paperwidth}
    \includegraphics[width=\linewidth]{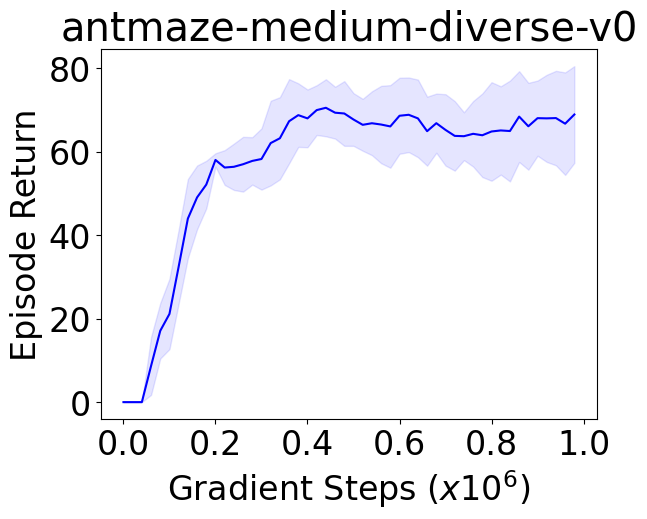}
  \end{subfigure}
  
  \begin{subfigure}{0.20\paperwidth}
    \includegraphics[width=\linewidth]{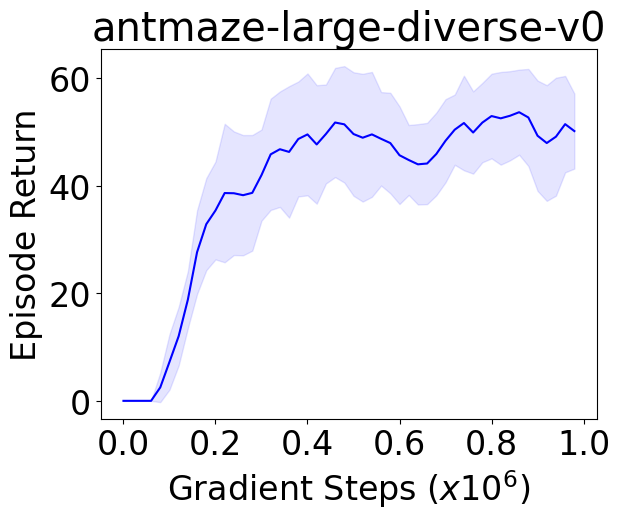}
  \end{subfigure}
  }
  \caption{{Iterative $\mathcal{I}_{\text{MG}}$ training results on AntMaze. Shaded area denotes one standard deviation.}}
\label{fig:iterative-pac-train}
\end{figure*}

\begin{table}[t]
    \centering
    {
    \begin{tabular}{l|ll}
        \toprule 
        & Hyperparameter & Value \\
        \midrule
        Shared HP & Optimizer & Adam~\citep{kingma2014adam} \\
        & Normalize states & False \\
        & activation function & ReLU \\
        & Mini-batch size & 256 \\
        \midrule 
        & Gaussian components ($N$) & 8 \\
        & Number of gradient steps & 500K \\
        & Policy architecture & MLP \\
        MG-BC & Policy learning rate & 1{e}-4 \\
        & Policy hidden layers & 3 \\
        & Policy hidden dim & 256 \\
        & Threshold $\xi$ in \eqref{eq:mode-selection} & 0.05 \\
        \midrule
        & Number of gradient steps & 1M \\
        & Critic architecture & IQN~\citep{dabney2018implicit} \\
        & Critic hidden dim & 256 \\
        & Critic hidden layers & 3 \\
        & Critic learning rate & 3{e}-4 \\
        Iterative $\mathcal{I}_{\text{MG}}$ & Number of quantiles $N_{\text{q}}$ & 8 \\
        & Number of cosine basis elements & 64 \\
        & Discount factor & 0.99 \\
        & Target update rate & 5e-3 \\
        & Target update period & 1 \\
        & $\log\tau$ & 1.5 \\

        \bottomrule
    \end{tabular}
    }
    \caption{{Hyperparameters for our Iterative $\mathcal{I}_{\text{MG}}$.}}
    \label{tab:iterative-pac-hp}
\end{table}
}

\newpage
\section{CFPI beyond Gaussian policies}\label{sec:cfpi-det}
In the main paper, we mainly discuss the scenario when the behavior policy $\pi_\beta$ is from the Gaussian family and develop two CFPI operators. However, our methods can also work with a non-Gaussian $\pi_\beta$. Next, we derive a new CFPI operator $\mathcal{I}_{\text{DET}}$ that can work with deterministic $\pi_\beta$. We then show that $\mathcal{I}_{\text{DET}}$ can also be leveraged to improve a general stochastic policy $\pi_\beta$ without knowing its actual expression, as long as we can sample from it.

\subsection{Deterministic behavior policy}
When modeling both $\pi = \mu$ and $\pi_\beta = \mu_{\beta}$ as deterministic policies, we can derive the following BCPO from the problem \eqref{eq:linear_prob} by setting $D(\cdot, \cdot)$ as the mean squared error.
\begin{equation}\label{eq:deter-orig-prob} 
        \max_{\mu}\quad {\mu}^T \left[\nabla_a Q(s, a)\right]_{a=\mu_\beta},
        \quad \text{s.t.}\quad
        \frac{1}{2}\|\mu - \mu_\beta\|^2 \leq \delta.
\end{equation}
Problem \eqref{eq:deter-orig-prob} has a similar form as the problem \eqref{eq:sg-qclp}. We can thus obtain its closed-form solution $\mu = \mu_{\text{det}}(\delta)$ as below
\begin{equation}\label{eq:deter-sol}
    \mu_{\text{det}}(\delta) = \mu_{\beta} + \frac{\sqrt{2\delta}}{\|\left[\nabla_{a} {Q}(s, a)\right]_{a=\mu_\beta}\|}\left[\nabla_{a} {Q}(s, a)\right]_{a=\mu_\beta}.
\end{equation}
Therefore, we can derive a new CFPI operator $\mathcal{I}_{\text{DET}}(\pi_{\beta},Q; \delta)$ that returns a policy with action selected by \eqref{eq:deter-sol}.

We further note that the problem \eqref{eq:deter-orig-prob} can be seen as a linear approximation of the objectives used in TD3 + BC~\citep{fujimoto2021minimalist}.
\begin{algorithm}[t]
    \caption{Policy improvement of $\mathcal{I}_{\text{DET}}$ with a stochastic $\pi_\beta$}\label{alg:det-cfpi}
    {\bf Input}: State $s$, stochastic policy $\pi_\beta$, value function ${\hat Q}$, $\delta$, number of candidate actions to sample $M$
    \begin{algorithmic}[1]
    \STATE Sample candidate actions $\{a_1, \ldots, a_M\}$ from $\pi_\beta$
    \STATE Obtain the EBCQ policy $\pi_{\text{EBCQ}}$ with action selected by $\pi_{\text{EBCQ}}(s) = \argmax_{m=1\ldots M} {\hat Q}(s, a_m)$
    \STATE Return $\mathcal{I}_{\text{DET}}(\pi_{\text{EBCQ}}, {\hat Q}; \delta)$ by calculating \eqref{eq:deter-sol}
    \end{algorithmic}
\end{algorithm}

\subsection{Beyond deterministic behavior policy}
Though we assume $\pi_\beta$ to be a deterministic policy during the derivation of $\mathcal{I}_{\text{DET}}$, we can indeed leverage $\mathcal{I}_{\text{DET}}$ to tackle the more general case when we can only sample from $\pi_\beta$ without knowing its actual expression.

Algorithm \ref{alg:det-cfpi} details the procedures to perform the policy improvement step for a stochastic behavior policy $\pi_\beta$. We first obtain its EBCQ policy $\pi_{\text{EBCQ}}$ in Line 1-2. As $\pi_{\text{EBCQ}}$ is deterministic, we further plug it in $\mathcal{I}_{\text{DET}}$ in Line 3 to return an improved policy.

\begin{table}[t]
    \caption{$\mathcal{I}_{\text{DET}}$ results on the Gym-MuJoCo domain. We report the mean and standard deviation 5 seeds and each seed evaluates for 100 episodes.}
    \centering
    \scalebox{0.9}{
        \begin{tabular}{l|ccc|ccc}
            \toprule
            Dataset & DET-BC & VAE-BC & VAE-EBCQ & $\mathcal{I}_{\text{DET}}$ with $\pi_{\text{det}}$ & $\mathcal{I}_{\text{DET}}$ with $\pi_{\text{vae}}$\\
            \midrule
            Walker2d-Medium-v2  & $71.2 \pm 2.0$ & $70.6 \pm 3.0$ & $70.6 \pm 3.4$ & $79.5 \pm 12.9$ & $\mathbf{86.5 \pm 6.3}$\\
            \midrule
            Walker2d-Medium-Replay-v2 & $19.5 \pm 12.6$ & $19.4 \pm 2.9$ & $33.5 \pm 7.3$ & $57.1 \pm 11.6$ & $\mathbf{62.6 \pm 7.1}$\\
            \midrule
            Walker2d-Medium-Expert-v2  & $74.4 \pm 0.4$ & $74.9 \pm 7.6$ & $82.7 \pm 11.9$ & $\mathbf{111.2 \pm 1.8}$ & $\mathbf{111.1 \pm 0.9}$\\
            \bottomrule
        \end{tabular}
    }
    \label{tab:cfpi-det-results}
\end{table}

\subsection{Experiment results}
To evaluate the performance of $\mathcal{I}_{\text{DET}}$, we first learn two behavior policies with two different models. Specifically, we model $\pi_{\text{det}}$ with a three-layer MLP that outputs a deterministic policy and $\pi_{\text{vae}}$ with the Variational auto-encoder (VAE)~\citep{kingma2013auto} from BCQ~\citep{fujimoto2019off}. Moreover, we reused the same value function $\hat{Q}_0$ as in Section \ref{sec:benchmark}. We present the results in \autoref{tab:cfpi-det-results}. DET-BC and VAE denote the performance of $\pi_{\text{det}}$ and $\pi_{\text{vae}}$, respectively. VAE-EBCQ denotes the EBCQ performance of $\pi_{\text{vae}}$ with $M = 50$ candidate actions. Since $\pi_{\text{det}}$ is deterministic, its EBCQ performance is the same as DET-BC. As for our two methods, we set $\delta = 0.1$ for all datasets. We can observe that both our $\mathcal{I}_{\text{DET}}$ with $\pi_{\text{det}}$ and $\mathcal{I}_{\text{DET}}$ with $\pi_{\text{vae}}$ largely improve over the baseline methods. Moreover, $\mathcal{I}_{\text{DET}}$ with $\pi_{\text{vae}}$ outperforms VAE-EBCQ by a significant margins on all three datasets, demonstrating the effectiveness of our CFPI operator.

Indeed, our method benefits from an accurate and expressive behavior policy, as $\mathcal{I}_{\text{DET}}$ with $\pi_{\text{vae}}$ achieves a higher average performance compared to $\mathcal{I}_{\text{DET}}$ with $\pi_{\text{det}}$, while maintaining a lower standard deviation on all three datasets.

We also note that we did not spend too much effort optimizing the HP, e.g., the VAE architectures, learning rates, and the value of $\tau$.
\newpage

\section{Reliable evaluation to address the statistical uncertainty}\label{sec:stat-sig}
\begin{figure}[h]
\caption{Comparison between our methods and baselines using reliable evaluation methods proposed in~\citep{agarwal2021deep}. We re-examine the results in \autoref{tab:ablate} on the 9 tasks from the D4RL MuJoCo Gym domain. Each metric is calculated with a 95\% CI bootstrap based on 9 tasks and 10 seeds for each task. Each seed further evaluates each method for 100 episodes. The interquartile mean (IQM) discards the top and bottom 25\% data points and calculates the mean across the remaining 50\% runs. The IQM is more robust as an estimator to outliers than the mean while maintaining less variance than the median. Higher median, IQM, mean scores, and lower Optimality Gap correspond to better performance. Our $\mathcal{I}_{\text{MG}}$ outperforms the baseline methods by a significant margin based on all four metrics.}\label{fig:iqm}
\centering
\includegraphics[width=\textwidth]{figures/reliable_eval.png}
\end{figure}

\begin{figure}[h]
\caption{Performance profiles (score distributions) for all methods on the 9 tasks from the D4RL MuJoCo Gym domain. The average score is calculated by averaging all runs within one task. Each task contains 10 seeds, and each seed evaluates for 100 episodes. Shaded area denotes 95\% confidence bands based on percentile bootstrap and stratified sampling~\citep{agarwal2021deep}.
The $\eta$ value where the curves intersect with the dashed horizontal line $y = 0.5$ corresponds to the median, while the area under the performance curves corresponds to the mean. 
}\label{fig:score-dist}
\centering
\includegraphics[width=\textwidth]{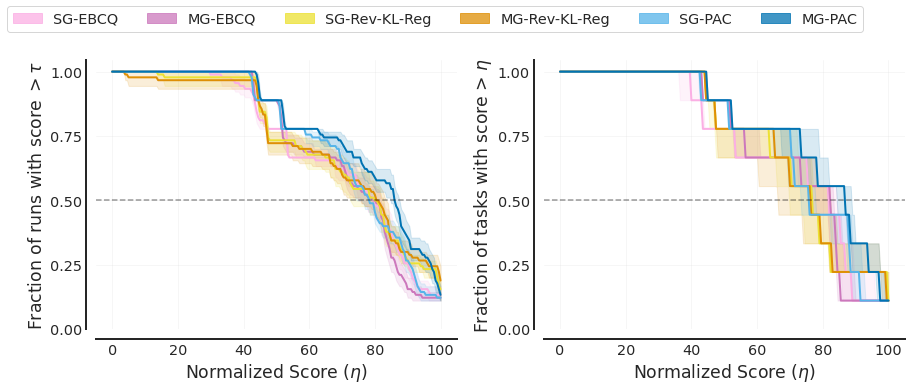}
\end{figure}

To demonstrate the superiority of our methods over the baselines and provide reliable evaluation results, we follow the evaluation protocols proposed in~\citep{agarwal2021deep} to re-examine the results in \autoref{tab:ablate}. Specifically, we adopt the evaluation methods for all methods with $N_{\text{tasks}} \times N_{\text{seeds}}$ runs in total.

Moreover, we obtain the performance profile of each method, revealing its score distribution and variability. In particular, the score distribution shows the fraction of runs above a certain threshold $\eta$ and is given by
\begin{equation*}
    \hat{F}(\eta)=\hat{F}\left(\eta ; x_{1: N_{\text{tasks}}, 1: N_{\text{seeds}}}\right)=\frac{1}{N_{\text{tasks}}} \sum_{m=1}^{N_{\text{tasks}}} \frac{1}{N_{\text{seeds}}} \sum_{n=1}^{N_{\text{seeds}}}\mathbbm{1}\left[x_{m, n} \geq \eta\right]
\end{equation*}
Evaluation results in Fig. \ref{fig:iqm} and Fig. \ref{fig:score-dist} demonstrate that our $\mathcal{I}_{\text{MG}}$ outperforms the baseline methods by a significant margin based on all four reliable metrics.

\newpage

\section{Hyper-parameter settings and training details}\label{sec:hp}

For all methods we proposed in \autoref{tab:main-results}, \autoref{tab:iql-antmaze}, and \autoref{tab:ablate}, we obtain the mean and standard deviation of each method across 10 seeds. Each seed contains individual training process and evaluates the policy for 100 episodes.

\subsection{HP and training details for methods in \autoref{tab:main-results} and \autoref{tab:ablate}}\label{sec:cfpi-hp}

\autoref{tab:mujoco-hp} includes the HP of methods evaluated on the Gym-MuJoCo domain. We use the Adam~\citep{kingma2014adam} optimizer for all learning algorithms and normalize the states in each dataset following the practice of TD3+BC~\citep{fujimoto2021minimalist}. Note that our one-step offline RL algorithms presented in \autoref{tab:main-results} (Our $\mathcal{I}_{\text{MG}}$) and \autoref{tab:ablate} ($\mathcal{I}_{\text{MG}}$, $\mathcal{I}_{\text{SG}}$, MG-EBCQ, SG-EBCQ, MG-MS) require learning a behavior policy and the value function $\hat{Q}^0$. Therefore, we will first describe the detailed procedures for learning Single Gaussian (SG-BC) and Gaussian Mixture (MG-BC) behavior policies. We next describe our SARSA-style training procedures to estimate $\hat{Q}^0$. Finally, we will present the details for each one-step algorithm.

\begin{table}[h]
    \centering
    \begin{tabular}{lll}
        \toprule 
        & Hyperparameter & Value \\
        \midrule
        Shared HP & Optimizer & Adam~\citep{kingma2014adam} \\
        & Normalize states & True \\
        & Policy architecture & MLP \\
        & Policy learning rate & 1{e}-4 \\
        & Policy hidden layers & 3 \\
        & Policy hidden dim & 256 \\
        & Policy activation function & ReLU \\
        & Threshold $\xi$ in \eqref{eq:mode-selection} & 0.05 \\
        \midrule 
        & Gaussian components ($N$) & 4 \\
        & Number of gradient steps & 500K \\
        MG-BC & Mini-batch size & 256 \\
        \midrule
        & Number of gradient steps & 500K \\
        SG-BC & Mini-batch size & 512 \\
        \midrule
        & Number of gradient steps & \autoref{tab:sarsa-train-step} \\
        & Critic architecture & IQN~\citep{dabney2018implicit} \\
        & Critic hidden dim & 256 \\
        & Critic hidden layers & 3 \\
        & Critic activation function & ReLU \\
        SARSA & Number of quantiles $N_{\text{q}}$ & 8 \\
        & Number of cosine basis elements & 64 \\
        & Discount factor & 0.99 \\
        & Target update rate & 5e-3 \\
        & Target update period & 1 \\
        \midrule
        Our $\mathcal{I}_{\text{MG}}$ (\autoref{tab:main-results}) & $\log\tau$ & 0 for Hopper-M-E; \\
        & & 0.5 for the others\\
        \midrule
        $\mathcal{I}_{\text{MG}}$ {\&} $\mathcal{I}_{\text{SG}}$(\autoref{tab:ablate})  & $\log\tau$ & 0.5 for all tasks\\
        \midrule
        MG-EBCQ & Number of candidate actions $N_{\text{bcq}}$ & 5 \\
        SG-EBCQ & Number of candidate actions $N_{\text{bcq}}$ & 10 \\
        \midrule
        MG-Rev. KL Reg & $\alpha$ & 3.0 \\
        {\&} SG-Rev. KL Reg & Number of gradient steps & 100K \\
        \bottomrule
    \end{tabular}
    \caption{Hyperparameters for our methods in \autoref{tab:main-results} and \autoref{tab:ablate}.}
    \label{tab:mujoco-hp}
\end{table}


{\bf MG-BC.} We parameterize the policy as a 3-layer MLP, which outputs the tanh of a Gaussian Mixture with $N = 4$ Gaussian components. For each Gaussian component, we learn the state-dependent diagonal covariance matrix. While existing methods suggest learning Gaussian Mixture via expectation maximization~\citep{jordan1994hierarchical,xu1994alternative,jin2016local} or variational Bayes~\citep{bishop2012bayesian}, we empirically find that directly minimizing the negative log-likelihood of actions sampled from the offline datasets achieves satisfactory performance, as is shown in \autoref{tab:main-results}. We train the policy for 500K gradient steps. We emphasize that we do not aim to propose a better algorithm for learning a Gaussian Mixture behavior policy. Instead, future work may use a more advanced algorithm to capture the underlying behavior policy better.

{\bf SG-BC.} We parameterize the policy as a 3-layer MLP, which outputs the tanh of a Single Gaussian with the state-dependent diagonal covariance matrix~\citep{fu2020d4rl,SAC}. We train the policy for 500K gradient steps.

{\bf SARSA.} We parameterize the value function with the IQN~\citep{dabney2018implicit} architecture and train it to model the distribution $Z^{\beta}: \mathcal{S}\times\mathcal{A}\rightarrow\mathcal{Z}$ of the behavior return via quantile regression, where $\mathcal{Z}$ is the action-value distributional space~\citep{ma2020dsac} defined as
\begin{equation}\label{eq:action-value-dist}
\mathcal{Z}=\left\{Z: \mathcal{S} \times \mathcal{A} \rightarrow \mathscr{P}(\mathbb{R}) \mid \mathbb{E}\left[|Z(s, a)|^{p}\right]<\infty, \forall(s, a), p \geq 1\right\}.
\end{equation}
We define the CDF function of $Z^{\beta}$ as $F_{Z^{\beta}}(z) = Pr(Z^{\beta} < z)$, leading to the quantile function~\citep{muller1997integral} $F^{-1}_{Z^{\beta}}(\rho) :=\inf\{ z\in\mathbb{R}: \rho\leq F_{Z^{\beta}}(z)\}$ as the inverse CDF function, where $\rho$ denotes the quantile fraction. We further denote $Z^{\beta}_{\rho} = F^{-1}_{Z^{\beta}}(\rho)$ to ease the notation.

To obtain $Z^{\beta}$, we leverage the empirical distributional bellman operator $\hat{\mathcal{T}}_{D}^{\beta}: \mathcal{Z} \rightarrow \mathcal{Z}$ defined as 
\begin{equation}
\hat{\mathcal{T}}_{D}^{\beta} Z(s, a): \stackrel{D}{=} r +\gamma Z\left(s^{\prime}, a^{\prime}\right) \mid  (s, a, r, s^{\prime}, a^{\prime}) \sim \mathcal{D},
\end{equation}
where $A : \stackrel{D}{=} B$ implies the random variables $A$ and $B$ are governed by the same distribution. We note that $\hat{\mathcal{T}}_{D}^{\beta}$ helps to construct a Huber quantile regression loss~\citep{dabney2018implicit,ma2020dsac,dabney2018distributional}, and we can finally learn $Z^{\beta}$ by minimizing the quantile regression loss following a similar procedures as in ~\citep{ma2020dsac}. 

To achieve the goal, we approximate $Z^{\beta}$ by $N_{\text{q}}$ quantile fractions $\{\rho_i\in[0, 1]\mid i=0\ldots N_{\text{q}}\}$ with $\rho_0=0$, $\rho_{N_{\text{q}}}=1$ and $\rho_i < \rho_j, \forall i<j$. We further denote $\hat{\rho}_i = (\rho_i + \rho_{i+1})/2$, and use random sampling~\citep{dabney2018implicit} to generate the quantile fractions. By further parameterizing $Z^{\beta}_{\rho}(s, a)$ as ${\hat Z}^{\beta}_{\rho}(s, a; \theta)$ with parameter $\theta$, we can derive the loss function $J_{Z}(\theta)$ as
\begin{equation}
\begin{aligned}\label{eq:huber-quantile-loss}
    & J_{Z}(\theta)=\mathbb{E}_{(s, a, r, s^{\prime}, a^{\prime}) \sim \mathcal{D}}\left[\sum_{i=0}^{N_{\text{q}}-1} \sum_{j=0}^{N_{\text{q}}-1}\left(\rho_{i+1}-\rho_{i}\right) l_{\hat{\rho}_{j}}\left(\delta_{i j}\right)\right], \\
    \text{where}\quad & \delta_{i j} = \delta_{i j}\left(s, a, r, s^{\prime}, a^{\prime}\right)=r+\gamma Z_{\hat{\rho}_{i}}\left(s', a' ; \bar{\theta}\right)-Z_{\hat{\rho}_{j}}\left(s, a; \theta\right)\\
    \text{and}\quad & l_{\rho}\left(\delta_{i j}\right)=\left|\rho-\mathbb{I}\left\{\delta_{i j}<0\right\}\right| \mathcal{L}\left(\delta_{i j}\right), \text { with } \mathcal{L}\left(\delta_{i j}\right)= \begin{cases}\frac{1}{2} \delta_{i j}^{2}, & \text { if }\left|\delta_{i j}\right| \leq 1 \\ \left|\delta_{i j}\right|-\frac{1}{2}, & \text { otherwise. }\end{cases} 
\end{aligned}.
\end{equation}
$\bar{\theta}$ is the parameter of the target network~\citep{ddpg} given by the Polyak averaging of $\theta$. We refer interested readers to~\citep{dabney2018implicit,ma2020dsac} for further details.

{The training procedures above returns ${\hat Z}^{\beta}_{\rho}, \forall \rho\in[0, 1]$. With the learned ${\hat Z}^{\beta}_{\rho}$, our one-step methods presented in \autoref{tab:main-results} and \autoref{tab:ablate} extract the value function by setting $\hat{Q}_0 = \mathbb{E}_{\rho}[\hat{Z}_{\rho}^{\beta}] = {\hat Q}^{\beta}$ as the expectation of ${\hat Z}_{\rho}^{\beta}$, which is equivalent to the conventional action-value function ${\hat Q}^{\beta}$. Specifically, we use $N = 32$ fixed quantile fractions with $\rho_i = i / N, \,\,i = 0\ldots N$. Given a state-action pair $(s, a)$, we calculate $\hat{Q}_0(s, a) = {\hat Q}^{\beta}(s, a)$ as}
\begin{equation}
  \hat{Q}_0(s, a) = {\hat Q}^{\beta}(s, a) = \frac{1}{N}\sum_{i=1}^N {\hat Z}^{\beta}_{\hat\rho_i}(s, a),\quad\hat{\rho}_i = \frac{{\rho}_i + {\rho}_{i-1}}{2}.
\end{equation}
Since our methods still need to query out-of-buffer action values during rollout, we employed the conventional double Q-learning~\citep{TD3} technique to prevent potential overestimation without clipping. Specifically, we initialize $\hat{Q}^1_0$ and $\hat{Q}^2_0$ differently and train them to minimize \eqref{eq:huber-quantile-loss}. With the learned $\hat{Q}^1_0$ and $\hat{Q}^2_0$, we set the value of $\hat{Q}_0(s, a)$ as
\begin{equation}\label{eq:double-q}
    \hat{Q}_0(s, a) = \min_{k = 1,2} \hat{Q}^k_0(s, a)
\end{equation}
for every $(s, a)$ pair. Note that the double Q-learning technique is only used during policy evaluation.

As for deciding the number of gradient steps, we detail our procedures in Appendix \ref{sec:sarsa-gradient-step}. And the number of gradient steps for each dataset can be found in \autoref{tab:sarsa-train-step}.

{\bf Our $\mathcal{I}_{\text{MG}}$ (\autoref{tab:main-results}).} Recall that our CFPI operator $\mathcal{I}_{\text{MG}}(\hat{\pi}_{\beta}, \hat{Q}_0;\tau)$ requires to learn a Gaussian Mixture behavior policy $\hat{\pi}_{\beta}$ and a value function $\hat{Q}_0$. We train $\hat{\pi}_{\beta}$ and $\hat{Q}_0$ according to the procedures listed in {\bf MG-BC} and {\bf SARSA}, respectively. {By following the practice of ~\citep{brandfonbrener2021offline,fu2020d4rl}, we perform a grid search on $\log\tau\in\{0, 0.5, 1.0, 1.5, 2.0\}$ using 3 seeds. We note that we manually reduce $\mathcal{I}_{\text{MG}}$ to MG-MS when $\log\tau = 0$ by only considering the mean of each non-trivial Gaussian component. Our results show that setting $\log\tau = 0.5$ achieves the best overall performance while Hopper-M-E requires an extremely small $\log\tau$ to perform well as is shown in Appendix \ref{sec:ablate-tau}. Therefore, we decide to set $\log\tau = 0$ for Hopper-M-E and $\log\tau = 0.5$ for the other 8 datasets. We then obtain the results for the other 7 seeds with these HP settings and report the results on the 10 seeds in total.}

{\bf $\mathcal{I}_{\text{MG}}$ (\autoref{tab:ablate}) \& $\mathcal{I}_{\text{SG}}$ (\autoref{tab:ablate}).} Different from the results in \autoref{tab:main-results}, we use the same $\log\tau = 0.5$ for all datasets including Hopper-M-E to obtain the performance of $\mathcal{I}_{\text{MG}}$ in \autoref{tab:ablate}. In this way, we aim to understand the effectiveness of each component of our methods better. To fairly compare $\mathcal{I}_{\text{MG}}$ and $\mathcal{I}_{\text{SG}}$, we tune the $\tau$ for $\mathcal{I}_{\text{SG}}$ in a similar way {by performing a grid search on $\log\tau\in\{0.5, 1.0, 1.5, 2.0\}$ with 3 seeds and finally set $\log\tau = 0.5$ for all datasets. We then obtain the results for the other 7 seeds and report the results with 10 seeds in total.}

{{\bf MG-EBCQ \& SG-EBCQ.} We tune the number of candidate actions $N_{\text{bcq}}$ from the same range $\{2, 5, 10, 20, 50, 100\}$ as is in~\citep{brandfonbrener2021offline}. For each $N_{\text{bcq}}$, we obtain its average performance for all tasks across 10 seeds and select the best performing $N_{\text{bcq}}$ for each method. We separately tune the $N_{\text{bcq}}$ for MG-EBCQ and SG-EBCQ. As a result, we set $N_{\text{bcq}} = 5$ for MG-EBCQ and $N_{\text{bcq}} = 10$ for SG-EBCQ. \textbf{Moreover, we highlight that MG-EBCQ (SG-EBCQ) uses the same behavior policy and value function as is in $\mathcal{I}_{\text{MG}}$ ($\mathcal{I}_{\text{SG}}$).} We include the full hyper-parameter search results in \autoref{tab:mg-ebcq-hp} and \autoref{tab:sg-ebcq-hp}.}

\begin{table}[t]
    \caption{HP search for MG-EBCQ. We report the mean and std of 10 seeds, and each seed evaluates for 100 episodes.}
    \centering
    \scalebox{0.8}{
        \begin{tabular}{l|ccccccc|c}
            \toprule
            Dataset & $N_{\text{bcq}} = 2$  & $N_{\text{bcq}} = 5$  & $N_{\text{bcq}} = 10$ & $N_{\text{bcq}} = 20$ & $N_{\text{bcq}} = 50$ & $N_{\text{bcq}} = 100$\\ 
            \midrule
            Cheetah-M-v2 & ${47.2 \pm 0.3}$ & ${51.5 \pm 0.2}$ & ${53.3 \pm 0.3}$ & ${54.4 \pm 0.3}$ & ${55.3 \pm 0.4}$ & ${55.8 \pm 0.4}$\\
            Hopper-M-v2 & ${63.3 \pm 2.3}$ & ${82.5 \pm 1.9}$ & ${88.3 \pm 4.6}$ & ${90.8 \pm 6.9}$ & ${92.1 \pm 7.6}$ & ${91.3 \pm 9.4}$\\
            Walker2d-M-v2 & ${78.6 \pm 1.4}$ & ${85.2 \pm 2.1}$ & ${81.0 \pm 6.3}$ & ${73.4 \pm 11.0}$ & ${67.6 \pm 14.8}$ & ${62.7 \pm 15.8}$\\
            \midrule
            Cheetah-M-R-v2 & ${38.8 \pm 0.6}$ & ${43.0 \pm 0.3}$ & ${44.2 \pm 0.3}$ & ${44.7 \pm 0.5}$ & ${44.8 \pm 0.8}$ & ${44.6 \pm 0.8}$\\
            Hopper-M-R-v2 & ${58.4 \pm 6.9}$ & ${83.6 \pm 10.3}$ & ${82.8 \pm 14.9}$ & ${82.3 \pm 15.9}$ & ${77.5 \pm 17.7}$ & ${76.0 \pm 16.7}$\\
            Walker2d-M-R-v2 & ${55.1 \pm 3.4}$ & ${73.1 \pm 5.2}$ & ${75.6 \pm 5.3}$ & ${77.6 \pm 5.4}$ & ${78.0 \pm 5.6}$ & ${78.5 \pm 4.5}$\\
            \midrule
            Cheetah-M-E-v2 & ${75.2 \pm 3.2}$ & ${84.5 \pm 4.6}$ & ${82.7 \pm 5.2}$ & ${77.6 \pm 7.3}$ & ${73.4 \pm 6.4}$ & ${68.8 \pm 5.9}$\\
            Hopper-M-E-v2 & ${73.6 \pm 7.5}$ & ${56.1 \pm 6.2}$ & ${44.9 \pm 4.6}$ & ${37.3 \pm 3.6}$ & ${29.8 \pm 2.9}$ & ${25.3 \pm 3.3}$\\
            Walker2d-M-E-v2 & ${107.1 \pm 1.8}$ & ${111.1 \pm 1.0}$ & ${111.4 \pm 1.5}$ & ${111.4 \pm 2.5}$ & ${109.6 \pm 4.0}$ & ${107.2 \pm 6.0}$\\
            Total & $597.2 \pm 27.4$ & $670.6 \pm 31.9$ & $664.1 \pm 43.1$ & $649.5 \pm 53.5$ & $628.0 \pm 60.2$ & $610.2 \pm 62.9$\\
            \bottomrule
        \end{tabular}
    }
    \label{tab:mg-ebcq-hp}
\end{table}

\begin{table}[t]
    \caption{HP search for SG-EBCQ. We report the mean and std of 10 seeds, and each seed evaluates for 100 episodes.}
    \centering
    \scalebox{0.8}{
        \begin{tabular}{l|ccccccc|c}
            \toprule
            Dataset & $N_{\text{bcq}} = 2$  & $N_{\text{bcq}} = 5$  & $N_{\text{bcq}} = 10$ & $N_{\text{bcq}} = 20$ & $N_{\text{bcq}} = 50$ & $N_{\text{bcq}} = 100$\\ 
            \midrule
            Cheetah-M-v2 & ${47.1 \pm 0.2}$ & ${51.5 \pm 0.1}$ & ${53.3 \pm 0.2}$ & ${54.4 \pm 0.3}$ & ${55.3 \pm 0.3}$ & ${55.8 \pm 0.4}$\\
            Hopper-M-v2 & ${60.7 \pm 2.4}$ & ${78.6 \pm 4.0}$ & ${86.8 \pm 5.2}$ & ${89.1 \pm 7.7}$ & ${89.8 \pm 8.8}$ & ${89.8 \pm 9.8}$\\
            Walker2d-M-v2 & ${78.5 \pm 2.8}$ & ${86.9 \pm 1.8}$ & ${85.2 \pm 5.1}$ & ${81.5 \pm 9.3}$ & ${76.6 \pm 11.8}$ & ${72.4 \pm 13.8}$\\
            \midrule
            Cheetah-M-R-v2 & ${37.8 \pm 0.7}$ & ${42.3 \pm 0.6}$ & ${43.5 \pm 0.6}$ & ${44.3 \pm 0.7}$ & ${44.1 \pm 1.1}$ & ${43.6 \pm 0.9}$\\
            Hopper-M-R-v2 & ${58.7 \pm 5.8}$ & ${85.2 \pm 9.0}$ & ${88.5 \pm 12.2}$ & ${89.1 \pm 11.7}$ & ${83.9 \pm 15.0}$ & ${82.1 \pm 16.1}$\\
            Walker2d-M-R-v2 & ${54.0 \pm 7.2}$ & ${72.2 \pm 5.2}$ & ${75.4 \pm 4.6}$ & ${77.7 \pm 4.8}$ & ${77.5 \pm 5.8}$ & ${74.9 \pm 6.2}$\\
            \midrule
            Cheetah-M-E-v2 & ${71.8 \pm 2.2}$ & ${81.9 \pm 4.8}$ & ${81.8 \pm 5.4}$ & ${77.6 \pm 6.9}$ & ${71.5 \pm 7.5}$ & ${68.2 \pm 6.5}$\\
            Hopper-M-E-v2 & ${66.4 \pm 4.8}$ & ${49.8 \pm 6.2}$ & ${40.0 \pm 5.8}$ & ${34.9 \pm 6.2}$ & ${29.0 \pm 5.7}$ & ${25.2 \pm 4.8}$\\
            Walker2d-M-E-v2 & ${106.6 \pm 1.6}$ & ${111.0 \pm 0.9}$ & ${111.1 \pm 1.8}$ & ${110.0 \pm 3.7}$ & ${107.2 \pm 7.8}$ & ${106.0 \pm 9.0}$\\
            Total & $581.6 \pm 27.7$ & $659.4 \pm 32.7$ & $665.5 \pm 41.0$ & $658.7 \pm 51.3$ & $634.7 \pm 63.9$ & $618.1 \pm 67.5$\\
            \bottomrule
        \end{tabular}
    }
    \label{tab:sg-ebcq-hp}
\end{table}

\begin{table}[t]
    \caption{HP search for MG-Rev. KL Reg. We report the mean and std of 10 seeds, and each seed evaluates for 100 episodes.}
    \centering
    \scalebox{0.8}{
        \begin{tabular}{l|ccccccc|c}
            \toprule
            Dataset & $\alpha = 0.03$  & $\alpha = 0.1$  & $\alpha = 0.3$ & $\alpha = 1.0$ & $\alpha = 3.0$ & $\alpha = 10.0$\\ 
            \midrule
            Cheetah-M-v2 & ${58.3 \pm 1.1}$ & ${58.1 \pm 1.2}$ & ${55.6 \pm 0.5}$ & ${50.6 \pm 0.3}$ & ${47.0 \pm 0.2}$ & ${44.5 \pm 0.2}$\\
            Hopper-M-v2 & ${14.4 \pm 13.6}$ & ${41.0 \pm 31.0}$ & ${89.4 \pm 22.2}$ & ${99.7 \pm 1.1}$ & ${76.3 \pm 6.9}$ & ${58.5 \pm 4.0}$\\
            Walker2d-M-v2 & ${5.8 \pm 4.8}$ & ${18.4 \pm 21.9}$ & ${34.2 \pm 27.3}$ & ${82.2 \pm 7.8}$ & ${82.8 \pm 1.8}$ & ${76.9 \pm 2.0}$\\
            \midrule
            Cheetah-M-R-v2 & ${46.7 \pm 1.8}$ & ${47.5 \pm 1.6}$ & ${48.1 \pm 0.7}$ & ${46.4 \pm 0.6}$ & ${44.4 \pm 0.5}$ & ${43.1 \pm 0.4}$\\
            Hopper-M-R-v2 & ${70.9 \pm 33.8}$ & ${86.6 \pm 26.3}$ & ${103.1 \pm 0.8}$ & ${101.4 \pm 1.1}$ & ${99.4 \pm 2.1}$ & ${77.6 \pm 17.2}$\\
            Walker2d-M-R-v2 & ${73.7 \pm 28.8}$ & ${65.4 \pm 33.8}$ & ${64.0 \pm 39.9}$ & ${65.4 \pm 35.8}$ & ${69.7 \pm 30.9}$ & ${57.7 \pm 22.8}$\\
            \midrule
            Cheetah-M-E-v2 & ${0.4 \pm 2.2}$ & ${1.2 \pm 1.9}$ & ${4.0 \pm 1.9}$ & ${25.0 \pm 6.3}$ & ${65.0 \pm 10.1}$ & ${86.2 \pm 7.1}$\\
            Hopper-M-E-v2 & ${2.6 \pm 1.7}$ & ${16.2 \pm 7.9}$ & ${22.5 \pm 10.7}$ & ${57.4 \pm 23.6}$ & ${79.4 \pm 32.6}$ & ${86.8 \pm 15.7}$\\
            Walker2d-M-E-v2 & ${10.4 \pm 15.3}$ & ${25.5 \pm 38.1}$ & ${93.5 \pm 34.5}$ & ${109.8 \pm 0.6}$ & ${107.1 \pm 4.0}$ & ${97.4 \pm 7.0}$\\
            Total & $283.2 \pm 103.0$ & $359.9 \pm 163.5$ & $514.3 \pm 138.5$ & $637.8 \pm 77.2$ & $671.2 \pm 89.1$ & $628.6 \pm 76.4$\\
            \bottomrule
        \end{tabular}
    }
    \label{tab:mg-rev-kl-hp}
\end{table}

\begin{table}[t]
    \caption{HP search for SG-Rev. KL Reg. We report the mean and std of 10 seeds, and each seed evaluates for 100 episodes.}
    \centering
    \scalebox{0.8}{
        \begin{tabular}{l|ccccccc|c}
            \toprule
            Dataset & $\alpha = 0.03$  & $\alpha = 0.1$  & $\alpha = 0.3$ & $\alpha = 1.0$ & $\alpha = 3.0$ & $\alpha = 10.0$\\ 
            \midrule
            Cheetah-M-v2 & ${58.6 \pm 1.3}$ & ${57.9 \pm 0.8}$ & ${55.2 \pm 0.5}$ & ${50.7 \pm 0.5}$ & ${47.1 \pm 0.2}$ & ${44.5 \pm 0.3}$\\
            Hopper-M-v2 & ${18.7 \pm 15.6}$ & ${40.2 \pm 24.7}$ & ${83.2 \pm 19.6}$ & ${98.8 \pm 2.0}$ & ${70.3 \pm 7.0}$ & ${57.2 \pm 4.6}$\\
            Walker2d-M-v2 & ${5.6 \pm 3.5}$ & ${26.2 \pm 27.1}$ & ${37.0 \pm 27.6}$ & ${83.3 \pm 7.5}$ & ${82.4 \pm 1.0}$ & ${77.1 \pm 1.2}$\\
            \midrule
            Cheetah-M-R-v2 & ${46.1 \pm 3.6}$ & ${47.8 \pm 1.3}$ & ${47.8 \pm 0.8}$ & ${46.0 \pm 0.5}$ & ${44.3 \pm 0.4}$ & ${42.5 \pm 0.6}$\\
            Hopper-M-R-v2 & ${77.4 \pm 19.1}$ & ${60.8 \pm 27.7}$ & ${92.0 \pm 21.9}$ & ${100.7 \pm 1.0}$ & ${99.7 \pm 1.0}$ & ${70.3 \pm 19.2}$\\
            Walker2d-M-R-v2 & ${59.5 \pm 31.3}$ & ${72.7 \pm 38.8}$ & ${75.7 \pm 30.4}$ & ${75.1 \pm 25.3}$ & ${63.6 \pm 28.5}$ & ${59.7 \pm 21.5}$\\
            \midrule
            Cheetah-M-E-v2 & ${1.1 \pm 3.2}$ & ${3.4 \pm 3.4}$ & ${7.1 \pm 4.3}$ & ${38.9 \pm 18.4}$ & ${78.9 \pm 9.8}$ & ${89.1 \pm 4.0}$\\
            Hopper-M-E-v2 & ${5.5 \pm 4.0}$ & ${20.1 \pm 8.6}$ & ${24.8 \pm 7.9}$ & ${43.8 \pm 23.6}$ & ${76.6 \pm 18.3}$ & ${67.7 \pm 30.6}$\\
            Walker2d-M-E-v2 & ${1.7 \pm 3.7}$ & ${13.4 \pm 33.5}$ & ${83.2 \pm 37.5}$ & ${109.9 \pm 0.7}$ & ${106.7 \pm 4.1}$ & ${96.8 \pm 7.6}$\\
            Total & $274.0 \pm 85.3$ & $342.6 \pm 165.9$ & $505.9 \pm 150.5$ & $647.2 \pm 79.5$ & $669.7 \pm 70.3$ & $604.9 \pm 89.5$\\

            \bottomrule
        \end{tabular}
    }
    \label{tab:sg-rev-kl-hp}
\end{table}

{{\bf MG-Rev. KL Reg \& SG-Rev. KL Reg.} We tune the regularization strength $\alpha$ from the same range $\{0.03, 0.1, 0.3, 1.0, 3.0, 10.0\}$ as is in~\citep{brandfonbrener2021offline}. For each $\alpha$, we obtain its average performance for all tasks across 10 seeds and select the best performing $\alpha$ for each method. We separately tune the $\alpha$ for  MG-Rev. KL Reg \& SG-Rev. KL Reg, although $\alpha = 3.0$ achieves the best overall performance in both methods. \textbf{Moreover, we highlight that MG-Rev. KL Reg (SG-Rev. KL Reg) uses the same behavior policy and value function as is in $\mathcal{I}_{\text{MG}}$ ($\mathcal{I}_{\text{SG}}$).} We include the full hyper-parameter search results in \autoref{tab:mg-rev-kl-hp} and \autoref{tab:sg-rev-kl-hp}.}
\newpage
\subsection{HP and training details for methods in \autoref{tab:iql-antmaze}}\label{sec:iql-hp}
\begin{table}[ht]
    \centering
    \begin{tabular}{lll}
        \toprule 
        & Hyperparameter & Value \\
        \midrule
        Shared HP & Normalize states & False \\
        \midrule
        & Optimizer & Adam~\citep{kingma2014adam} \\

        & Number of gradient steps & 1M \\
        & Mini-batch size & 256 \\
        & Policy learning rate & 3{e}-4 \\
        & Policy hidden dim & 256 \\
        & Policy hidden layers & 2 \\
        IQL HP & Policy activation function & ReLU \\
        & Critic architecture & MLP \\
        & Critic learning rate & 3{e}-4 \\
        & Critic hidden dim & 256 \\
        & Critic hidden layers & 2 \\
        & Critic activation function & ReLU \\
        & Target update rate & 5e-3 \\
        & Target update period & 1 \\
        & quantile & 0.9 \\
        & temperature & 10.0 \\ 
        \midrule
        $\mathcal{I}_{\text{SG}}(\pi_{\text{IQL}}, Q_{\text{IQL}})$ & $\log\tau$ & selected from $\{0.1, 0.2, 2.0\}$\\
        \bottomrule
    \end{tabular}
    \caption{Hyperparameters for methods in \autoref{tab:iql-antmaze}}
    \label{tab:iql-hp}
\end{table}
\autoref{tab:iql-hp} includes the HP for experiments in Sec. \ref{sec:improe-learned-policy}. We use the same HP for the IQL training as is reported in the IQL paper. We obtain the IQL policy $\pi_{\text{IQL}}$ and $Q_{\text{IQL}}$ by training for 1M gradient steps using the PyTorch Implementation from RLkit~\citep{rlkit}, a widely used RL library. We emphasize that we follow the authors' exact training and evaluation protocol. We include the training curves for all tasks from the AntMaze domain in Appendix \ref{sec:iql-train-curves}.

Note that IQL~\citep{kostrikov2021iql} reported inconsistent offline experiment results on AntMaze in its paper's Table 1, Table 2, Table 5, and Table 6 \footnote{\href{https://openreview.net/pdf?id=68n2s9ZJWF8}{Link} to the IQL paper. IQL's Table 5 \& 6 are presented in the \href{http://www.overleaf.com}{supplementary material}.}. We suspect that these results are obtained from different sets of random seeds. In Appendix \ref{sec:iql-train-curves}, we present all these results in \autoref{tab:iql-report-antmaze}.

{To obtain the performance for $\mathcal{I}_{\text{SG}}(\pi_{\text{IQL}}, Q_{\text{IQL}})$, we follow the practice of ~\citep{brandfonbrener2021offline,fu2020d4rl} and perform a grid search on $\log\tau\in\{0.1, 0.2, 2.0\}$ using 3 seeds for each dataset. We then evaluate the best choice for each dataset by obtaining corresponding results on the other 7 seeds. We finally report the results with 10 seeds in total.}

\newpage
\section{Additional Experiments}\label{sec:add-exp}

\subsection{Complete experiment results for MG-MS}\label{sec:mg-ms-exp}
{\autoref{tab:mg-ms-result} provides the results of MG-MS on the 9 tasks from the MuJoCo Gym domain in compensation for the results in Sec. \ref{sec:ablation}.}

\begin{table}[h]
    \caption{Results of MG-MS on the MuJoCo Gym domain. We report the mean and standard deviation across 10 seeds, and each seed evaluates for 100 episodes.}
    \centering
    \scalebox{1.0}{
        \begin{tabular}{l|c}
            \toprule
            Dataset & MG-MS \eqref{eq:mode-selection}\\ 
            \midrule
            Cheetah-M-v2 & ${43.6 \pm 0.2}$\\
            Hopper-M-v2 & ${55.3 \pm 6.3}$\\
            Walker2d-M-v2 & ${73.6 \pm 2.2}$\\
            \midrule
            Cheetah-M-R-v2 & ${42.4 \pm 0.4}$\\
            Hopper-M-R-v2 & ${61.5 \pm 15.1}$\\
            Walker2d-M-R-v2 & ${65.0 \pm 10.4}$\\
            \midrule
            Cheetah-M-E-v2 & ${91.3 \pm 2.1}$\\
            Hopper-M-E-v2 & ${104.2 \pm 5.1}$\\
            Walker2d-M-E-v2 & ${104.1 \pm 6.7}$\\
            \midrule
            Total & $641.1 \pm 48.5$\\
            \bottomrule
        \end{tabular}
    }
    \label{tab:mg-ms-result}
\end{table}
\newpage
\subsection{Complete experiment results on the effect of the HP $\tau$}\label{sec:ablate-tau}
Fig. \ref{fig:complete-tau} presents additional results in compensation for the results in Sec. \ref{sec:ablation}. We note that Hopper-Medium-Expert-v2 requires a much smaller $\log\tau$ than the other tasks to perform well.

\begin{figure*}[h]
\centering
  \makebox[\textwidth]{
  
  \begin{subfigure}{0.22\paperwidth}
    \includegraphics[width=\linewidth]{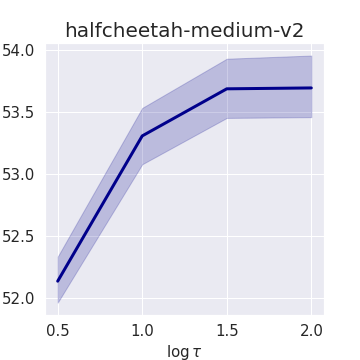}
  \end{subfigure}
  
  \begin{subfigure}{0.22\paperwidth}
    \includegraphics[width=\linewidth]{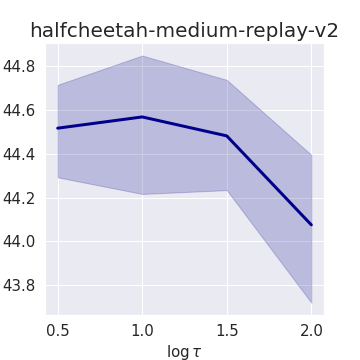}
  \end{subfigure}

  \begin{subfigure}{0.22\paperwidth}
    \includegraphics[width=\linewidth]{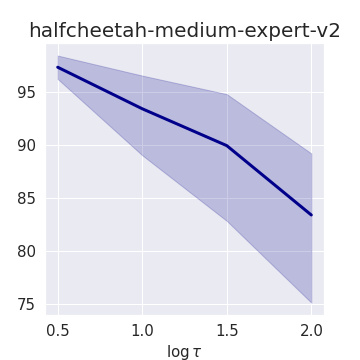}
  \end{subfigure}
  }
  \makebox[\textwidth]{
  
  \begin{subfigure}{0.22\paperwidth}
    \includegraphics[width=\linewidth]{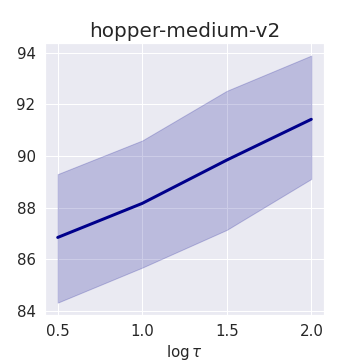}
  \end{subfigure}
  
  \begin{subfigure}{0.22\paperwidth}
    \includegraphics[width=\linewidth]{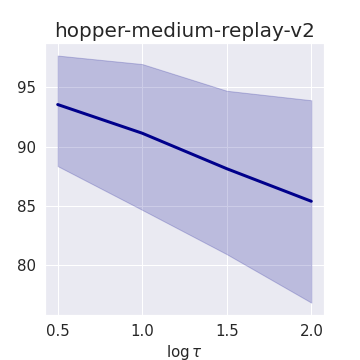}
  \end{subfigure}

  \begin{subfigure}{0.22\paperwidth}
    \includegraphics[width=\linewidth]{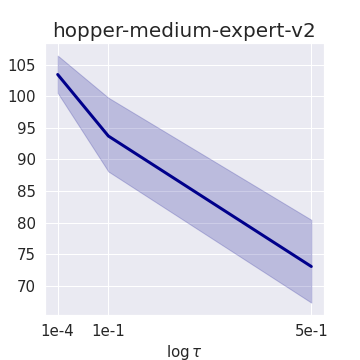}
  \end{subfigure}
  }
  \makebox[\textwidth]{
  
  \begin{subfigure}{0.22\paperwidth}
    \includegraphics[width=\linewidth]{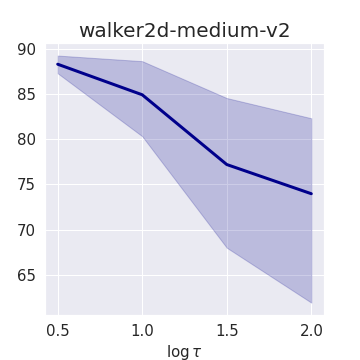}
  \end{subfigure}
  
  \begin{subfigure}{0.22\paperwidth}
    \includegraphics[width=\linewidth]{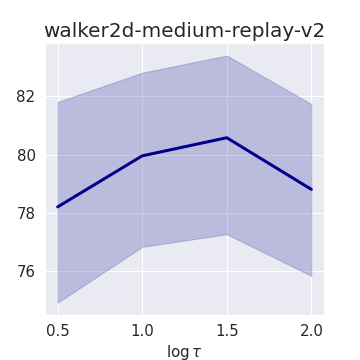}
  \end{subfigure}

  \begin{subfigure}{0.22\paperwidth}
    \includegraphics[width=\linewidth]{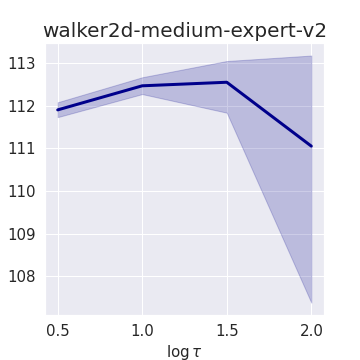}
  \end{subfigure}
  }
  \caption{Performance of $\mathcal{I}_{\text{MG}}$ with varying $\log\tau$. The other HP can be found in \autoref{tab:mujoco-hp}. Each variant averages returns over 10 seeds, and each seed contains 100 evaluation episodes. The shaded area denotes bootstrapped 95\% CI.}
\label{fig:complete-tau}
\end{figure*}

\newpage
\subsection{Ablation study on the number Gaussian components}\label{sec:ablate-num-gau}
In this section, we explore whether increasing the number of Gaussian components will result in a performance boost. We use the same settings as in \autoref{tab:main-results} except modeling $\hat{\pi}_{\beta}$ with $8$ Gaussian instead of $4$. We hypothesize the performance gain should most likely happen on the three Medium-Replay datasets, as these datasets are collected by diverse policies. However, \autoref{tab:more-components} shows that simply increasing the number of Gaussian components from $4$ to $8$ hardly results in a performance boost, as increasing the number of Gaussian components will induce extra optimization difficulties during behavior cloning~\citep{jin2016local}.

\begin{table}[h]
    \caption{Comparison between setting the number of Gaussian components to $4$ and $8$ for our $\mathcal{I}_{\text{MG}}$ on the three Medium-Replay datasets. We report the mean and standard deviation across 10 seeds, and each seed evaluates for 100 episodes.}
    \centering
    \scalebox{1.0}{
        \begin{tabular}{l|c|c}
            \toprule
            Dataset & $4$ components (\autoref{tab:main-results}) & $8$ components \\ 
            \midrule
            Cheetah-M-R-v2 & $44.5 \pm 0.4$ & $44.3 \pm 0.3$\\
            Hopper-M-R-v2 & $93.6 \pm 7.9$ & $90.6 \pm 11.6$\\
            Walker2d-M-R-v2 & $78.2 \pm 5.6$ & $79.4 \pm 4.5$\\
            \bottomrule
        \end{tabular}
    }
    \label{tab:more-components}
\end{table}

\newpage
\subsection{Modeling the value network with conventional MLP}\label{sec:ablate-ensemble-size}

\begin{figure*}[h]
\centering
  \makebox[\textwidth]{
  
  \begin{subfigure}{0.20\paperwidth}
    \includegraphics[width=\linewidth]{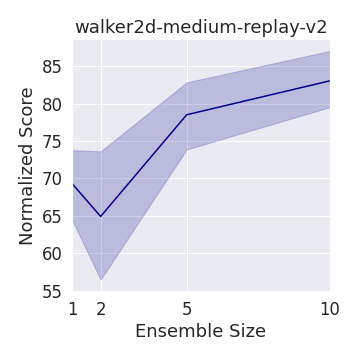}
  \end{subfigure}
  
  \begin{subfigure}{0.20\paperwidth}
    \includegraphics[width=\linewidth]{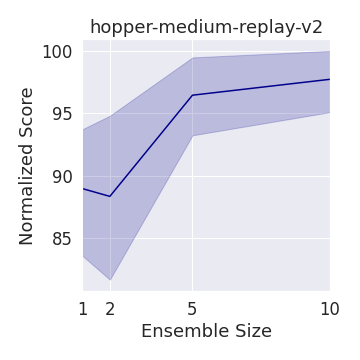}
  \end{subfigure}

  \begin{subfigure}{0.20\paperwidth}
    \includegraphics[width=\linewidth]{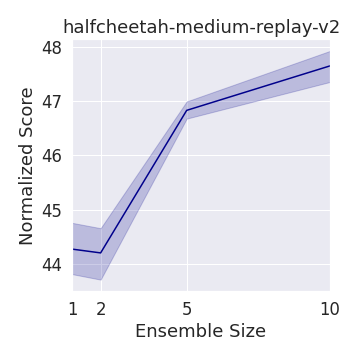}
  \end{subfigure}

  }
  \caption{Performance of $\mathcal{I}_{\text{MG}}$ with varying ensemble sizes. Each variant averages returns over 8 seeds, and each seed contains 100 evaluation episode. Each $Q$-value network is modeled by a 3-layer MLP. The shaded area denotes bootstrapped 95\% CI.}
\label{fig:ensemble-size}
\end{figure*}

\begin{figure*}[h]
\centering
  \makebox[\textwidth]{
  
  \begin{subfigure}{0.3\paperwidth}
    \includegraphics[width=\linewidth]{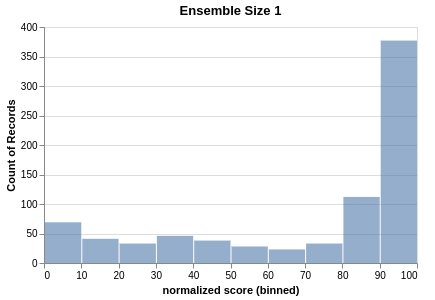}
  \end{subfigure}
  
  \begin{subfigure}{0.3\paperwidth}
    \includegraphics[width=\linewidth]{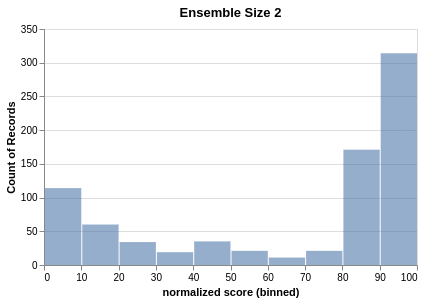}
  \end{subfigure}
}
 \makebox[\textwidth]{
  \begin{subfigure}{0.3\paperwidth}
    \includegraphics[width=\linewidth]{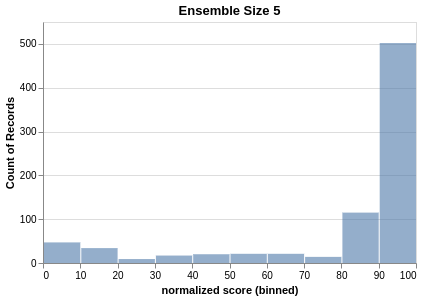}
  \end{subfigure}
  
  \begin{subfigure}{0.3\paperwidth}
    \includegraphics[width=\linewidth]{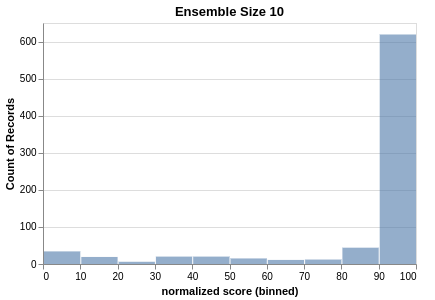}
  \end{subfigure}
}
  
  \caption{Performance of $\mathcal{I}_{\text{MG}}$ with varying ensemble sizes on Walker2d-Medium-Replay-v2. Each variant aggregates returns over 8 seeds, and each seed evaluates for 100 episodes. Each $Q$-value network is modeled by a 3-layer MLP. With lower ensemble size, the performance exhibits large variance across different episodes.}
\label{fig:ensemble-size-hist}
\end{figure*}

Our experiments in Sec. \ref{sec:benchmark} rely on learning a Q value function with the IQN~\citep{dabney2018implicit} architecture. In this section, we examine the effectiveness of our CFPI operator $\mathcal{I}_{\text{MG}}$ when working with an ensemble of conventional MLP $Q$-value networks with varying ensemble sizes $M$.

Each $Q$-value network ${\hat Q}^{\text{MLP}}_{\theta_k}$ uses ReLU activation and is parameterized with $\theta_k$, including 3 hidden layers of width 256. We train each ${\hat Q}^{\text{MLP}}_{\theta_k}$ by minimizing the bellman error below
\begin{equation}
    L(\theta_k) = \mathbb{E}_{(s,a,r,s',a')\sim\mathcal{D}}\left[ r + \gamma{\hat Q}^{\text{MLP}}(s', a';{\bar{\theta_k}}) - {\hat Q}^{\text{MLP}}(s, a;{\theta_k})\right],
\end{equation}
where ${\bar {\theta_k}}$ is the parameter of a target network given by the Polyak averaging of $\theta$. We set ${\hat Q}^{\text{MLP}}(s, a;{\theta_k}) = {\hat Q}^{\text{MLP}}_{\theta_k}(s, a)$. We further note that \autoref{eq:double-q} can be reformulated as
\begin{equation}
    \begin{aligned}\label{eq:general-q-lb}
        \hat{Q}_0(s, a) = \min_{k = 1,2} \hat{Q}^k_0(s, a) & = \frac{1}{2}|\hat{Q}^1_0(s, a) + \hat{Q}^2_0(s, a)| - \frac{1}{2}|\hat{Q}^1_0(s, a) - \hat{Q}^2_0(s, a)| \\
        & = \hat{\mu}_Q (s, a) - \hat{\sigma}_Q (s, a),
    \end{aligned}
\end{equation}
where $\hat{\mu}_Q$ and $\hat{\sigma}_Q$ calculate the mean and standard deviation of $Q$ value~\citep{oac}. In the case with an ensemble of $Q$, we obtain $\hat{Q}_0(s, a)$ by generalizing \eqref{eq:general-q-lb} as below
\begin{equation}
    \begin{aligned}
        & \hat{Q}_0(s, a) = \hat{\mu}_Q^{\text{MLP}} - \sqrt{\frac{1}{M}\sum^{M}_{k=1}\left({\hat Q}^{\text{MLP}}(s, a;{\theta_k}) -\hat{\mu}_Q^{\text{MLP}}\right)^2},\\
        & \quad \text{where}\quad \hat{\mu}_Q^{\text{MLP}} = \frac{1}{M}\sum^{M}_{k=1}{\hat Q}^{\text{MLP}}(s, a;{\theta_k}).
    \end{aligned}
\end{equation}
Other than the $Q$-value network, we applied the same setting as $\mathcal{I}_{\text{MG}}$ in \autoref{tab:ablate}. Fig. \ref{fig:ensemble-size} presents the results with different ensemble sizes, showing that the performance generally increases with the ensemble size. Such a phenomenon illustrates a limitation of our CFPI operator $\mathcal{I}_{\text{MG}}$, as it heavily relies on accurate gradient information $\nabla_a[\hat{Q}_0(s, a)]_{a = a_\beta}$.

A large ensemble of $Q$ is more likely to provide accurate gradient information, thus leading to better performance. In contrast, a small ensemble size provides noisy gradient information, resulting in high variance across different rollout, as is shown in Fig. \ref{fig:ensemble-size-hist}.

\newpage
\subsection{How to decide the number of gradient steps for SARSA training?}\label{sec:sarsa-gradient-step}

\begin{figure*}[h]
\centering
  \makebox[\textwidth]{
  
  \begin{subfigure}{0.23\paperwidth}
    \includegraphics[width=\linewidth]{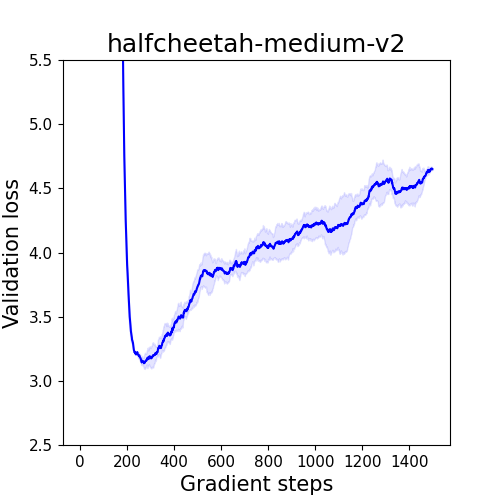}
  \end{subfigure}
  
  \begin{subfigure}{0.23\paperwidth}
    \includegraphics[width=\linewidth]{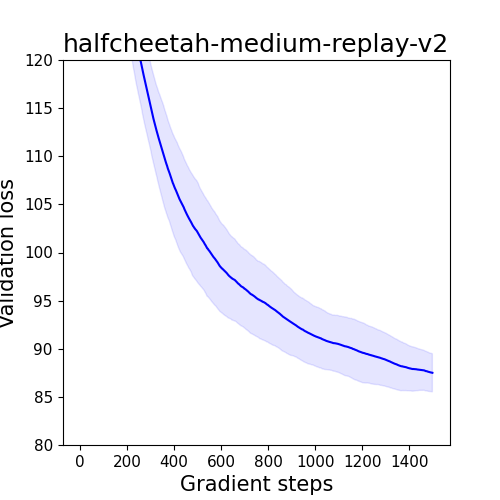}
  \end{subfigure}

  \begin{subfigure}{0.23\paperwidth}
    \includegraphics[width=\linewidth]{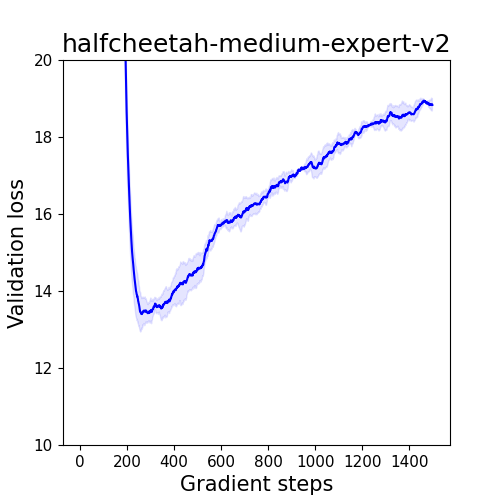}
  \end{subfigure}
  }
  \makebox[\textwidth]{
  
  \begin{subfigure}{0.23\paperwidth}
    \includegraphics[width=\linewidth]{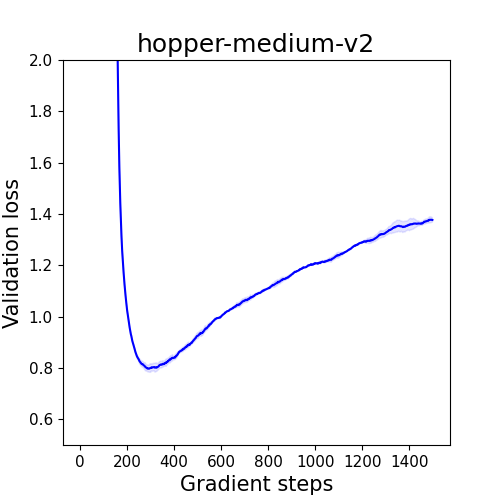}
  \end{subfigure}
  
  \begin{subfigure}{0.23\paperwidth}
    \includegraphics[width=\linewidth]{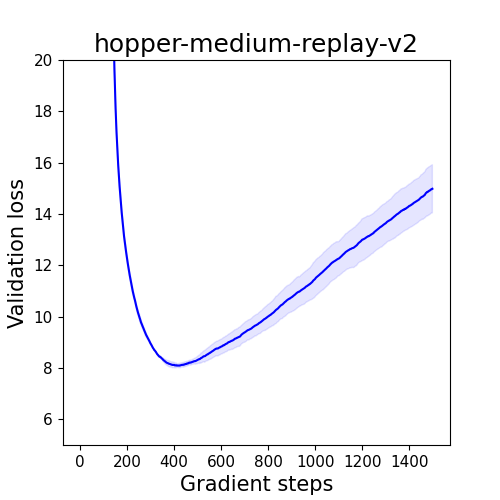}
  \end{subfigure}

  \begin{subfigure}{0.23\paperwidth}
    \includegraphics[width=\linewidth]{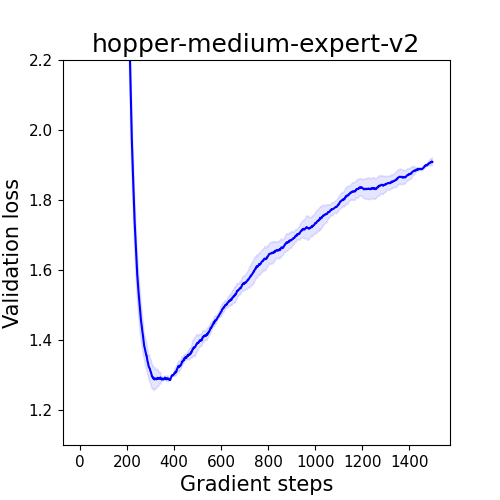}
  \end{subfigure}
  }
  \makebox[\textwidth]{
  
  \begin{subfigure}{0.23\paperwidth}
    \includegraphics[width=\linewidth]{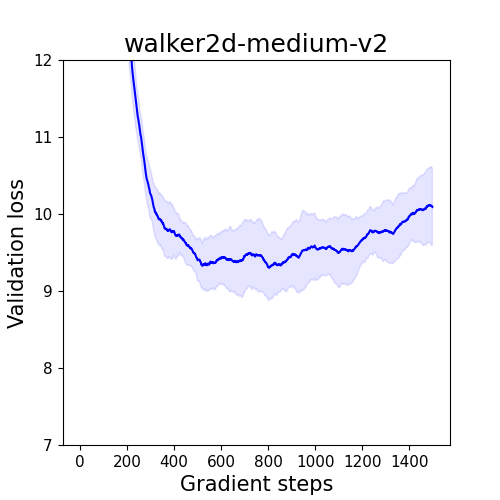}
  \end{subfigure}
  
  \begin{subfigure}{0.23\paperwidth}
    \includegraphics[width=\linewidth]{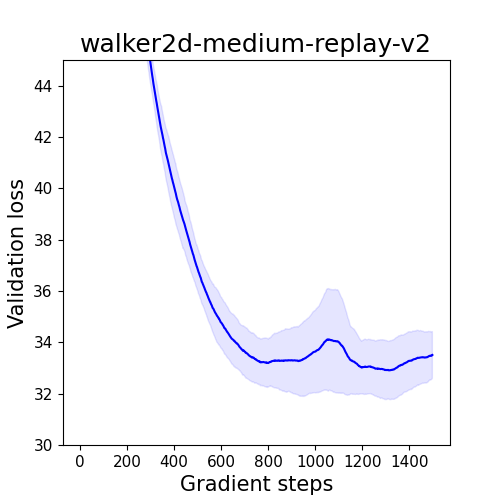}
  \end{subfigure}

  \begin{subfigure}{0.23\paperwidth}
    \includegraphics[width=\linewidth]{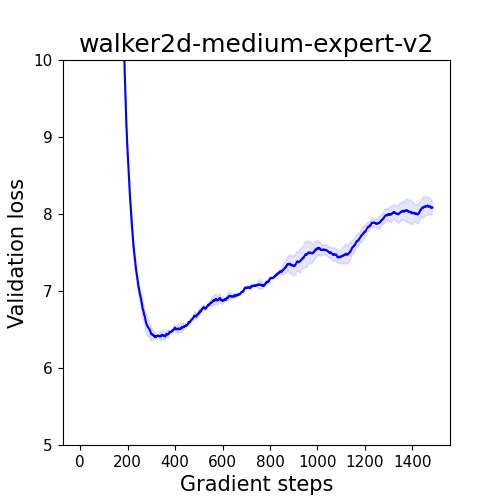}
  \end{subfigure}
  }
  \caption{$\mathcal{L}_{\text{val}}$ on each dataset from the Gym-MuJoCo domain. We can observe that the model overfits to the training set when training for too may gradient steps. Each figure averages the validation loss over 2 folds with the same training seed. The shaded area denotes one standard deviation.}
\label{fig:sarsa-valid}
\end{figure*}

\begin{table}[th]
    \caption{Gradient steps for the SARSA training}
    \centering
    \scalebox{0.9}{
        \begin{tabular}{l|c}
            \toprule
            Dataset & Gradient steps (K) \\
            \midrule
            HalfCheetah-Medium-v2 & 200 \\
            Hopper-Medium-v2  & 400 \\
            Walker2d-Medium-v2  & 700 \\
            \midrule
            HalfCheetah-Medium-Replay-v2 & 1500 \\
            Hopper-Medium-Replay-v2 & 300 \\
            Walker2d-Medium-Replay-v2 & 1100 \\
            \midrule

            HalfCheetah-Medium-Expert-v2  & 400 \\
            Hopper-Medium-Expert-v2  & 400 \\
            Walker2d-Medium-Expert-v2  & 400 \\
            \bottomrule
        \end{tabular}
    }
    \label{tab:sarsa-train-step}
\end{table}

Deciding the number of gradient steps is a non-trivial problem in offline RL. While we use a fixed number of gradient steps for behavior cloning, we design a rigorous procedure to decide the gradient steps for SARSA training, inspired by the success of \emph{k-fold validation}.

In our preliminary experiments, we first train a $\hat{Q}^{\beta}_{\text{all}}$ using all data from each dataset for 2M gradient steps. We model the $\hat{Q}^{\beta}_{\text{all}}(s, a)$ as a 3-layer MLP and train following Appendix \ref{sec:ablate-ensemble-size}. By training in this way, we treat $\hat{Q}^{\beta}_{\text{all}}(s, a)$ as the ground truth $Q^{\beta}(s, a)$ for all $(s, a)$ sampled the dataset $\mathcal{D}$. Next, we randomly split the dataset with the ratio 95/5 to create the trainining set $\mathcal{D}_{\text{train}}$ validation set $\mathcal{D}_{\text{val}}$. We then train a new $\hat{Q}^{\beta}$ the SARSA training on $\mathcal{D}_{\text{train}}$. Therefore, we can define the validation loss as
\begin{equation}
    \mathcal{L}_{\text{val}} = \mathbb{E}_{(s, a)\sim\mathcal{D}_{\text{val}}}||\hat{Q}^{\beta}_{\text{all}}(s, a) - \hat{Q}^{\beta}(s, a)||^2
\end{equation}
Fig. \ref{fig:sarsa-valid} presents the $\mathcal{L}_{\text{val}}$ on each dataset from the Gym-MuJoCo domain. We can clearly observe that $\hat{Q}^{\beta}$ generally overfits the $\mathcal{D}_{\text{train}}$ when training for too many gradient steps. We evaluate over two folds with one seed. Therefore, we can decide the gradient steps of each dataset for the SARSA training according to the results in Fig. \ref{fig:sarsa-valid} as listed in \autoref{tab:sarsa-train-step}.

\newpage
\subsection{Our reproduced IQL training curves}\label{sec:iql-train-curves}
We use the PyTorch~\citep{paszke2019pytorch} Implementation of IQL from RLkit~\citep{rlkit} to obtain its policy $\pi_{\text{IQL}}$ and value function $Q_{\text{IQL}}$. We do not use the official implementation\footnote{\url{https://github.com/ikostrikov/implicit_q_learning}} open-sourced by the authors because our CFPI operators are also based on PyTorch. Fig. \ref{fig:iql-train} presents our reproduced training curves of IQL on the 6 datasets from the AntMaze domain.

We note that the IQL paper\footnote{\href{https://openreview.net/pdf?id=68n2s9ZJWF8}{Link} to the IQL paper. IQL's Table 5 \& 6 are presented in the \href{http://www.overleaf.com}{supplementary material}.} does not report consistent results in their paper for the offline experiment performance on the AntMaze, as is shown in \autoref{tab:iql-report-antmaze}. We suspect that these results are obtained from different sets of random seeds. Therefore, we can conclude that our reproduced results match the results reported in the IQL paper. We believe our reproduction results of IQL are reasonable, even if we do not use the official implementation open-sourced by the authors.

\begin{figure*}[h]
\centering
  \makebox[\textwidth]{
  
  \begin{subfigure}{0.20\paperwidth}
    \includegraphics[width=\linewidth]{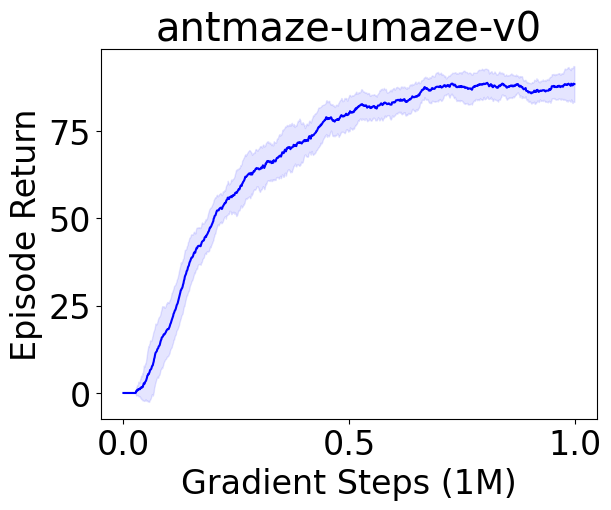}
  \end{subfigure}
  
  \begin{subfigure}{0.20\paperwidth}
    \includegraphics[width=\linewidth]{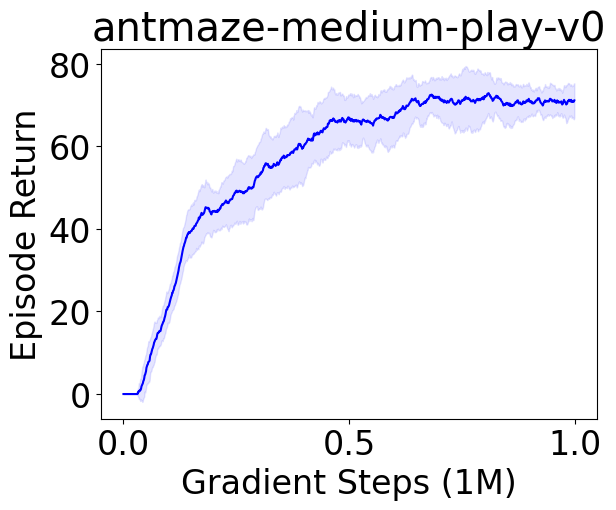}
  \end{subfigure}

  \begin{subfigure}{0.20\paperwidth}
    \includegraphics[width=\linewidth]{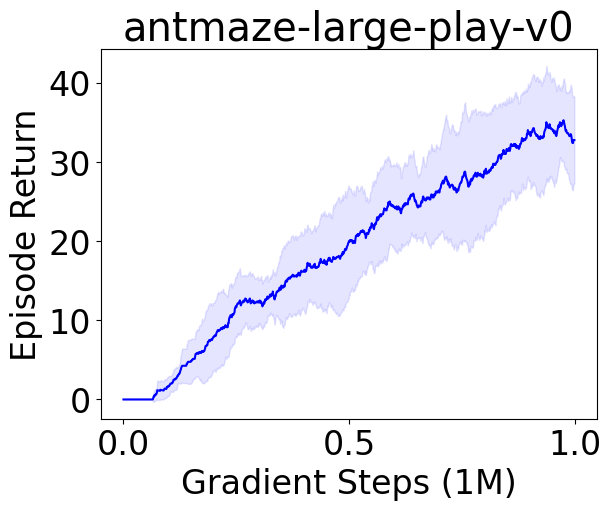}
  \end{subfigure}

  }
  
  \makebox[\textwidth]{
  \begin{subfigure}{0.20\paperwidth}
    \includegraphics[width=\linewidth]{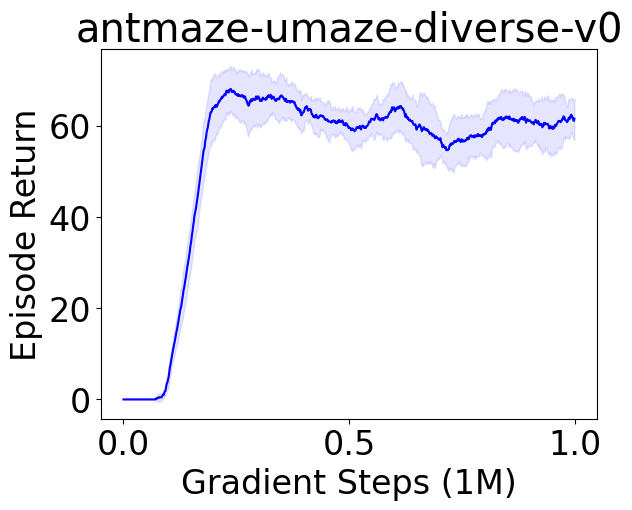}
  \end{subfigure}
    
  \begin{subfigure}{0.20\paperwidth}
    \includegraphics[width=\linewidth]{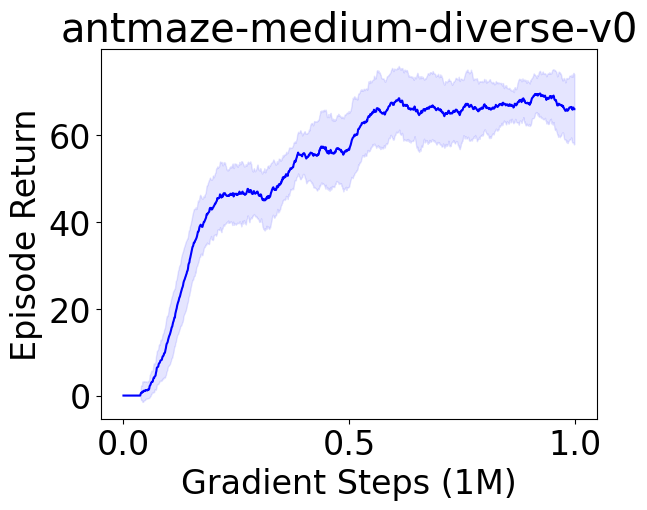}
  \end{subfigure}
  
  \begin{subfigure}{0.20\paperwidth}
    \includegraphics[width=\linewidth]{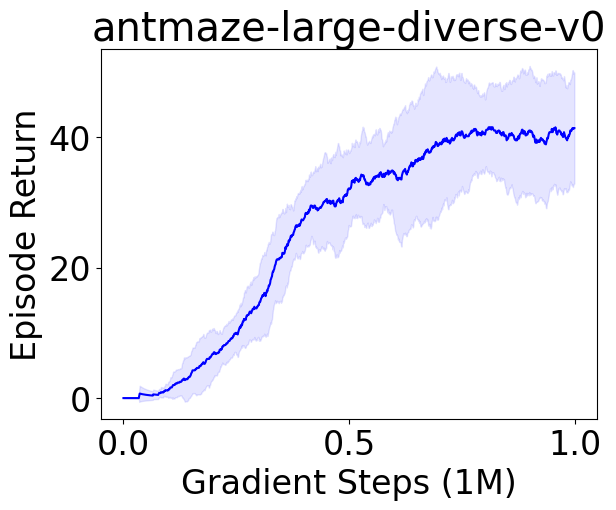}
  \end{subfigure}
  }
  \caption{IQL offline training results on AntMaze. Shaded area denotes one standard deviation.}
\label{fig:iql-train}
\end{figure*}

\begin{table}[th]
    \caption{Offline experiment results on AntMaze reported in different tables from the IQL paper}
    \centering
        \begin{tabular}{l|c|c|c}
            \toprule
            Dataset & Table 1 \& 6 & Table 2 & Table 5\\ 
            \midrule
            antmaze-u-v0 & ${87.5 \pm 2.6}$ & $88.0$ &  $86.7$ \\
            antmaze-u-d-v0 &  ${62.2 \pm 13.8}$ & $67.0$ & $75.0$\\
            \midrule
            antmaze-m-p-v0 & ${71.2 \pm 7.3}$  & $69.0$ & $72.0$\\
            antmaze-m-d-v0 &  ${70.0 \pm 10.9}$ & $71.8$ & $68.3$\\
            \midrule
            antmaze-l-p-v0 & ${39.6 \pm 5.8}$ & $36.8$ & ${25.5}$\\
            antmaze-l-d-v0 & ${47.5 \pm 9.5}$ & $42.2$ &  $42.6$\\
            \midrule

            Total & $378.0 \pm 49.9$ & ${374.8}$ & ${370.1}$ \\
            \bottomrule
        \end{tabular}
    \label{tab:iql-report-antmaze}
\end{table}
\newpage
{
\subsection{Improve the policy learned by CQL}\label{sec:improve-cql}

In this section, we show that our CFPI operators can also improve the policy learned by CQL~\citep{kumar2020conservative} on the MuJoCo Gym Domain. We first obtain the CQL policy $\pi_{\text{CQL}}$ and $Q_{\text{CQL}}$ by training for 1M gradient steps using the official CQL implementation\footnote{https://github.com/aviralkumar2907/CQL}. We obtain an improved policy $\mathcal{I}_{\text{SG}}(\pi_{\text{CQL}}, Q_{\text{CQL}};\tau)$ that slightly outperforms $\pi_{\text{CQL}}$ overall, as shown in~\autoref{tab:improve-cql}. For all 6 tasks, we set $\log\tau = 0.1$.

\begin{table}[th]
    \caption{{Improving the policy learned by IQL with our CFPI operator $\mathcal{I}_{\text{SG}}$}}
    \centering
    {
    \scalebox{0.9}{
        \begin{tabular}{l|c|c}
            \toprule
            Dataset & $\pi_{\text{CQL}}$ (1M) & $\mathcal{I}_{\text{SG}}(\pi_{\text{CQL}}, Q_{\text{CQL}})$ \\
            \midrule
            HalfCheetah-Medium-v2 & $45.5 \pm 0.3$ & $\mathbf{47.1 \pm 1.5}$\\
            Hopper-Medium-v2  & $65.4 \pm 3.5$ & $\mathbf{70.1 \pm 4.9}$ \\
            Walker2d-Medium-v2  & $81.4 \pm 0.6$ & $\mathbf{81.6 \pm 1.1}$\\
            \midrule
            HalfCheetah-Medium-Replay-v2 & $44.6 \pm 0.5$ & $\mathbf{45.9 \pm 1.7}$\\
            Hopper-Medium-Replay-v2 & $95.2 \pm 2.0$ & $94.6 \pm 1.6$\\
            Walker2d-Medium-Replay-v2 & $80.1 \pm 2.6$ & $78.8 \pm 3.2$\\
            \midrule
            Total & $412.2 \pm 9.4$ & $\mathbf{418.2 \pm 13.9}$\\
            \bottomrule
        \end{tabular}
    }
    }
    \label{tab:improve-cql}
\end{table}

}
\newpage
{
\section{Additional related work}\label{sec:add-related-work}

\subsection{Detailed comparison with OAC}\label{sec:compare-oac}
While \autoref{eq:sg-sol} yields the same action as the mean of the single Gaussian exploration policy in OAC's Proposition 1, it is crucial to note that Equation (6) only provides the closed-form solution for our $\mathcal{I}_{\text{SG}}$, which assumes a single Gaussian baseline policy $\pi_b$. As we discussed in the main paper, the single Gaussian assumption of $\pi_b$ often fails to capture the full complexity and multimodality of the underlying action distribution of offline datasets, which motivates the development of our $\mathcal{I}_{\text{MG}}$. In contrast, OAC exclusively addresses a single Gaussian baseline policy.

When generalizing $\pi_b$ from a single Gaussian to a Gaussian Mixture, the optimization problem \eqref{eq:sg_orig_prob} transforms into the optimization problem \eqref{eq:mg_orig_prob}, resulting in a non-convex constraint that breaks the tractability. Consequently, we must address the additional optimization challenges. Our approach involves utilizing inequalities in Lemma \ref{leamma:ineq}, ultimately leading to our CFPI operator $\mathcal{I}_{\text{MG}}$, which can handle both single-mode and multimodal $\pi_b$. Experiment results indicate that the one-step algorithm instantiated by our $\mathcal{I}_{\text{MG}}$ significantly outperforms the single Gaussian version, $\mathcal{I}_{\text{SG}}$, and other baselines, as shown in \autoref{tab:ablate} and \autoref{fig:reliable}.

Moreover, our proposed CFPI operator can be incorporated into an iterative algorithm (shown in \autoref{tab:main-results}) that solves the policy improvement step in each training iteration and updates the critics with the TD target constructed from the actions chosen by the policy improved through our $\mathcal{I}_{\text{MG}}$. In contrast, OAC only utilizes the exploration policy from their Proposition 1 to gather new transitions from the environment and does not use the actions generated by the exploration policy to construct the TD target for the critic training. This further highlights the novelty of our proposed method.

In summary, while there are certainly similarities between our $\mathcal{I}_{\text{SG}}$ and OAC's Proposition 1, we contend that our proposed CFPI operator $\mathcal{I}_{\text{MG}}$ a significant advancement, particularly in its capability to accommodate both single-mode and multimodal behavior policies.

\subsection{Connection with prior works that leveraged the Taylor expansion to RL}\label{sec:compare-prior-taylor}

There has been a history of leveraging the Taylor expansion to construct efficient RL algorithms. \cite{kakade2002approximately} proposed the conservative policy iteration that optimizes a mixture of policies towards its policy objective's lower bound, which is constructed by performing first-order Taylor expansion on the mixture coefficient. Later, SOTA deep RL algorithms TRPO~\citep{TRPO} and PPO~\citep{PPO} extend the results to work with trust region policy constraints and learn a stochastic policy parameterized by a neural network. More recently, ~\cite{tang2020taylor} developed a second-order Taylor expansion approach under similar online RL settings.

At a high level, both our works and previous methods propose to create a surrogate of the original policy objective by leveraging the Taylor expansion approach. However, our motivation to use Taylor expansion is fundamentally different from the previous works~\citep{kakade2002approximately,TRPO,PPO,tang2020taylor}, which leverage the Taylor expansion to construct a lower bound of the policy objective so that optimizing towards the lower bound translates into guaranteed policy improvement. However, these methods do not result in a closed-form solution to the policy and still require iterative policy updates. 

On the other hand, our method leverages the Taylor expansion to construct a linear approximation of the policy objective, enabling the derivation of a closed-form solution to the policy improvement step and thus avoiding performing policy improvement via SGD. We highlight that our closed-form policy update cannot be possible without directly optimizing the parameter of the policy distribution. In particular, the parameter should belong to the action space. \textbf{We note that this is a significant conceptual difference between our method and previous works.}

Specifically, PDL \citep{kakade2002approximately} parameterizes the mixture coefficient of a mixture policy as $\theta$. TRPO~\citep{TRPO} and PPO~\citep{PPO} set $\theta$ as the parameter of a neural network that outputs the parameters of a Gaussian distribution. In contrast, our methods learn deterministic policy $\pi(s) = \text{Dirac}(\theta(s))$ and directly optimize the parameter $\theta(s)$. We aim to learn a greedy $\pi$ by solving $\theta(s) = \argmax_{a} {Q}(s, a)$. However, obtaining a greedy $\pi$ in continuous control is problematic~\citep{silver2014deterministic}. Given the requirement of limited distribution shift in the offline RL, we thus leverage the first-order Taylor expansion to relax the problem into a more tractable form
\begin{equation}\label{eq:approx-greedy-update}
    \theta(s) = \argmax_{a} \bar{Q}(s, a; a_\beta), \text{ s.t. } -\log\pi_{\beta}(a|s)\leq \delta,
\end{equation}
where $\bar{Q}$ is defined in \autoref{eq:linear_obj}. By modeling $\pi_{\beta}$ as a Single Gaussian or Gaussian Mixture, we further transform the problem into a QCLP and thus derive the closed-form solution. 

Finally, we note that both the trust region methods TRPO and PPO and our methods constrain the divergence between the learned policy and behavior policy. However, the behavior policy always remains unchanged in our offline RL settings. As TRPO and PPO are designed for the online RL tasks, the updated policy will be used to collect new data and becomes the new behavior policy in future training iteration.


\end{document}